\pgfplotsset{compat=newest}
\newlength\figurewidth
\newtheorem{theorem}{Theorem}[section]
\newtheorem{lemma}[theorem]{Lemma}
\newtheorem{corollary}[theorem]{Corollary}
\newtheorem{claim}[theorem]{Claim}
\theoremstyle{definition}
\newtheorem{definition}[theorem]{Definition}
\newtheorem{assumption}[theorem]{Assumption}
\theoremstyle{remark}
\newcommand*{\addFileDependency}[1]{
  \typeout{(#1)}
  \@addtofilelist{#1}
  \IfFileExists{#1}{}{\typeout{No file #1.}}
}
\newcommand{\tg}[1]{\widetilde{g}_{#1}}
\definecolor{ourblue}{rgb}{0.368,0.507,0.71}
\definecolor{ourorange}{rgb}{0.881,0.611,0.142}
\definecolor{ourgreen}{rgb}{0.56,0.692,0.195}
\definecolor{ourred}{rgb}{0.923,0.386,0.209}
\DeclareMathOperator*{\argmin}{arg\,min}   
\DeclareMathOperator*{\argmax}{arg\,max}
\newcommand{\T}{\ensuremath{\top}}                
\title{Online Learning under\\ Adversarial Nonlinear Constraints}
\author[1]{\textbf{Pavel Kolev}}
\author[1,2]{\textbf{Georg Martius}}
\author[1]{\textbf{Michael Muehlebach}}
\affil[1]{Max Planck Institute for Intelligent Systems, Tübingen, Germany}
\affil[2]{University of Tübingen, Tübingen, Germany}
\affil[ ]{\texttt{\{pavel.kolev, georg.martius, michael.muehlebach\}@tuebingen.mpg.de}}
\begin{document}

\maketitle

\begin{abstract}
In many applications, learning systems are required to process continuous non-stationary data streams.
We study this problem in an online learning framework and propose an algorithm that can deal with adversarial time-varying and nonlinear constraints.
As we show in our work, the algorithm called Constraint Violation Velocity Projection (CVV-Pro) achieves $\sqrt{T}$ regret and converges to the feasible set at a rate of $1/\sqrt{T}$, despite the fact that the feasible set is slowly time-varying and a priori unknown to the learner. 
CVV-Pro only relies on local sparse linear approximations of the feasible set and therefore avoids optimizing over the entire set at each iteration, which is in sharp contrast to projected gradients or Frank-Wolfe methods. 
We also empirically evaluate our algorithm on two-player games, where the players are subjected to a shared constraint.
\end{abstract}

\section{Introduction}\label{sec:Introduction}
Today's machine learning systems are able to combine computation, data, and algorithms at unprecedented scales, which opens up new and exciting avenues in many domains, such as computer vision, computer graphics, speech and text recognition, and robotics \citep{MitchellMJ}. One of the leading principles that has enabled this progress is the focus on relatively simple pattern recognition and empirical risk minimization approaches, which mostly rely on offline gradient-based optimization and stipulate that the training, validation, and test data are independent and identically distributed.

Somewhat overlooked in these developments is the role of non-stationarity and constraints \citep{JordanRevolution}. 
Indeed, emerging machine learning problems involve decision-making in the real world, which typically includes interactions with physical, social, or biological systems.
These systems are not only time varying and affected by past interactions, but their behavior is often characterized via fundamental constraints. 
Examples include cyber-physical systems where constraints are imposed by the laws of physics, multi-agent systems that are subjected to a shared resource constraint, or a reinforcement learning agent that is subjected to safety and reliability constraints.
In particular, in their seminal work~\citet{Auer02} gave a reduction for the multi-arm bandit setting to the full information online optimization setting, by employing the multiplicative weights framework~\citep{LittlestoneW94}.
This classical reduction was recently extended by~\citet{T4} to the contextual bandit setting with sequential (time-varying) risk constraints.

This motivates our work, which is in line with a recent trend in the machine learning community towards online learning, adaptive decision-making, and online optimization. 
More precisely, we study an online problem with slowly time-varying constraints, governed by the following interaction protocol (see Assumption~\ref{as:ip}).
In each time step $t$, the learner commits a decision $x_t$ and then in addition to a loss value $f_t(x_t)$ with its gradient $\nabla f_t(x_t)$ receives partial information about the current feasible set $\mathcal{C}_t:=\{x\in \mathbb{R}^n~|~g_t(x)\geq0\}$, where the constraint function $g_t(x)$ is defined as $[g_{t,1}(x),\dots,g_{t,m}(x)]$.
The quality of the learner's decision making is measured, for every $T\geq1$, by comparing to the best decision in hindsight $x_{T}^{\star}\in\argmin_{x \in \mathcal{C}_T} \sum_{t=1}^{T} f_t(x)$, that is,
\begin{equation}
\sum_{t=1}^{T} f_t(x_t) - \sum_{t=1}^{T} f_t(x_{T}^{\star}) \quad \text{subject to}\quad g_T(x_T)\geq - \frac{c}{\sqrt{T}}, \label{eq:regret}
\end{equation}
which will be shown to be bounded by $\mathcal{O}(\sqrt{T})$ for our algorithm. 
The functions $f_t$ and $g_t$ are restricted to $f_t\in \mathcal{F}$ and $g_t\in \mathcal{G}$ (as defined in Assumption~\ref{as:fg}) and $c>0$ is an explicit constant. 

It is important to note that our performance objective \eqref{eq:regret} is symmetric in the sense that the constraint $x\in \mathcal{C}_T$ applies to both the learner's decision $x_T$ and the benchmark $x_{T}^{\star}$.
This contrasts prior work by \citet{T2,T3,T4,T1,CaoL19} and \citet{LiuWHF22}, where a different notion of constraint violation $\sum_{t=1}^{T} g_t(x_t)\geq -c_0\sqrt{T}$ is used for the learner, while either a single benchmark $x_{1:T}^{\star}$ satisfies $g_t(x_{1:T}^{\star})\geq 0$ for all $t\in \{1,\dots,T\}$ or multiple benchmarks $\{x_{t}^{\prime}\}_{t=1}^{T}$ satisfy $x_{t}^{\prime}\in\argmin_{x\in\mathcal{C}_t}f_t(x)$.
Unlike \eqref{eq:regret}, different requirements are imposed on the learner and the benchmark(s), which leads to asymmetric regret formulations: $\sum_{t=1}^{T} f_t(x_t) - \sum_{t=1}^{T} f_t(x_{1:T}^{\star})$ and $\sum_{t=1}^{T} f_t(x_t)- \sum_{t=1}^{T} f_t(x_{t}^{\prime})$, respectively.
Furthermore, as our bound $g_T(x_T)\geq - c/\sqrt{T}$ applies for all $T\geq1$, it implies the cumulative constraint violation bound in \citet{T2} up to a constant factor $\sum_{t=1}^{T}g_{t}(x_{t})\geq-c\sum_{t=1}^{T}1/\sqrt{t}\geq-2c\sqrt{T}$.

Even more intriguing is the fact that our algorithm is unaware of the feasible sets a-priori, and obtains, at each iteration, only a local sparse approximation of $\mathcal{C}_t$ based on the first-order information of the \textit{violated} constraints. 
The indices of all violated constraints at $x_t$ will be captured by the index set $I(x_t):=\{i\in \{1,\dots,m\}~|~g_{t,i}(x_t)\leq 0\}$, while $G(x_t):=[\nabla g_{t,i}(x_t)]_{i\in I(x_t)}$ denotes the matrix whose columns store the corresponding gradients. 
In order to guarantee a regret of $\mathcal{O}(\sqrt{T})$ in \eqref{eq:regret} we require the following assumptions.
\begin{assumption}\label{as:fg}
There exist $R,L_\mathcal{F},L_\mathcal{G}>0$: \textbf{1)} $\mathcal{F}$ is a class of convex functions, where every $f\in \mathcal{F}$ satisfies $||\nabla f(x)||\leq L_\mathcal{F}, \forall x\in \mathcal{B}_{4R}$, with $||\cdot||$ the $\ell_2$ norm and $\mathcal{B}_R$ the hypersphere of radius $R$ centered at the origin; \textbf{2)} $\mathcal{G}$ is a class of concave $\beta_\mathcal{G}$-smooth functions, where every $g$ satisfies $||\nabla g(x)||\leq L_\mathcal{G}, \forall x\in \mathcal{B}_{4R}$; \textbf{3)} The feasible set $\mathcal{C}_t$ is non-empty and contained in $\mathcal{B}_R$ for all $t$.
\end{assumption}
We note that these assumptions are standard in online optimization \citep[][Ch.~3]{Hazan}. The learner's task is nontrivial even in the case where the feasible set is time invariant. 
If the feasible set is time varying, additional assumptions are required that restrict the amount that the feasible set is allowed to change. 
These two assumptions, see Part 2 i) and ii) below, are described by the following interaction protocol between the learner and the environment:

\begin{assumption}\label{as:ip}(Interaction protocol) 
At each time step $t\in\{1,\dots,T\}$:\\
\textbf{1)} the learner chooses $x_t$;\\
\textbf{2)} the environment chooses $f_t\in \mathcal{F}$ and $g_t\in\mathcal{G}$ such that i) $||g_{t}(x)-g_{t-1}(x)||_\infty =\mathcal{O}(1/t)$, uniformly for all $x\in\mathcal{B}_{4R}$, with $||\cdot||_\infty$ the $\ell_{\infty}$ norm, and ii) $\mathcal{C}_t$ is contained in $\mathcal{Q}_t:=\cap_{\ell=0}^{t-1}\mathcal{S}_{\ell}$, where $\mathcal{S}_t:=\{x\in \mathbb{R}^n~|~G(x_t)^{\T}(x-x_t)\geq0\}$ is a cone centered at $x_t$ for $t\geq1$ and $\mathcal{S}_{0}=\mathbb{R}^{n}$ (the situation is illustrated in Figure~\ref{fig:protocol}, more details are presented in Appendix~\ref{app:polyhedral_intersection});\\
\textbf{3)} the environment reveals to the learner partial information on cost $f_t(x_t)$, $\nabla f_t(x_t)$ and all violated constraints $g_{t,i}(x_t)$, $\nabla g_{t,i}(x_t)$ for $i\in I(x_t)$.
\end{assumption}

\begin{wrapfigure}[14]{r}{0.25\textwidth}
\vspace{-1em}
  \includegraphics[width=\linewidth]{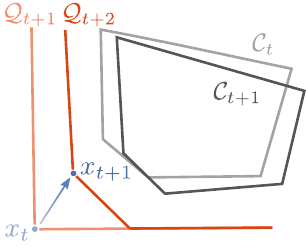}\vspace{-0.5em}
  \caption{\small{At each time step, the feasible set $\mathcal{C}_{t}$ contained in a polyhedral intersection $\mathcal{Q}_t$} changes slightly and is only partially revealed.}
  \label{fig:protocol}
\end{wrapfigure}

The requirements i) $||g_t-g_{t-1}||_\infty=\mathcal{O}(1/t)$; and ii) $\mathcal{C}_t \subset \mathcal{Q}_t$ restrict the feasible sets that the environment can choose.
We note that despite the fact that $||g_t-g_{t-1}||_\infty=\mathcal{O}(1/t)$, $||g_1-g_t||_\infty=\Theta(\ln(t))$, which means that the sequence of functions $g_t$ that defines $\mathcal{C}_t$ does \emph{not} converge in general. 
As a result, $C_t$ may evolve in such a way that the initial iterates $x_1, x_2, \dots, x_{t_0}$ achieve a large cost compared to $\min_{x\in \mathcal{C}_T} \sum_{t=1}^{T} f_t(x)$, as these are constrained by the sets $\mathcal{C}_1, \mathcal{C}_2, \dots, \mathcal{C}_{t_0}$, which may be far away from $\mathcal{C}_T$. 
The second requirement ii) $\mathcal{C}_t\subset \mathcal{Q}_t$ avoids this situation and is therefore key for obtaining an $\mathcal{O}(\sqrt{T})$ regret.

Our setup differs from traditional online convex optimization~\citep{Zinkevich03} in the following two important ways: 
\clearpage

i)~The environment chooses not only the functions $f_t$ but also the nonlinear constraint functions $g_t$, ii)~even if $g_t$ is time-invariant, i.e., $g_t=g$ for all $t$ the learner has only access to local information about the feasible set.

That is, the information about the feasible set is only revealed piece-by-piece and needs to be acquired by the agent through repeated queries of a constraint violation oracle.

We propose an online algorithm that despite the lack of information about the feasible set, achieves $\mathcal{O}(\sqrt{T})$ regret, and will derive explicit non-asymptotic bounds for the regret and the convergence to $\mathcal{C}_T$. We thus conclude that our algorithm matches the performance of traditional online projected gradients or Frank-Wolfe schemes, while requiring substantially less information about the feasible set and allowing it to be time-varying. 
Perhaps equally important is the fact that instead of performing projections onto the full feasible set at each iteration, our algorithm only optimizes over a local sparse linear approximation. 
If constraints are nonlinear, which includes norm-constraints or constraints on the eigenvalues of a matrix, optimizing over the full feasible set at each iteration can be computationally challenging.

\subsection{Related Work}

Online learning has its roots in online or recursive implementations of algorithms, where due to the piece-by-piece availability of data, algorithms are often analyzed in a non i.i.d.\ setting. 
A central algorithm is the multiplicative weights scheme \citep{Freud}, where a decider repeatedly chooses between a finite or countable number of options with the aim of minimizing regret. 
This online learning model not only offers a unifying framework for many classical algorithms \citep{Blum}, but represents a starting point for online convex optimization \citet{Hazan}, and adversarial bandits \citep{Tor}. Our approach extends this line of work by allowing the environment to not only choose the objective functions $f_t$, but also the constraints $g_{t}$.
Due to the fact that our learner only obtains local information about the feasible set, our work is somewhat related to \citet{Krause19,Zhou,Garber,Mhammedi}, where the aim is to reduce the computational effort of performing online projected gradient steps or Frank-Wolfe updates. 
More precisely, \citet{Krause19} propose an algorithm that directly approximates projections, while requiring multiple queries of the constraint functions and their gradients. 
A slightly different constraint violation oracle is assumed in \citet{Garber}, where the learner can query separating hyperplanes between a given infeasible point and the feasible set.
Algorithmically, both \citet{Garber} and \citet{Krause19} depart from online gradient descent, where the latter computes projections via an approximate Frank-Wolf-type scheme. 
An alternative is provided by \citet{Mhammedi} and \citet{Zhou}, where optimizations over the entire feasible set are simplified by querying only a set membership oracle based on the Minkowski functional. 
While our approach also avoids projections or optimizations over the entire feasible set, we introduce a different constraint violation oracle that returns a local sparse linear approximation of the feasible set. 
We call the constraint violation oracle only once every iteration and do not require a two-step procedure that involves multiple oracle calls. 
In addition, we also allow for adversarial time-varying constraints.

In addition, there has been important recent work that developed online optimization algorithms with constraints. In contrast to the primal formulation of our algorithm, these works are based on primal-dual formulations, where the algorithm is required to satisfy constraints on average, so called long-term constraints. 
The research can be divided into two lines of work \citet{W1,W2,W3} and \citet{S1,S2} that use a set of weaker and stricter definitions for constraint violations and investigate time-invariant constraints, which contrasts our formulation that includes time-varying constraints. 
A third line of work by \citet{MannorTY09,T1,T2,T3,T4,CaoL19,LiuWHF22} focuses on time-varying constraints, where, however, the following weaker notion of constraint violation is used: $\sum_{t=1}^{T} g_t(x_t) \geq -c \sqrt{T}$, where $t$ refers to time and $x_t$ to the learner's decision. This metric allows constraint violations for many iterations, as long as these are compensated by strictly feasible constraints (in the worst case even with a single feasible constraint with a large margin). 
In contrast, our algorithm satisfies $g_t(x_t)\geq -c/\sqrt{t}$ for all iterations $t\in \{1,\dots,T\}$, where $c$ is an explicit constant independent of the dimension of the decision variable and the number of constraints.
This means that we can explicitly bound the constraint violation at every iteration, whereas infeasible and strictly feasible iterates cannot compensate each other.

An important distinction to \citet{T2} is given by our performance metric (see also the discussion in \citet{T2} and \citet{LiuWHF22}).
On the one hand, the work by \citet{T1,CaoL19,LiuWHF22} use $\sum_{t=1}^{T} f_t(x_t)- \sum_{t=1}^{T} f_t(x_{t}^{\prime})$ as a performance measure, where the iterates $x_t$ are required to satisfy $\sum_{t=1}^{T} g_t(x_t) \geq -c \sqrt{T}$ and the optimal solutions $x_{t}^{\prime}$ satisfy $x_{t}^{\prime}\in\argmin_{x\in\mathcal{C}_t}f_t(x)$.
On the other hand, the work by \citet{T2,T3,T4} use $\sum_{t=1}^{T} f_t(x_t)- \sum_{t=1}^{T} f_t(x_{1:T}^{\star})$ as a performance measure, where the iterates $x_t$ are required to satisfy $\sum_{t=1}^{T} g_t(x_t) \geq -c \sqrt{T}$ and the optimal solution $x_{1:T}^{\star}$ satisfies $g_t(x_{1:T}^{\star})\geq 0$ \emph{for all} $t\in\{1,\dots,T\}$.
This leads to a major asymmetry in the way regret is measured: while the iterates of the online algorithm only need to satisfy a cumulative measure of constraint violation, the benchmark $x_{1:T}^{\star}$, which represents the best fixed decision in hindsight, is required to satisfy \emph{all} constraints $g_t(x_{1:T}^{\star})\geq 0$ for $t=\{1,\dots,T\}$. 
In contrast, the performance metric introduced in \eqref{eq:regret} is symmetric and imposes the same constraint $x\in\mathcal{C}_{T}$ (approximately) on the learner's decision $x_T$ and (exactly) on the benchmark $x_{T}^*$.
These features make our algorithm a valuable addition to the algorithmic toolkit of online constrained optimization.

\citet{CastiglioniCMR022} studied the following asymmetric setting with adversarial environment, benchmark $x_{T}^{\star}$ belonging to $\arg\min_{x\in\mathcal{X}}\sum_{t=1}^{T}f_{t}(x)$ subject to $\frac{1}{T}\sum_{t=1}^{T}g_{t}(x)\geq0$, online iterates $x_t$ satisfying $\frac{1}{T}\sum_{t=1}^{T}g_{t}(x_{t})\geq-\mathcal{O}(1/\sqrt{T})$, and regret $\sum_{t=1}^{T}f_{t}(x_t)-\sum_{t=1}^{T}f_{t}(x_{T}^{\star})\leq \mathcal{O}(\sqrt{T})$.
Their benchmark and regret formulation can be obtained as a special case of our formulation with time-averaged constraints, that is, when our $g_T(x)$ is chosen as $\frac{1}{T}\sum_{t=1}^T g_t(x)$.
In contrast, our iterate $x_T$ satisfies $\frac{1}{T}\sum_{t=1}^T g_t(x_T)\geq-\mathcal{O}(1/\sqrt{T})$, a constraint that is asymptotically the same as the one satisfied by the benchmark $x_{T}^{\star}$.
We further note that they introduced a parameter $\rho=\sup_{x\in\mathcal{X}}\min_{t\in[T]}\min_{i\in[m]}g_{t,i}(x)$, which is required to be positive and \emph{known} to the algorithm for achieving $\mathcal{O}(\sqrt{T})$ regret. 
Notably $\rho>0$ implies that the intersection of all feasible sets is non-empty, which is a strong assumption (as is knowledge about the parameter $\rho$).
In our formulation with \textit{time-averaged} constraints, Assumption~\ref{as:ip} reduces to the requirement that the feasible set $\mathcal{C}_t$ belongs to a polyhedral intersection $\mathcal{Q}_t$, which does not require a non-emtpy intersection of all $\mathcal{C}_t$ (has a geometrical interpretation and the assumption $||g_{t}-g_{t-1}||_\infty = \mathcal{O}(1/t)$ is automatically satisfied). 
Thus, there are situations, where the regret bound from \citet{CastiglioniCMR022} becomes vacuous (for $\rho=0$), while our method still provably achieves $\mathcal{O}(\sqrt{T})$ regret.
Additional differences are that \citet{CastiglioniCMR022} considers primal-dual methods and assumes that all constraints are revealed after every iteration, whereas our method is primal-only and has only partial information about all violated constraints.
The latter point reduces computation and simplifies projections onto the velocity polyhedron, but requires a nontrivial inductive argument for establishing $\mathcal{O}(\sqrt{T})$ regret.

Other relevant related studies have investigated online learning problems with supply/budget constraints.
In these settings, the decision maker must choose a sequence of actions that maximizes their expected reward while ensuring that a set of resource constraints are not violated.
The process terminates either after a pre-specified time horizon has been reached or when the total consumption of some resource exceeds its budget.
\citet{BadanidiyuruKS18} introduced the bandits with knapsacks framework, which considers bandit feedback, stochastic objective and constraint functions. 
They proposed an optimal algorithm for this problem, which was later improved by \citet{AgrawalD14,AgrawalD19} and \citet{ImmorlicaSSS22}.
\citet{ImmorlicaSSS22} introduced the adversarial bandits with knapsacks setting and showed that an appropriate benchmark for this setting is the best fixed distribution over arms. 
Since no-regret is no longer possible under this benchmark, they provide no-$\alpha$-regret guarantees for their algorithm.

An important special case of our online learning model arises when the environment is represented by an adversarial player that competes with the learner. This corresponds to a repeated generalized Nash game due to the constraint that couples the decisions of the learner and its adversary. 
If the adversary plays best response, the resulting equilibria are characterized by quasi-variational inequalities \citep{GeneralizedNash} and there has been important recent work, for example by \citet{FirstOrderAlgGNE, Kim, FacchineiPenalty} that proposes different gradient and penalty methods for solving these inequalities.
Our approach adopts a different perspective, rooted in online learning, which allows us to derive non-asymptotic convergence results for a first-order gradient-based algorithm that can be implemented in a straightforward manner.
Our approach is also inspired by the recent work of \citet{MJ21}, who propose a similar algorithm for the offline setting.

\subsection{Main Contributions}\label{subsec:MainContributions}

We give an online optimization scheme under \textit{unknown} non-linear constraints that achieves an optimal $\mathcal{O}(\sqrt{T})$ regret and converges to the latest feasible set at a rate of $\mathcal{O}(1/\sqrt{T})$. 
There are two variants of our problem formulation: The first deals with situations where constraints are unknown but fixed, the second allows constraints to be chosen in a time-varying and adversarial manner.

Our algorithm, named Constraint Violation Velocity Projection (CVV-Pro), has the following features:

1. It assumes access to a new type of oracle, which on input $x_t$, returns partial information on all currently violated constraints.
Namely, the value $g_{t,i}(x_t)$ and the gradient $\nabla g_{t,i}(x_t)$ for \textit{all} $i\in I(x_t)$.

2. It projects an adversarially generated negative cost gradient $-\nabla f_t(x_t)$ onto a velocity polyhedron $V_{\alpha}(x_t):=\big\{ v\in\mathbb{R}^{n} ~|~[\nabla g_{i}(x_{t})]^{\T}v \geq-\alpha g_{i}(x_{t}),\ \forall i\in I(x_{t})\big\}$.
Due to the linear and local structure of $V_{\alpha}(x_t)$, the projection can be computed efficiently.

3. In contrast to standard online methods that project in each round a candidate decision onto the feasible set, our method trades off feasibility for efficiency.
In particular, it produces a sequence of decisions that converges at a rate of $\mathcal{O}(1/\sqrt{T})$ to the latest feasible set.

4. Our method handles time-varying adversarial constraints $g_t$, provided a decreasing rate of change $||g_{t+1}-g_t||_\infty\leq\mathcal{O}(1/t)$ and that each feasible set $\mathcal{C}_t$ belongs to $\mathcal{Q}_t$ (see Assumption~\ref{as:ip}).
As we show in Section~\ref{subsec:Problem-Formulation}, an important special case where the assumption of decreasing rate of change is satisfied is given by $g_t=\frac{1}{t}\sum_{j=1}^{t}\tilde{g}_j$, i.e., when $g_t$ represents an average of constraints $\tilde{g}_t$ over time.

\subsection{Outline}
Section~\ref{sec:OnlineLearning} describes our algorithm and considers the situation where $g_t$ is time invariant. This sets the stage for our main results in Section~\ref{sec:Time-Varing-Constraints} that provide regret guarantees for our new online convex optimization setting with non-stationary, nonlinear, and unknown constraints. An important and interesting application of our algorithm are generalized Nash equilibrium problems, as will be illustrated with a numerical experiment in Section~\ref{sec:Simulation example}. The experiment will also highlight that the numerical results agree with the theoretical predictions.

\section{Online learning under unknown, time-invariant, and nonlinear constraints}\label{sec:OnlineLearning}

\subsection{Online Gradient Descent}

Online gradient descent~\citep[][Ch.~3.1]{Hazan} is a classical and perhaps the simplest algorithm that achieves optimal $\mathcal{O}(\sqrt{T})$ regret for the setting of a compact, convex, time-invariant, and a priori known feasible set.
It consists of the following two operations:
i)~$y_{t+1}=x_{t}-\eta_{t}\nabla f_{t}(x_t)$ takes a step from the previous point in the direction of the previous cost gradient; and 
ii)~$x_{t+1}=\mathrm{Proj}_{\mathcal{C}}(y_{t+1})$ projects $y_{t+1}$ back to the feasible set $\mathcal{C}$, as $y_{t+1}$ may be infeasible.

In this section, we generalize the online gradient descent algorithm to the setting where the feasible set is unknown a priori and has to be learned through repeated queries of a constraint violation oracle that only reveals local information.

\subsection{Overview}

In Section~\ref{subsec:CVV-Pro}, we present the pseudo code of our algorithm.
In Section~\ref{subsec:structural_result}, we give a structural result showing that Algorithm~\ref{alg:cvv-pro} under Assumption~\ref{as:fg} and a bounded iterate assumption guarantees an optimal $\mathcal{O}(\sqrt{T})$ regret and converges to the feasible set at a rate of $\mathcal{O}(1/\sqrt{T})$.
In Appendix~\ref{sec:algorithmic_result}, we show that the bounded iterate assumption can be enforced algorithmically, by introducing an additional hypersphere constraint that attracts the sequence $\{x_{t}\}_{t\geq1}$ to a fixed compact set.

\subsection{Constraint Violation Velocity Projection (CVV-Pro)}\label{subsec:CVV-Pro}

We present below the pseudocode of Algorithm~\ref{alg:cvv-pro} 
for a fixed horizon length $T$,
as it is standard in the literature~\citep{Hazan}. However, we note that
our algorithm is oblivious to the horizon length $T$, i.e., it can run
for any number of iterations without knowing $T$ a priori.
\begin{figure*}
	\centering
	\includegraphics[width=0.7\linewidth]{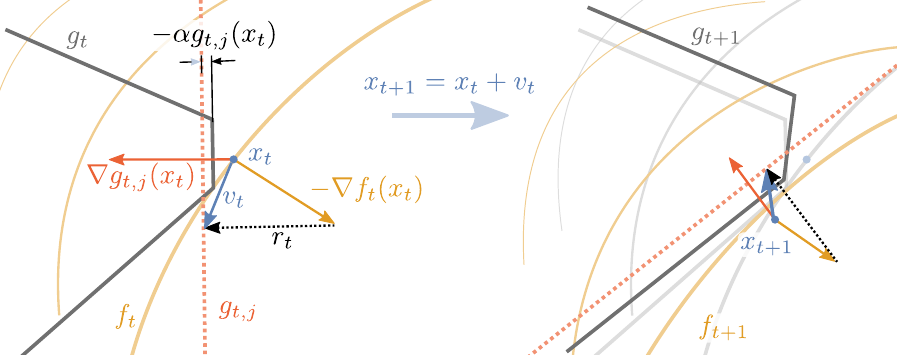}
	\caption{\small Illustration of the proposed (CVV-Pro) algorithm.
		\textbf{Left:} the constraint $g_{t,j}$ is violated by the {\color{ourblue} current solution $x_t$}.
		The {\color{ourorange}cost gradient $-\nabla f_t(x_t)$} is projected onto the {\color{ourred} hyperplane (moved by $-\alpha g_{t,j}(x_t)$) with normal vector $\nabla g_{t,j}(x_t)$}.
        This yields $r_t$ (see Section~\ref{subsec:thm:main}), and results in the {\color{ourblue}velocity projection $v_t$} ($\eta=1$ for clarity). 
		\textbf{Right:} next iteration with updated {\color{ourblue}$x$}, where both {\color{ourorange}$f$} and {\color{black!70!white}$g$} are changed. 
        Then the procedure is applied recursively.
	}
\end{figure*}

{\small
\begin{algorithm}[H]
	\caption{Constraint Violation Velocity Projection (CVV-Pro)}
	\label{alg:cvv-pro}
	\begin{algorithmic}[1]
            \State {\bfseries Requirements:} See Assumption~\ref{as:fg}
            
            \State {\bfseries Input:} $\alpha>0$
		
		\State {\bfseries Initialization:} Step sizes $\big\{\eta_{t}=\frac{1}{\alpha\sqrt{t}}\big\}_{t\geq1}$
		
		\For {$t=1$ {\bfseries to} $T$}
		
		\State \textbf{Play} $x_{t}$ \textbf{and observe}:
		
		\State cost information $f_{t}(x_{t}), \nabla f_{t}(x_{t})$ and 
		  constraint information $\big\{\big(g_{i}(x_{t}),\nabla g_{i}(x_{t})\big)\}_{i\in I(x_{t})}$
		
		\State \textbf{Construct the velocity polyhedron as follows:}
		\[
		V_{\alpha}(x_{t}):=\big\{ v\in\mathbb{R}^{n} ~|~[\nabla g_{i}(x_{t})]^{\T}v \geq-\alpha g_{i}(x_{t}),\ \forall i\in I(x_{t})\big\},
		\]
		
		\State \textbf{Solve the velocity projection problem:} $v_{t}=\argmin_{v\in V_{\alpha}(x_{t})}\frac{1}{2}\lVert v+\nabla f_{t}(x_{t})\rVert^{2}$ \label{eq:velocity_projection}
		
		\State \textbf{Update:} $x_{t+1}=x_{t}+\eta_{t}v_{t}$
            \EndFor
	\end{algorithmic}
\end{algorithm}}

Let $x\in\mathcal{C}$ be an arbitrary decision. 
We show in Claim~\ref{clm:concave} that $\alpha(x-x_t)\in V_{\alpha}(x_t)$.
Hence, the velocity polyhedron $V_{\alpha}(x_t)$ is always non-empty and well defined.

\subsection{Structural Result}\label{subsec:structural_result}

Here, we show that Algorithm~\ref{alg:cvv-pro} under Assumption~\ref{as:fg} and a bounded iterate assumption, guarantees an optimal $\mathcal{O}(\sqrt{T})$ regret and converges to the feasible set at a rate $\mathcal{O}(1/\sqrt{T})$.
The bounded iterate assumption will be removed subsequently, which however, will require a more complex analysis.

\begin{theorem}[Structural]\label{thm:main}
Suppose Assumption~\ref{as:fg} holds and in addition $x_{t}\in\mathcal{B}_R$ for all $t\in\{1,\dots,T\}$.
Then, on input $\alpha = L_{\mathcal{F}}/R$, Algorithm \ref{alg:cvv-pro} with step sizes
$\eta_{t}=\frac{1}{\alpha\sqrt{t}}$ guarantees the following for all $T\geq1$: 
 \vspace{-5pt}
 \begin{itemize}[noitemsep, topsep=0pt, leftmargin=1pc]
 \item[]\textbf{(regret)}\quad\quad $\sum_{t=1}^{T}f_{t}(x_{t})-\min_{x\in\mathcal{C}}\sum_{t=1}^{T}f_{t}(x)\leq 18L_{\mathcal{F}}R\sqrt{T}$;
 \item[]\textbf{(feasibility)}\, $g_{i}(x_{t})\geq-8\left[\frac{L_{\mathcal{G}}}{R}+2\beta_{\mathcal{G}}\right]\frac{R^{2}}{\sqrt{t}}$,\quad for all $t\in\{1,\dots,T\}$ and $i\in\{1,\dots,m\}$.
 \end{itemize}
\end{theorem}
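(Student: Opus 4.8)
The plan is to prove the two bullet points separately, both leaning on Claim~\ref{clm:concave}, which tells us that $\alpha(x-x_t)\in V_\alpha(x_t)$ for every $x\in\mathcal{C}$. This is the bridge that lets a fixed feasible comparator interact with the (purely local) velocity polyhedron, and it also guarantees $V_\alpha(x_t)\neq\emptyset$, so $v_t$ is well defined as a projection onto a nonempty closed polyhedron.

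\emph{Regret.} I would run an online-gradient-descent potential argument, adapted to the fact that we project the \emph{gradient}, not the iterate. Fix a minimizer $x^\star\in\mathcal{C}\subseteq\mathcal{B}_R$ of $\sum_t f_t$, write $r_t:=v_t+\nabla f_t(x_t)$ for the correction introduced by the projection, and record the variational inequality $\langle r_t,v-v_t\rangle\ge 0$ for all $v\in V_\alpha(x_t)$ (since $v_t$ is the nearest point of $V_\alpha(x_t)$ to $-\nabla f_t(x_t)$). Testing with $v=\alpha(x^\star-x_t)\in V_\alpha(x_t)$ gives $\langle r_t,x_t-x^\star\rangle\le -\tfrac1\alpha\langle r_t,v_t\rangle$, and convexity of $f_t$ yields $f_t(x_t)-f_t(x^\star)\le\langle\nabla f_t(x_t),x_t-x^\star\rangle=\langle r_t,x_t-x^\star\rangle-\langle v_t,x_t-x^\star\rangle$. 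The crucial step is $\langle r_t,v_t\rangle\ge 0$: for every $i\in I(x_t)$ we have $g_i(x_t)\le 0$, so the defining inequality of $V_\alpha(x_t)$ forces $\langle\nabla g_i(x_t),v_t\rangle\ge-\alpha g_i(x_t)\ge 0$; hence $2v_t\in V_\alpha(x_t)$, and plugging $v=2v_t$ into the variational inequality gives $\langle r_t,v_t\rangle\ge 0$. Consequently the $r_t$-term disappears and the regret is at most $-\sum_t\langle v_t,x_t-x^\star\rangle$; expanding $x_{t+1}=x_t+\eta_t v_t$ turns this into $\sum_t\tfrac1{2\eta_t}\big(\|x_t-x^\star\|^2-\|x_{t+1}-x^\star\|^2\big)+\sum_t\tfrac{\eta_t}{2}\|v_t\|^2$. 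I then bound the first sum by telescoping, using $\|x_t-x^\star\|\le 2R$ (bounded-iterate assumption) and monotonicity of $1/\eta_t=\alpha\sqrt t$, and the second using $\|v_t\|\le 4L_\mathcal{F}$ (because $v_t$ is nearest to $-\nabla f_t(x_t)$ in $V_\alpha(x_t)$, $\|\alpha(x^\star-x_t)\|\le 2L_\mathcal{F}$, and $\|\nabla f_t(x_t)\|\le L_\mathcal{F}$) together with $\sum_{t\le T}1/\sqrt t\le 2\sqrt T$. The choice $\alpha=L_\mathcal{F}/R$ balances the two contributions, $2L_\mathcal{F}R\sqrt T$ and $16L_\mathcal{F}R\sqrt T$, giving $18L_\mathcal{F}R\sqrt T$.

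\emph{Feasibility.} I would show by induction on $t$ that $g_i(x_t)\ge -c/\sqrt t$ for all $i$, where $c=8\big[L_\mathcal{G}/R+2\beta_\mathcal{G}\big]R^2$. The base case follows from $L_\mathcal{G}$-Lipschitzness of $g_i$ on $\mathcal{B}_{4R}$ and nonemptiness of $\mathcal{C}$ (so $g_i(x_1)\ge -2L_\mathcal{G}R\ge -c$). For the step, the concave $\beta_\mathcal{G}$-smooth descent inequality gives $g_i(x_{t+1})\ge g_i(x_t)+\eta_t\langle\nabla g_i(x_t),v_t\rangle-\tfrac{\beta_\mathcal{G}\eta_t^2}{2}\|v_t\|^2$, with $\tfrac{\beta_\mathcal{G}\eta_t^2}{2}\|v_t\|^2\le 8\beta_\mathcal{G}R^2/t$. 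If $i\in I(x_t)$ then $\eta_t\langle\nabla g_i(x_t),v_t\rangle\ge -\alpha\eta_t g_i(x_t)=-g_i(x_t)/\sqrt t$, so the violation contracts by the factor $1-1/\sqrt t\in[0,1]$, yielding $g_i(x_{t+1})\ge -c/\sqrt t+(c-8\beta_\mathcal{G}R^2)/t$; if $i\notin I(x_t)$ then $g_i(x_t)>0$ and $\eta_t\langle\nabla g_i(x_t),v_t\rangle\ge -4L_\mathcal{G}R/\sqrt t$, yielding $g_i(x_{t+1})\ge -(4L_\mathcal{G}R+8\beta_\mathcal{G}R^2)/\sqrt t=-(c/2)/\sqrt t$. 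A short computation then confirms the envelope $-c/\sqrt{t+1}$ is preserved in both cases — using $1/\sqrt t-1/\sqrt{t+1}\le 1/(2t^{3/2})$ and $c\ge 16\beta_\mathcal{G}R^2$ in the first, and $\sqrt{t+1}\le 2\sqrt t$ in the second — which is precisely what pins down the constants $8$ and $16$ in $c$. All gradient and descent-lemma evaluations are legitimate since $x_t,x^\star\in\mathcal{B}_R$ and segments between consecutive iterates stay in the convex set $\mathcal{B}_R\subseteq\mathcal{B}_{4R}$.

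I expect the one genuinely delicate point to be the inequality $\langle v_t+\nabla f_t(x_t),v_t\rangle\ge 0$. A naive Cauchy--Schwarz bound on $\langle\nabla f_t(x_t),v_t\rangle$ would leave a constant of order $L_\mathcal{F}^2/\alpha$ per round and hence $\Theta(T)$ regret; it is essential to exploit that the correction $r_t$ lies in the cone generated by the \emph{violated} constraint normals and that those normals have nonnegative inner product with every element of $V_\alpha(x_t)$ exactly because $g_i(x_t)\le 0$ there. Everything else is careful bookkeeping — keeping the step sizes, the bound $\|v_t\|\le 4L_\mathcal{F}$, and the envelope constant $c$ mutually consistent so the stated constants $18$ and $8\big[L_\mathcal{G}/R+2\beta_\mathcal{G}\big]$ come out exactly.
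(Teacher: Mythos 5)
Your proposal is correct and follows essentially the same route as the paper: the same OGD-style potential argument with the velocity bound $\lVert v_t\rVert\le 4L_{\mathcal F}$ (yielding the $2L_{\mathcal F}R\sqrt T+16L_{\mathcal F}R\sqrt T$ split) and the same two-case induction for feasibility with envelope constant $c=8\bigl[\tfrac{L_{\mathcal G}}{R}+2\beta_{\mathcal G}\bigr]R^{2}$, matching Lemma~\ref{lem:OGD}, Claim~\ref{clm:const_violation} and Lemma~\ref{lem:feasibility_convergence}. The only (valid) variation is how the residual is eliminated: you test the projection's variational inequality at $\alpha(x^{\star}-x_t)$ and at $2v_t$ (the latter giving $r_t^{\T}v_t\ge 0$), whereas the paper's Lemma~\ref{lem:main} tests it at $v_t+(x^{\star}-x_t)$ to obtain $r_t^{\T}(x_t-x^{\star})\le 0$ directly; both rest on Claim~\ref{clm:concave} and reach the same conclusion.
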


\subsection{Proof Sketch of Theorem \ref{thm:main}}\label{subsec:thm:main}

Our analysis establishes, in two steps, an important geometric property that connects the convex costs and the concave constraints via the velocity polyhedron $V_{\alpha}(x_t)$.
More precisely, we show that the inner product $-r_t^{\T}(x_{T}^{\star}-x_t)\leq0$ for all $t\in\{1,\dots,T\}$.
This property will be crucial for deriving the regret and feasibility bounds.

In the first step, we leverage the constraints' concavity and show that the vector $\alpha(x_{T}^{\star}-x_{t})$ belongs to the velocity polyhedron $V_{\alpha}(x_{t})$.
\begin{claim}\label{clm:concave} 
Suppose $g_{i}$ is concave for every $i\in\{1,\dots,m\}$. Then $\alpha(x-x_{t})\in V_{\alpha}(x_{t})$ for all $x\in\mathcal{C}$. In addition, $x_{t}\not\in\mathrm{int}(\mathcal{C})$ implies $[\nabla g_{i}(x_{t})]^{\T}[x-x_{t}]\geq 0$ for all $x\in\mathcal{C}$.
\end{claim}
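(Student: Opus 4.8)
The plan is to obtain both statements as an immediate consequence of the first-order (supergradient) characterization of concavity, applied at the point $x_t$ against an arbitrary feasible comparison point $x\in\mathcal{C}$; no further machinery is needed, and the argument is carried out one constraint index at a time so that the a priori unknown, and possibly time-varying, active set $I(x_t)$ never has to be treated as a whole.

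\emph{First statement.} Fix an arbitrary index $i\in I(x_t)$, so that $g_i(x_t)\le 0$ by definition of $I(x_t)$. Since $g_i$ is concave and differentiable, the supergradient inequality gives
\[
g_i(x)\ \le\ g_i(x_t)+[\nabla g_i(x_t)]^{\T}(x-x_t).
\]
Because $x\in\mathcal{C}$ we have $g_i(x)\ge 0$, so rearranging yields $[\nabla g_i(x_t)]^{\T}(x-x_t)\ge g_i(x)-g_i(x_t)\ge -g_i(x_t)$. Multiplying by $\alpha>0$ gives $[\nabla g_i(x_t)]^{\T}\big(\alpha(x-x_t)\big)\ge -\alpha g_i(x_t)$, which is exactly the defining inequality of $V_\alpha(x_t)$ for index $i$. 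As $i\in I(x_t)$ was arbitrary, $\alpha(x-x_t)\in V_\alpha(x_t)$; specializing to $x=x_T^\star$ gives $\alpha(x_T^\star-x_t)\in V_\alpha(x_t)$, the form used subsequently.

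\emph{Second statement.} Suppose $x_t\notin\mathrm{int}(\mathcal{C})$; the assertion to prove is that $[\nabla g_i(x_t)]^{\T}(x-x_t)\ge 0$ for all $x\in\mathcal{C}$ and all $i\in I(x_t)$ (equivalently $\mathcal{C}\subseteq\mathcal{S}_t$, linking the claim to Assumption~\ref{as:ip}). First note $I(x_t)\neq\emptyset$: if $g_i(x_t)>0$ for every $i$, then continuity of the $g_i$ would place $x_t$ in the open set $\bigcap_i\{g_i>0\}\subseteq\mathrm{int}(\mathcal{C})$, a contradiction. Now repeat the computation above for $i\in I(x_t)$; since here $g_i(x_t)\le 0$, the bound $[\nabla g_i(x_t)]^{\T}(x-x_t)\ge -g_i(x_t)$ improves to $[\nabla g_i(x_t)]^{\T}(x-x_t)\ge 0$, as claimed.

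I do not anticipate a real obstacle: each inequality is a single application of concavity per constraint. The only points requiring care are bookkeeping — arguing index-by-index rather than over the whole set $I(x_t)$, and keeping the two sign facts straight, namely $g_i(x)\ge 0$ for $x\in\mathcal{C}$ and $g_i(x_t)\le 0$ for $i\in I(x_t)$, which are precisely what turns the supergradient inequality into membership in $V_\alpha(x_t)$ and, when $x_t\notin\mathrm{int}(\mathcal{C})$, into the sharper cone inclusion.
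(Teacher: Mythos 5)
Your proof is correct and follows essentially the same route as the paper: the supergradient inequality for each concave $g_i$ combined with $g_i(x)\geq 0$ gives $[\nabla g_i(x_t)]^{\T}(x-x_t)\geq -g_i(x_t)$, yielding membership in $V_\alpha(x_t)$ after scaling by $\alpha$, and the sign $g_i(x_t)\leq 0$ for the relevant indices sharpens this to the cone inclusion. Your extra care in noting $I(x_t)\neq\emptyset$ and arguing index-by-index for $i\in I(x_t)$ only makes explicit what the paper's one-line justification of the second conclusion leaves implicit.
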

\begin{proof}
Let $x\in\mathcal{C}$ be an arbitrary feasible decision, satisfying
$g_{i}(x)\geq0$ for all $i\in\{1,\dots,m\}$. Since $g_{i}$ is concave,
we have $g_{i}(x_{t})+[\nabla g_{i}(x_{t})]^{\T}[x-x_{t}]\geq g_{i}(x)\geq0$
and thus $[\nabla g_{i}(x_{t})]^{\T}[x-x_{t}]\geq - g_{i}(x_{t})$.
The second conclusion follows by $x_{t}\not\in\mathrm{int}(\mathcal{C})$, which implies $g_i(x_{t})\leq0$.
\end{proof}
In the second step, we show that $r_{t}^{\T}(x_{t}-x^{\star})\leq0$, where $r_{t}=v_{t}+\nabla f_{t}(x_{t})$ is such that $-r_{t}$ belongs to the normal cone $N_{V_{\alpha}(x_{t})}(v_{t})$ of the velocity polyhedron $V_{\alpha}(x_{t})$ evaluated at the projection $v_{t}$.
\begin{lemma}[Main]\label{lem:main} 
	Let $v_{t}$ be the projection of $-\nabla f_{t}(x_{t})$
	onto the polyhedron $V_{\alpha}(x_{t})$ such that $v_{t}=r_{t}-\nabla f_{t}(x_{t})\in V_{\alpha}(x_{t})$,
	where $-r_{t}\in N_{V_{\alpha}(x_{t})}(v_{t})$. Then, $-r_{t}^{\T}(x-x_{t})\leq0$
	for all $x\in\mathcal{C}$.
\end{lemma}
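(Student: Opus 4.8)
The plan is to combine the first-order optimality (variational inequality) characterization of the Euclidean projection $v_{t}$ with the membership result of Claim~\ref{clm:concave}. Since $V_{\alpha}(x_{t})$ is a nonempty (by Claim~\ref{clm:concave}), closed, convex polyhedron, $v_{t}=\mathrm{Proj}_{V_{\alpha}(x_{t})}(-\nabla f_{t}(x_{t}))$ is well defined and the standard projection inequality reads $(-\nabla f_{t}(x_{t})-v_{t})^{\T}(w-v_{t})\leq0$ for every $w\in V_{\alpha}(x_{t})$; writing $r_{t}=v_{t}+\nabla f_{t}(x_{t})$ this is exactly the stated condition $-r_{t}\in N_{V_{\alpha}(x_{t})}(v_{t})$, i.e.
\[
r_{t}^{\T}(w-v_{t})\geq0\qquad\text{for all }w\in V_{\alpha}(x_{t}).
\]
So the whole proof reduces to choosing the right test points $w$.

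First I would establish the auxiliary inequality $r_{t}^{\T}v_{t}\geq0$. The point is that the ray $\{\lambda v_{t}:\lambda\geq1\}$ stays inside $V_{\alpha}(x_{t})$: for each $i\in I(x_{t})$ we have $g_{i}(x_{t})\leq0$, hence $-\alpha g_{i}(x_{t})\geq0$, and $v_{t}\in V_{\alpha}(x_{t})$ gives $[\nabla g_{i}(x_{t})]^{\T}v_{t}\geq-\alpha g_{i}(x_{t})\geq0$; consequently $\lambda[\nabla g_{i}(x_{t})]^{\T}v_{t}\geq[\nabla g_{i}(x_{t})]^{\T}v_{t}\geq-\alpha g_{i}(x_{t})$ for all $\lambda\geq1$. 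Plugging $w=2v_{t}\in V_{\alpha}(x_{t})$ into the projection inequality yields $r_{t}^{\T}(2v_{t}-v_{t})=r_{t}^{\T}v_{t}\geq0$. (When $I(x_{t})=\emptyset$ one has $V_{\alpha}(x_{t})=\mathbb{R}^{n}$, so $v_{t}=-\nabla f_{t}(x_{t})$, $r_{t}=0$, and the lemma is trivial.)

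Then I would invoke Claim~\ref{clm:concave}: for any $x\in\mathcal{C}$ we have $\alpha(x-x_{t})\in V_{\alpha}(x_{t})$. Using this as the test point $w=\alpha(x-x_{t})$ in the projection inequality gives $r_{t}^{\T}(\alpha(x-x_{t})-v_{t})\geq0$, i.e. $\alpha\,r_{t}^{\T}(x-x_{t})\geq r_{t}^{\T}v_{t}\geq0$. Dividing by $\alpha>0$ gives $r_{t}^{\T}(x-x_{t})\geq0$, that is $-r_{t}^{\T}(x-x_{t})\leq0$, which is exactly the assertion of Lemma~\ref{lem:main}.

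The only mildly delicate point — and the one I would flag as the crux — is that $0$ need not lie in $V_{\alpha}(x_{t})$ (it does only when every violated constraint satisfies $g_{i}(x_{t})=0$ exactly), so one cannot directly conclude $r_{t}^{\T}v_{t}\geq0$ from $w=0$; the scaling argument $w=2v_{t}$, which exploits the sign of $g_{i}(x_{t})$ on the active set $I(x_{t})$, is what replaces it. Everything else is a direct substitution into the variational inequality characterizing the projection, together with Claim~\ref{clm:concave}.
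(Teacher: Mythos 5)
Your argument is correct: the variational inequality $r_{t}^{\T}(w-v_{t})\geq 0$ for all $w\in V_{\alpha}(x_{t})$ is exactly the normal-cone condition used in the paper, your scaling step showing $\lambda v_{t}\in V_{\alpha}(x_{t})$ for $\lambda\geq 1$ (hence $r_{t}^{\T}v_{t}\geq 0$ via $w=2v_{t}$) is sound because $-\alpha g_{i}(x_{t})\geq 0$ on $I(x_{t})$, and combining it with the test point $w=\alpha(x-x_{t})$ from Claim~\ref{clm:concave} and dividing by $\alpha>0$ gives the claim; the degenerate case $I(x_{t})=\emptyset$ is handled correctly as well. The route is slightly different from the paper's. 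The paper uses a single test point, the translate $v(x)=v_{t}+(x-x_{t})$, whose feasibility in $V_{\alpha}(x_{t})$ rests on the second conclusion of Claim~\ref{clm:concave}, namely $[\nabla g_{i}(x_{t})]^{\T}(x-x_{t})\geq 0$ for the violated constraints; plugging $v(x)$ into $-r_{t}^{\T}(v-v_{t})\leq 0$ yields the result in one line. You instead use only the first conclusion of Claim~\ref{clm:concave} ($\alpha(x-x_{t})\in V_{\alpha}(x_{t})$) and compensate with the auxiliary ray argument giving $r_{t}^{\T}v_{t}\geq 0$. Both proofs rely on the same two ingredients (projection optimality plus concavity of the constraints); the paper's choice of test point is marginally more economical, while yours makes explicit the useful side fact $r_{t}^{\T}v_{t}\geq 0$ and avoids invoking the gradient inequality for violated constraints. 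No gap either way.
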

\begin{proof}
	By definition, the normal cone $N_{V_{\alpha}(x_{t})}(v_{t})$ is given by $\{u\in \mathbb{R}^n~|~ u^{\T}(v-v_{t})\leq0,\ \forall v\in V_{\alpha}(x_{t})\}$.
	Then, by construction $-r_{t}\in N_{V_{\alpha}(x_{t})}(v_{t})$ and thus it holds for every $v\in V_{\alpha}(x_{t})$ that $-r_{t}^{\T}[v-v_{t}]\leq0$.
	The proof proceeds by case distinction:
	
	\textbf{Case 1.} Suppose $x_{t}$ is in the interior of $\mathcal{C}$.
	Then, $I(x_{t})=\emptyset$, which implies $-\nabla f_{t}(x_{t})\in V_{\alpha}(x_{t})=\mathbb{R}^{n}$ and thus $r_{t}=0$.
	
	\textbf{Case 2.} Suppose $x_{t}$ is on the boundary or outside of $\mathcal{C}$, i.e., $I(x_{t})\neq\emptyset$. 
	By Claim \ref{clm:concave}, we have $[\nabla g_{i}(x_{t})]^{\T}[x-x_{t}]\geq 0$ for all $x\in\mathcal{C}$.
	By construction, $v_{t}\in V_{\alpha}(x_t)$ and thus $v(x)=v_t+x-x_t\in V_{\alpha}(x_t)$.
	The statement follows by applying $v=v(x)$ to $-r_{t}^{\T}[v-v_{t}]\leq0$.
\end{proof}
\paragraph{Regret.} To establish the first conclusion of Theorem~\ref{thm:main} (regret), we combine the preceding geometric property with the analysis of online gradient descent. 
Since $f_t\in\mathcal{F}$ is convex, we upper bound the regret in terms of the gradient of $f_t$, namely $\sum_{t=1}^{T}f_{t}(x_{t})-f_{t}(x_{T}^{\star}) \leq \sum_{t=1}^{T} [\nabla f_{t}(x_{t})]^{\T}(x_{t}-x_{T}^{\star})$ and then we show that the following inequality holds
\begin{eqnarray}
	[\nabla f_{t}(x_{t})]^{\T}(x_{t}-x_{T}^{\star})-\frac{\eta_{t}}{2}\lVert v_{t}\rVert^{2} & = & 	r_{t}^{\T}(x_{t}-x_{T}^{\star})+\frac{\lVert x_{t}-x_{T}^{\star}\rVert^{2}-\lVert x_{t+1}-x_{T}^{\star}\rVert^{2}}{2\eta_{t}} \nonumber\\
	& \leq & \frac{\lVert x_{t}-x_{T}^{\star}\rVert^{2}-\lVert x_{t+1}-x_{T}^{\star}\rVert^{2}}{2\eta_{t}}.\label{eq:OGD_regret}
\end{eqnarray}
Moreover, in Appendix~\ref{app:thm_main} (see Lemma \ref{lem:velocity}), we upper bound the velocity $\lVert v_{t}\rVert\leq\alpha\lVert x_{T}^{\star}-x_{t}\rVert+2\lVert\nabla f_{t}(x_{t})\rVert$.
Combining Assumption~\ref{as:fg} and $x_{t}\in\mathcal{B}_R$ yields a uniform bound $\lVert v_{t}\rVert\leq\mathcal{V}_{\alpha}$, where for $\alpha=L_{\mathcal{F}}/R$ we set $\mathcal{V}_{\alpha}:=4L_{\mathcal{F}}$.
The desired regret follows by a telescoping argument and by convexity of the cost functions $f_t\in\mathcal{F}$.	

\paragraph{Feasibility.} For the second conclusion of Theorem~\ref{thm:main} (convergence to the feasible set at a rate of $1/\sqrt{T}$), we develop a non-trivial inductive argument that proceeds in two steps. 
In Appendix~\ref{app:thm_main} (see Claim~\ref{clm:const_violation}), we give a structural result that bounds the constraint functions from below. 
In particular, for every $i\in I(x_{t})$ we have $g_{i}(x_{t+1})\geq(1-\alpha\eta_{t})g_{i}(x_{t})-\eta_{t}^{2}\mathcal{V}_{\alpha}^{2}\beta_{\mathcal{G}}$ and for every $i\not\in I(x_{t})$ it holds that $g_{i}(x_{t+1})\geq-\eta_{t+1}\mathcal{V}_{\alpha}[2L_{\mathcal{G}}+\mathcal{V}_{\alpha}\beta_{\mathcal{G}}/\alpha]$.

Using an inductive argument, we establish in Appendix~\ref{app:thm_main} (see Lemma~\ref{lem:feasibility_convergence}) the following lower bound: $g_{i}(x_{t})\geq-c\eta_{t}$ where $c=2\mathcal{V}_{\alpha}(L_{\mathcal{G}}+\mathcal{V}_{\alpha}\beta_{\mathcal{G}}/\alpha)$.
Choosing $\alpha=L_{\mathcal{F}}/R$ implies that $\mathcal{V}_{\alpha}=4L_{\mathcal{F}}$. 
Then, the desired convergence rate to the feasible set follows for the step size $\eta_{t}=\frac{1}{\alpha\sqrt{t}}$, since 
\[
-c\eta_{t}=-\frac{2\mathcal{V}_{\alpha}}{\alpha\sqrt{t}}\Big[L_{\mathcal{G}}+\frac{\beta_{\mathcal{G}}\mathcal{V}_{\alpha}}{\alpha}\Big]=-8\Big[\frac{L_{\mathcal{G}}}{R}+4\beta_{\mathcal{G}}\Big]\frac{R^{2}}{\sqrt{t}}.
\]

\section{Online Learning under Adversarial Nonlinear Constraints \label{sec:Time-Varing-Constraints}}

\subsection{Problem Formulation}\label{subsec:Problem-Formulation}

In this section, we consider an online optimization problem with adversarially generated time-varying constraints.
More precisely, at each time step $t$, the learner receives partial information on the current cost $f_t$ and feasible set $\mathcal{C}_t$, and seeks to minimize \eqref{eq:regret}.
To make this problem well posed, we restrict the environment such that each feasible set $\mathcal{C}_t$ is contained in $\mathcal{Q}_t$ (see Section~\ref{sec:Introduction}) and the rate of change between consecutive time-varying constraints \textit{decreases} over time.
We quantify a sufficient rate of decay with the following assumption.

\begin{assumption}[TVC Decay Rate]\label{as:TVC_decay}
We assume that the adversarially generated sequence $\{g_{t}\}_{t\geq1}$ of time-varying constraints is such that for every $x\in\mathcal{B}_{4R}$ and all $t\geq1$, the following holds
$\Vert g_{t+1}(x)-g_{t}(x)\Vert_{\infty}\leq\frac{98}{t+16}\big[\frac{L_{\mathcal{G}}}{R}+3\beta_{\mathcal{G}}\big]R^{2}$.
\end{assumption}

We note that Assumption~\ref{as:TVC_decay} essentially only requires $\Vert g_{t+1}(x)-g_{t}(x)\Vert_{\infty}\leq\mathcal{O}(1/t)$, as $R$ can be chosen large enough such that the bound is satisfied.
Of course, $R$ will appear in our regret and feasibility bounds, but it will not affect the dependence on $t$ or $T$ (up to constant factors).

An important special case where Assumption~\ref{as:TVC_decay} is satisfied, is summarized in the following Lemma. The proof is included in Appendix~\ref{app:thm:avrTVC} (see Lemma~\ref{lem:linearity} and Lemma~\ref{lem:hgit}).
\begin{lemma}\label{lem:construct_average}
Suppose the functions $\tilde{g}_{t,i}$ satisfy Assumption~\ref{as:fg} and in addition there is a decision $x_{t,i}\in\mathcal{B}_R$ such that $\tilde{g}_{t,i}(x_{t,i})=0$ for every $t\geq1$ and $i\in\{1,\dots,m\}$.
Then the time-averaged constraints $g_{t,i}(x):=\frac{1}{t} \sum_{\ell=1}^t \tilde{g}_{\ell,i}(x)$ satisfy Assumption~\ref{as:fg} and Assumption~\ref{as:TVC_decay}.
\end{lemma}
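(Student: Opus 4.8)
The plan is to verify the two target properties — Assumption~\ref{as:fg} and Assumption~\ref{as:TVC_decay} — separately for the time-averaged functions $g_{t,i}(x)=\frac1t\sum_{\ell=1}^t\tilde g_{\ell,i}(x)$, using only that each $\tilde g_{\ell,i}$ satisfies Assumption~\ref{as:fg} and has a zero $x_{\ell,i}\in\mathcal{B}_R$. The Assumption~\ref{as:fg} part is essentially closure of the function class under (finite) convex combinations. Concavity of $g_{t,i}$ is immediate since a convex combination of concave functions is concave. Smoothness: if each $\tilde g_{\ell,i}$ is $\beta_{\mathcal{G}}$-smooth, then so is the average, because $\|\nabla g_{t,i}(x)-\nabla g_{t,i}(y)\|\le\frac1t\sum_\ell\|\nabla\tilde g_{\ell,i}(x)-\nabla\tilde g_{\ell,i}(y)\|\le\beta_{\mathcal{G}}\|x-y\|$. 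The gradient bound $\|\nabla g_{t,i}(x)\|\le L_{\mathcal{G}}$ on $\mathcal{B}_{4R}$ follows by the triangle inequality on the average. For the feasible-set condition (Part~3), I would argue that $\mathcal{C}_t=\{x:g_{t,i}(x)\ge0\ \forall i\}$ is nonempty and contained in $\mathcal{B}_R$; this is where the existence of the zeros $x_{\ell,i}$ and the containment $\mathcal{C}_t\subset\mathcal{Q}_t$ (which is part of the interaction protocol, not something to be proved here) come in — concretely, nonemptiness can be obtained because the sets $\mathcal{C}_\ell$ are nonempty and contained in $\mathcal{B}_R$, and the relevant intersection/containment structure is inherited. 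The cleaner route, and the one I expect the paper takes via \Cref{lem:linearity}, is to note that $g_{t,i}$ inherits the structural form (affine-plus-concave, or the specific parametric family) that guarantees $\mathcal{C}_t\subseteq\mathcal{B}_R$ directly.

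For Assumption~\ref{as:TVC_decay}, the key identity is the telescoping/averaging computation
\[
g_{t+1,i}(x)-g_{t,i}(x)=\frac{1}{t+1}\sum_{\ell=1}^{t+1}\tilde g_{\ell,i}(x)-\frac1t\sum_{\ell=1}^{t}\tilde g_{\ell,i}(x)=\frac{1}{t+1}\Big(\tilde g_{t+1,i}(x)-\frac1t\sum_{\ell=1}^{t}\tilde g_{\ell,i}(x)\Big)=\frac{\tilde g_{t+1,i}(x)-g_{t,i}(x)}{t+1}.
\]
So $|g_{t+1,i}(x)-g_{t,i}(x)|\le\frac{1}{t+1}\big(|\tilde g_{t+1,i}(x)|+|g_{t,i}(x)|\big)$, and it remains to bound $|\tilde g_{\ell,i}(x)|$ (hence also the average $|g_{t,i}(x)|$) uniformly on $\mathcal{B}_{4R}$. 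This is where the zero $x_{\ell,i}\in\mathcal{B}_R$ enters: by the gradient bound and the mean value theorem (or integrating $\nabla\tilde g_{\ell,i}$ along the segment from $x_{\ell,i}$ to $x$, which stays in $\mathcal{B}_{4R}$ as long as $\|x\|\le 4R$ — actually the segment stays in $\mathcal{B}_{4R}$ since both endpoints do and the ball is convex),
\[
|\tilde g_{\ell,i}(x)|=|\tilde g_{\ell,i}(x)-\tilde g_{\ell,i}(x_{\ell,i})|\le L_{\mathcal{G}}\,\|x-x_{\ell,i}\|\le L_{\mathcal{G}}\cdot 5R.
\]
Then each $|g_{t,i}(x)|\le 5L_{\mathcal{G}}R$ as well, so $|g_{t+1,i}(x)-g_{t,i}(x)|\le\frac{10 L_{\mathcal{G}}R}{t+1}$, and one checks $\frac{10L_{\mathcal{G}}R}{t+1}\le\frac{98}{t+16}[\frac{L_{\mathcal{G}}}{R}+3\beta_{\mathcal{G}}]R^2$ for all $t\ge1$ — which holds since $\frac{10}{t+1}\le\frac{98}{t+16}$ fails in general, so I would instead keep the constant loose: the inequality $\frac{98}{t+16}\ge\frac{c}{t+1}$ holds for $c\le 98\cdot\frac{t+1}{t+16}$, whose infimum over $t\ge1$ is $98\cdot\frac{2}{17}\approx11.5>10$, so $c=10$ works and the $R^2[\tfrac{L_{\mathcal{G}}}{R}+3\beta_{\mathcal{G}}]\ge L_{\mathcal{G}}R$ slack absorbs the rest. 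Taking the max over $i$ gives the $\ell_\infty$ bound.

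The main obstacle I anticipate is not the $\mathcal{O}(1/t)$ decay — that is a short averaging computation — but rather pinning down the feasible-set containment $\mathcal{C}_t\subseteq\mathcal{B}_R$ (Part~3 of Assumption~\ref{as:fg}) for the averaged constraints, since an intersection of "nice" sublevel sets need not be contained in a fixed ball just because each individual one is, unless additional structure (the specific parametric form of the $\tilde g_{\ell,i}$, or an explicit common point, or the polyhedral cone structure of $\mathcal{Q}_t$) is used. I expect \Cref{lem:linearity} to supply exactly this structural ingredient — identifying that the admissible constraint family is closed under averaging in a way that preserves the "feasible set sits in $\mathcal{B}_R$" property — and \Cref{lem:hgit} to package the decay-rate estimate above. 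So the proof reduces to: (i) cite \Cref{lem:linearity} for the Assumption~\ref{as:fg} part including the ball containment; (ii) run the averaging telescope plus the $|\tilde g_{\ell,i}|\le 5L_{\mathcal{G}}R$ bound, as in \Cref{lem:hgit}, for Assumption~\ref{as:TVC_decay}; (iii) take maxima over $i\in\{1,\dots,m\}$ and $\ell$ to get the uniform $\ell_\infty$ statement.
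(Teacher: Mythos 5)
Your proposal is correct and follows essentially the same route as the paper: Lemma~\ref{lem:linearity} gives closure of concavity, $\beta_{\mathcal{G}}$-smoothness and the gradient bound under averaging, and Lemma~\ref{lem:hgit} runs exactly your telescoping identity together with a uniform bound on $|\tilde{g}_{\ell,i}|$ over $\mathcal{B}_{4R}$ anchored at $x_{\ell,i}\in\mathcal{B}_R$, followed by the constant check against $\frac{98}{t+16}\big[\frac{L_{\mathcal{G}}}{R}+3\beta_{\mathcal{G}}\big]R^{2}$. The only differences are minor: you bound $|\tilde{g}_{\ell,i}(x)|\le 5L_{\mathcal{G}}R$ directly from the gradient bound, whereas the paper uses concavity plus smoothness to get $\tfrac{11}{2}\big[\frac{L_{\mathcal{G}}}{R}+3\beta_{\mathcal{G}}\big]R^{2}$ (so the same argument covers the relaxed anchor condition of Lemma~\ref{lem:averageTVC}), and your worry about Part~3 of Assumption~\ref{as:fg} is likewise left implicit in the paper, which cites only Lemma~\ref{lem:linearity} for the Assumption~\ref{as:fg} part.
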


\subsection{Velocity Projection with Attractive Hypersphere Constraint}

We show in Appendix~\ref{sec:algorithmic_result} that the second assumption in Theorem~\ref{thm:main}, namely, ``$x_{t}\in\mathcal{B}_R$ for all $t\geq1$'' can be enforced algorithmically.
We achieve this in two steps.

1) Algorithmically, we introduce an additional hypersphere constraint $g_{m+1}(x_{t})=\frac{1}{2}[R^{2}-\Vert x_{t}\Vert^{2}]$ that attracts the decision sequence $\{x_{t}\}_{t\geq1}$ to a hypersphere  $\mathcal{B}_{R}$ and guarantees that it always stays inside a hypersphere $\mathcal{B}_{4R}$ with a slightly larger radius.

More precisely, we augment the velocity polyhedron in Step 3 of Algorithm~\ref{alg:cvv-pro} as follows:
$V_{\alpha}^{\prime}(x_{t})=V_{\alpha}(x_{t})$  if $\Vert x\Vert\leq R$, otherwise
\[
V_{\alpha}^{\prime}(x_{t})=\{v\in V_{\alpha}(x_{t}) ~|~ [\nabla g_{m+1}(x_{t})]^{\T}v\geq-\alpha g_{m+1}(x_{t})\}.
\]
2) Analytically, we give a refined inductive argument in Appendix~\ref{app:thm:avrTVC} (see Lemma~\ref{lem:gmp_all}), showing that $g_{m+1}(x_{t})\geq-27R^{2}/\sqrt{t+15}$, $\Vert x_{t}\Vert\leq4R$ and $\Vert v_{t}\Vert\leq7L_{\mathcal{F}}$, for all $t\geq1$.

\subsection{Main Contribution}
Our main contribution is to show that Algorithm~\ref{alg:cvv-pro} with the augmented velocity polyhedron $V_{\alpha}^{\prime}(x_{t})$, achieves optimal $\mathcal{O}(\sqrt{T})$ regret and satisfies $g_T(x_T)\geq -\mathcal{O}(1/\sqrt{T})$ convergence feasibility rate. 
Due to space limitations, we defer the proof to Appendix~\ref{app:thm:avrTVC}.

\begin{theorem}[Time-Varying Constraints] \label{thm:avrTVC}
Suppose the functions $\{f_{t},g_{t}\}_{t\geq1}$ satisfy Assumptions~\ref{as:fg},~\ref{as:ip} and \ref{as:TVC_decay}.
Then, on input $R,L_{\mathcal{F}}>0$ and $x_{1}\in\mathcal{B}_{R}$, Algorithm~\ref{alg:cvv-pro} applied with $\alpha=L_{\mathcal{F}}/R$, augmented velocity polyhedron $V_{\alpha}^{\prime}(\cdot)$ and step sizes $\eta_{t}=\frac{1}{\alpha\sqrt{t+15}}$ guarantees the following for all $T\geq1$:
\vspace{-5pt}
\begin{itemize}[noitemsep, topsep=0pt, leftmargin=1pc]
\item[]\textbf{(regret)} \quad\quad $\sum_{t=1}^{T}f_{t}(x_{t})-\min_{x\in\mathcal{C}_{T}}\sum_{t=1}^{T}f_{t}(x)\leq246L_{\mathcal{F}}R\sqrt{T}$;	
\item[]\textbf{(feasibility)} \,\, $g_{t,i}(x_{t})\geq-265\left[\frac{L_{\mathcal{G}}}{R}+4\beta_{\mathcal{G}}\right]\frac{R^{2}}{\sqrt{t+15}}$,\quad  for all $t\in\{1,\dots,T\}$ and $i\in\{1,\dots,m\}$;
\item[]\textbf{(attraction)} \,\,$g_{m+1}(x_{t})\geq-27\frac{R^{2}}{\sqrt{t+15}}$,\quad for all $t\in\{1,\dots,T\}$.
\end{itemize}
\end{theorem}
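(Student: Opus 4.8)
The plan is to mirror the three-part structure of the conclusion (regret, feasibility, attraction) but to prove them together via a single inductive argument, since the bounds are mutually entangled: the velocity bound $\lVert v_t\rVert\leq 7L_{\mathcal F}$ needs $\lVert x_t\rVert\leq 4R$, which needs the attraction bound $g_{m+1}(x_t)\geq -27R^2/\sqrt{t+15}$, which in turn needs the velocity bound at the previous step. First I would set up the induction hypothesis at time $t$ as the conjunction: (H1) $\lVert x_t\rVert\leq 4R$, (H2) $g_{m+1}(x_t)\geq -27R^2/\sqrt{t+15}$, and (H3) $g_{t,i}(x_t)\geq -265[\tfrac{L_{\mathcal G}}{R}+4\beta_{\mathcal G}]R^2/\sqrt{t+15}$ for all $i$. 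The base case $t=1$ follows from $x_1\in\mathcal B_R$. For the inductive step, the key observation is that Claim~\ref{clm:concave} still applies pointwise at each time $t$ (concavity of $g_{t,i}$ and of $g_{m+1}$), so $\alpha(x-x_t)\in V_\alpha'(x_t)$ for any $x\in\mathcal C_t\cap\mathcal B_R$, guaranteeing the projection $v_t$ is well-defined; and Lemma~\ref{lem:main} gives $-r_t^\T(x-x_t)\leq 0$ for all $x\in\mathcal C_t$, where $r_t=v_t+\nabla f_t(x_t)$.

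For the velocity bound, I would reprove the analogue of Lemma~\ref{lem:velocity} in the augmented setting: since $\alpha(x-x_t)$ is feasible for the projection problem, $\lVert v_t+\nabla f_t(x_t)\rVert\leq\lVert\alpha(x-x_t)+\nabla f_t(x_t)\rVert$, hence $\lVert v_t\rVert\leq 2\lVert\nabla f_t(x_t)\rVert+\alpha\lVert x-x_t\rVert$; picking $x$ to be a point of $\mathcal C_t\subset\mathcal B_R$ and using (H1) gives $\lVert x-x_t\rVert\leq 5R$, so with $\alpha=L_{\mathcal F}/R$ and $\lVert\nabla f_t\rVert\leq L_{\mathcal F}$ (valid since $x_t\in\mathcal B_{4R}$ by (H1)) we get $\lVert v_t\rVert\leq 7L_{\mathcal F}$. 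The attraction bound then follows as in Claim~\ref{clm:const_violation}/Lemma~\ref{lem:feasibility_convergence} applied to the single smooth concave function $g_{m+1}$ (which has $\beta=1$ and gradient bounded on $\mathcal B_{4R}$): when $\lVert x_t\rVert>R$ the constraint $g_{m+1}$ is active, giving $g_{m+1}(x_{t+1})\geq(1-\alpha\eta_t)g_{m+1}(x_t)-\eta_t^2\lVert v_t\rVert^2$; when $\lVert x_t\rVert\leq R$ a one-step estimate using $\lVert x_{t+1}-x_t\rVert=\eta_t\lVert v_t\rVert$ controls how negative $g_{m+1}(x_{t+1})$ can become. Solving the resulting recursion $a_{t+1}\geq(1-\tfrac{1}{\sqrt{t+15}})a_t - O(1/(t+15))$ with the shifted step size $\eta_t=\tfrac1{\alpha\sqrt{t+15}}$ yields $g_{m+1}(x_t)\geq -27R^2/\sqrt{t+15}$, which via $g_{m+1}(x_t)=\tfrac12(R^2-\lVert x_t\rVert^2)$ closes (H1)–(H2).

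The genuinely new difficulty — and where I expect to spend the most effort — is the feasibility bound (H3) for the time-varying constraints $g_{t,i}$, because the one-step recursion now has an extra drift term: $g_{t+1,i}(x_{t+1})\geq g_{t,i}(x_{t+1}) - \lVert g_{t+1}-g_t\rVert_\infty \geq (1-\alpha\eta_t)g_{t,i}(x_t)-\eta_t^2\lVert v_t\rVert^2\beta_{\mathcal G} - \lVert g_{t+1}-g_t\rVert_\infty$ for active $i$, and similarly for inactive $i$. Assumption~\ref{as:TVC_decay} is calibrated precisely so that the drift $\lVert g_{t+1}-g_t\rVert_\infty\leq \tfrac{98}{t+16}[\tfrac{L_{\mathcal G}}{R}+3\beta_{\mathcal G}]R^2$ is of the same $O(1/t)$ order as the other terms in the recursion, so the induction still goes through — but the constant bookkeeping is delicate, and one must handle the transition between $i\in I(x_t)$ and $i\notin I(x_t)$ carefully (an index can switch status between rounds), which is exactly the nontrivial inductive argument alluded to after Claim~\ref{clm:const_violation}. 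Once (H1)–(H3) are established for all $t$, the regret bound follows as in the proof of Theorem~\ref{thm:main}: convexity gives $\sum_t f_t(x_t)-f_t(x_T^\star)\leq\sum_t[\nabla f_t(x_t)]^\T(x_t-x_T^\star)$, the inequality~\eqref{eq:OGD_regret} together with Lemma~\ref{lem:main} (using $x_T^\star\in\mathcal C_T\subset\mathcal Q_T\subset\mathcal S_t$, so $-r_t^\T(x_T^\star-x_t)\leq 0$ at every $t$ — this is where Assumption~\ref{as:ip}(ii) is essential) reduces the sum to a telescoping term plus $\sum_t\tfrac{\eta_t}{2}\lVert v_t\rVert^2\leq\sum_t\tfrac{(7L_{\mathcal F})^2}{2\alpha\sqrt{t+15}}=O(L_{\mathcal F}R\sqrt T)$, and bounding $\lVert x_t-x_T^\star\rVert\leq 5R$ gives the stated constant $246L_{\mathcal F}R$.
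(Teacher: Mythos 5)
Your proposal is correct and follows essentially the same route as the paper: boundedness, velocity bound and attraction via the hypersphere-constraint induction (the paper's Lemma~\ref{lem:gmp_all}), feasibility via the one-step recursion with the $\mathcal{O}(1/t)$ drift absorbed by Assumption~\ref{as:TVC_decay} (the paper's Corollary~\ref{cor:TVC_ind_2} and Lemma~\ref{lem:AvrTVC}), and regret via the observation that $x_T^{\star}\in\mathcal{C}_T\subseteq\mathcal{Q}_T\subseteq\mathcal{S}_t$ yields $-r_t^{\T}(x_T^{\star}-x_t)\leq0$ for all $t$ (the paper's Lemma~\ref{lem:time-varying-regret}), followed by the online-gradient-descent telescoping with $\lVert v_t\rVert\leq7L_{\mathcal{F}}$ and $\lVert x_t-x_T^{\star}\rVert\leq5R$. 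The only difference is packaging: you run one joint induction where the paper separates the hypersphere induction from the feasibility induction, and you leave the constant bookkeeping in the feasibility recursion as a sketch, but the mechanism you describe is exactly the paper's.
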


Our regret analysis in Theorem~\ref{thm:avrTVC} builds upon the following key structural result that generalizes Lemma~\ref{lem:main} to time-varying constraints.
In particular, in Appendix~\ref{app:thm:avrTVC} (see Lemma~\ref{lem:time-varying-regret}), we show that given the feasible set $\mathcal{C}_T \subset \mathcal{Q}_T$, it holds for every $x\in\mathcal{C}_T$ that $-r_{t}^{\T}(x - x_{t}) \leq 0$ for all $t\in\{1,\dots,T\}$.
As a result, a similar argument as in \eqref{eq:OGD_regret} shows that the regret is bounded by $\mathcal{O}(\sqrt{T})$.

Moreover, we note that the linear and quadratic dependence on $R$ in Theorem~\ref{thm:avrTVC} is consistent in length units.
Let the radius $R$ be of length units $\ell$, then the Lipschitz constant $L_{\mathcal{F}}$, which can be viewed as the supremum over the $\ell_2$ norm of the gradient is of $1/\ell$ units, and the $\beta_{\mathcal{G}}$ smoothness constant (associated with Hessian) is of $1/\ell^2$ units. This means that the regret bound in Theorem~\ref{thm:avrTVC} has the same units as $f_t$, while the feasibility bound has the same units as $g_t$.

\section{Simulation examples}\label{sec:Simulation example}

Two-player games with shared resources are an excellent example for demonstrating the effectiveness and importance of our online learning framework.
We apply our algorithm and show numerical experiments that support our theoretical findings.

We choose random instances of a two player game with linear utility and constraints.
In particular, we consider the following optimization problem
\begin{align}\label{exp:game}
\min_{x\in\triangle_{n}} \max_{y\in\triangle_{n}} x^{\T}A y \quad \text{subject to}\quad C_{x}x+C_{y}y\leq 1,
\end{align}
where $\triangle_{n}=\{x\in\mathbb{R}^n ~|~\sum_{i=1}^{n}x_i=1, x\geq0\}$ is the probability simplex. Each component of the utility matrix $A\in\mathbb{R}^{n\times n}$ is sampled from the normal distribution and the constraint matrices $C_x,C_y\in[0,1]^{m\times n}$ have each of their components sampled uniformly at random from $[0,1]$.

\begin{figure}[h]
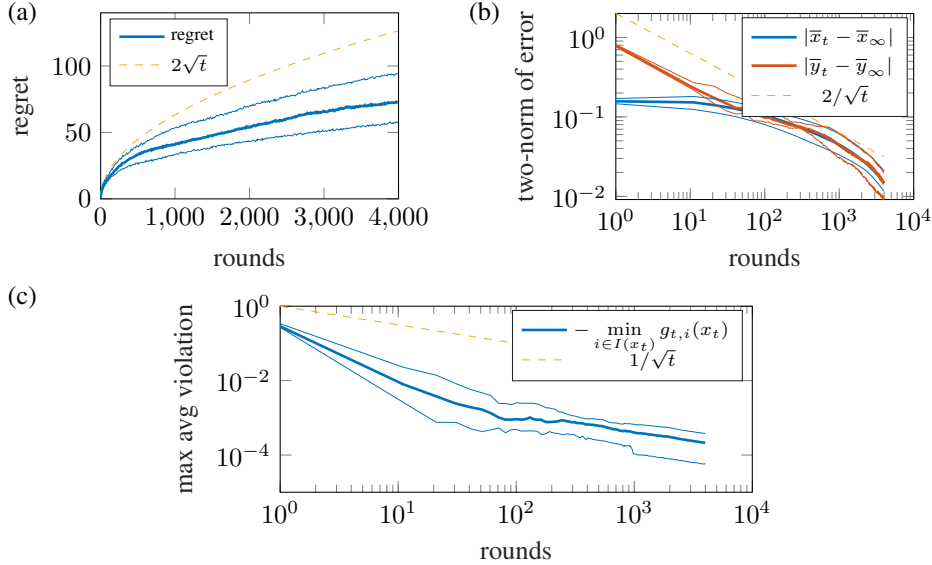

    \centering
     \begin{tabular}{@{}r@{\hspace{-1.5em}}cr@{}c@{}}
        (a)& & (b) \\[-1em]
        &\setlength\figurewidth{3.3cm}
        \input{plots_fix_alpha/regret.tex} &&
        \setlength\figurewidth{3.3cm}
        \input{plots_fix_alpha/convergence_xkyk_avg.tex} \\
        (c)\\[-1em]
        & \multicolumn{3}{c}{
        \setlength\figurewidth{3.3cm}
        \input{plots_fix_alpha/maxviolation.tex}}
     \end{tabular}
    \vspace{-1.1em}
    \caption{\small \textbf{(a)} The CVV-Pro algorithm is executed on five random instances of the two-player game with shared resources (Section~\ref{subsec:two-player-game}). The regret follows the predicted $\mathcal{O}(\sqrt{T})$ slope. 
    The thick line is the mean and the thin lines indicate the minimum and the maximum over the five runs.
    \textbf{(b)} The CVV-Pro algorithm achieves a convergence rate of $\mathcal{O}(1/\sqrt{t})$ for the averaged decisions $\overline{x}_t:=\frac{1}{t}\sum_{\ell=1}^{t}x_{\ell}$ towards $\overline{x}_{\infty}$.
    In our experiment, we set $\overline{x}_{\infty}:=\overline{x}_{10000}$. 
    Similar behavior is reported for the averaged decisions $\overline{y}_t$ of the adversary.
    \textbf{(c)} The maximal constraint violation expressed by $-\min_{i \in I(x_{t})}g_{t,i}(x_{t})$ converges at a rate of $\mathcal{O}(1/\sqrt{t})$, as predicted by our theoretical results.}
    \label{fig:CVVPro}
\end{figure}

\subsection{Online Formulation}\label{subsec:two-player-game}

The problem in \eqref{exp:game} can be modeled with our online learning framework \eqref{eq:regret} by choosing costs $f_{t}(x):=x^{\T} Ay_t$ and time-averaged resource constraints $g_T(x):=\frac{1}{T}\sum_{t=1}^{T}\tg{t}(x)$, where the function $\tg{t}(x):=1-C_{x}x-C_{y}y_{t}$.
Thus, the constraint in \eqref{exp:game} is included as an average over the past iterations of $y_t$.
The strategy for choosing $y_t$ will be described below and, as we will see, the average of $y_t$ over the past iterations converges.
This ensures that the feasible set $\mathcal{C}_t$ (defined in \eqref{eq:regret}) is slowly time-varying, while the averages of $x_t$ and $y_t$ over past iterates converge to equilibria in \eqref{exp:game}.
Further, by a refined version of Lemma~\ref{lem:construct_average} (see Lemma~\ref{lem:averageTVC} in Appendix~\ref{app:thm:avrTVC}), the time-averaged constraints $g_{T}(x)$ satisfy Assumption~\ref{as:TVC_decay}.

In each iteration, Algorithm~\ref{alg:cvv-pro} seeks to minimize the online problem and commits to a decision $x_{t}$.
The adversary computes the best response $\hat{y}_{t}$ with respect to the decision $x_{t}$ by solving $\argmax_{y\in \triangle_{n}} x_t^{\T} Ay$.
To make the dynamics more interesting, the adversary then commits with probability 0.8 to $\hat{y}_{t}$ and with probability 0.2 to a random decision $\xi_{t}$, i.e.,
$y_t=0.8\hat{y}_{t} + 0.2\xi_{t}$ where the random variable $\xi_{t}$ is sampled uniformly at random from the probability simplex $\triangle_{n}$.

As both players optimize over the probability simplex ($x,y\in\triangle_{n}$),
the sequence of decisions $\{x_t\}_{t\geq1}$ is automatically bounded.
Thus, we can apply Theorem~\ref{thm:avrTVC} with the original velocity polyhedron, as discussed in Appendix~\ref{sec:algorithmic_result}.
We implemented our algorithm with $\eta_t=1/(\alpha \sqrt{t})$ and $\alpha=100$.

\subsection{Experimental Results}

We report results from numerical simulations with decision dimension $n=100$, $m=10$ shared resource constraints, $T=4000$ iterations, and five independently sampled instances of the two-player game.
The learner's regret, depicted in Figure~\ref{fig:CVVPro}a, shows a clear correspondence with the theoretical prediction of $\mathcal{O}(\sqrt{T})$.
Figure~\ref{fig:CVVPro}b presents the maximal constraint violation $-\min_{i\in I(x_T)}\frac{1}{T}\sum_{t=1}^{T}\widetilde{g}_{t,i}(x_{T})$, which follows the predicted $\mathcal{O}(1/\sqrt{T})$ convergence rate.
We also conclude from Figure~\ref{fig:CVVPro}c that the learner's averaged decisions $\overline{x}_T=\frac{1}{T}\sum_{t=1}^{T}x_{t}$ converge at a rate of $\mathcal{O}(1/\sqrt{T})$. 
Similarly, the averaged decisions $\overline{y}_T$ of the adversary also converge at a rate of $\mathcal{O}(1/\sqrt{T})$.
We note that there is little variability in the results despite the different realizations of the matrices $A, C_x, C_y$.

\paragraph{Contrasting CVV-Pro and Online Gradient Descent}
In Appendix~\ref{app:OGD_CVVPro}, we show that our (CVV-Pro) algorithm outperforms the standard Online Gradient Descent algorithm in the two-player game from above.
In particular, our algorithm achieves a lower regret and a runtime improvement of about $60\%$.
Further, the percentage of violated constraints decreases rapidly and plateaus at $20\%$.

The amount of improvement in execution time is likely to be greater for higher-dimensional problems, where fewer constraints tend to be active at each iteration.
Moreover, when the constraints are nonlinear, which includes $\ell_p$ norm or spectral constraints, optimizing over the full feasible set can be computationally challenging.
In contrast, the velocity projection step in CCV-Pro is always a convex quadratic program with linear constraints, regardless of the underlying feasible set.

\section{Broader Impact}\label{sec:BroaderImpact}
It is important to emphasize that our work is theoretical, and the main contribution is to design and analyze a novel algorithm that combines techniques from the seemingly distant fields of online convex optimization (online gradient descent) and non-smooth mechanics (velocity space, see~\citet{MJ21}). Nevertheless, the list of potential applications includes, but is not limited to: adversarial contextual bandits with sequential risk constraints \citet{T4}, network resource allocation \citet{T1}, logistic regression \citet{CaoL19,LiuWHF22}, ridge regression and job scheduling \citet{LiuWHF22}, two-player games with resource constraints (Section~\ref{sec:Simulation example}), system identification and optimal control (Appendix~\ref{app:sec:more-apps}).

\section{Conclusion}\label{sec:Conclusion}

We propose an online algorithm that, despite the lack of information about the feasible set, achieves $\mathcal{O}(\sqrt{T})$ regret.
We further ensure convergence of violated constraint $-\min\{g_{T}(x_{T}),0\}$ at a rate of $\mathcal{O}(1/\sqrt{T})$ and derive explicit constants for all our bounds that hold for all $T\geq1$.
We thus conclude that our algorithm matches the performance of traditional online projected gradients or Frank-Wolfe schemes, while requiring substantially less information about the feasible set and allowing the feasible set to be time-varying.
Perhaps equally important is the fact that instead of performing projections onto the full feasible set at each iteration, our algorithm only optimizes over a local sparse linear approximation. 
We show the applicability of our algorithm in numeric simulations of random two-player games with shared resources.

\section*{Acknowledgements}
We acknowledge the support from the German Federal Ministry of Education and Research (BMBF) through the Tübingen AI Center (FKZ: 01IS18039B).
Georg Martius is a member of the Machine Learning Cluster of Excellence, EXC number 2064/1 – Project number 390727645.
Pavel Kolev was supported by the Cyber Valley Research Fund and the Volkswagen Stiftung (No 98 571). 
Michael Muehlebach thanks the German Research Foundation and the Branco Weiss Fellowship, administered by ETH Zurich, for the support.
We thank anonymous reviewers for comments, which helped improve the presentation of the paper.

\bibliographystyle{plainnat}
\bibliography{references}

\newpage
\appendix

\renewcommand{\thetable}{S\arabic{table}}
\renewcommand{\thefigure}{S\arabic{figure}}
\renewcommand{\theequation}{S\arabic{equation}}
\setcounter{table}{0}
\setcounter{figure}{0}
\setcounter{equation}{0}

\begin{center}
\Large
Supplementary Material for Online Learning under Adversarial Nonlinear Constraints
\end{center}

\section{Polyhedral Intersection}\label{app:polyhedral_intersection}

\begin{figure}[h]
    \centering
    \includegraphics[width=0.9\linewidth]{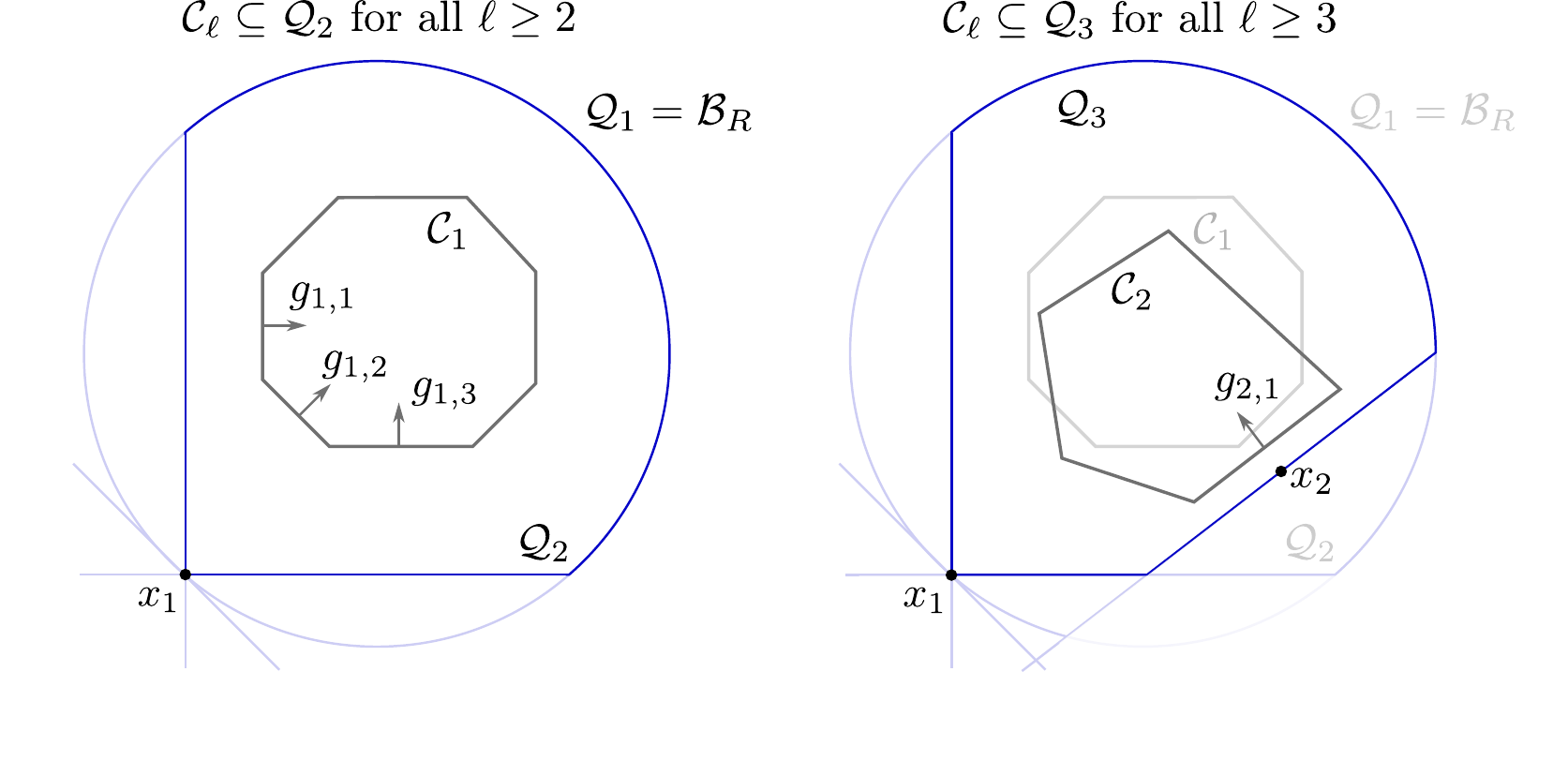}
    \caption{Illustration of polyhedral intersection.}
    \label{fig:PolyhedralIntersection}
\end{figure}

We present here a simplified geometric setting of the Assumption~\ref{as:ip} Part 2, where the time-varying constraints are linear $g_{t,i}(x)=g_{t,i}^{\T}x$.

In the interaction protocol of Assumption~\ref{as:ip}, the learner first commits a decision $x_1\in\mathcal{B}_R$.
Then the environment selects a feasible set $\mathcal{C}_1\subseteq\mathcal{Q}_1=\mathcal{B}_R$ (by Assumption 1.1 Part 3) and reveals to the learner a cost value $f_1(x_1)$, a gradient $\nabla f_1(x_1)$, and information for all violated constraints $(g_{1,i}(x_1),\nabla g_{1,i}(x_1))_{i=1}^{3}$.
This constraint violation information restricts the adversary to selecting successive feasible sets $\mathcal{C}_{\ell}$, for all $\ell\geq2$, from a polyhedral intersection $\mathcal{Q}_2=\mathcal{Q}_1\cap S_1$, where the cone intersection $S_1=\{z\in\mathbb{R}^n : G_1(x_1)^{\T}(z-x_1)\geq0\}$ and the gradient matrix $G_1(x_1)=[\nabla g_{1,i}(x_1)]_{i=1}^{3}$.

In the next iteration, the learner makes an update and commits a decision $x_2=x_1+\eta_1 v_1$.
Then, the process is repeated: the environment selects $\mathcal{C}_2\subseteq\mathcal{Q}_2$, reveals a cost value $f_2(x_2)$, a gradient $\nabla f_2(x_2)$ and constraint violation information $g_{2,1}(x_2),\nabla g_{2,1}(x_2)$.
All successive feasible sets $\mathcal{C}_{\ell}$, for all $\ell\geq3$, are restricted to belong to a polyhedral intersection $\mathcal{Q}_3=\mathcal{Q}_2\cap S_2$, where the cone intersection $S_2=\{z\in\mathbb{R}^n : G_2(x_2)^{\T}(z-x_2)\geq0\}$ and the gradient matrix $G_2(x_2)=[\nabla g_{2,1}(x_1)]$.

\section{Further Applications}\label{app:sec:more-apps}

We consider here a system identification and optimal control application where an agent must predict a sequence of actions to minimize costs and satisfy constraints.
Many real-world systems are subject to wear, tear and drift (e.g., sensors), which naturally leads to non-stationary costs and constraints, corresponding to slowly time-varying functions $f_t$ and $\{g_{t,i}\}_{i=1}^{m}$, respectively.
Furthermore, it is common in optimal control to know analytically both the dynamics model and the cost and constraint functions, so the gradients are naturally available.
Assuming access to a constraint violation oracle, the above scenario can be cast into our online problem formulation.
More specifically, in each episode $t$, an agent $\phi$ parameterized by weights $\theta_t\in\mathbb{R}^n$ generates a sequence of actions $\{x_{\ell}\}_{\ell=1}^{H}$ and upon their deployment in the environment, receives a cost value $f_t(\theta_t)$, gradient $\nabla f_t(\theta_t)$ and information for all violated constraints $\{(g_{t,i}(\theta_t), \nabla g_{t,i}(\theta_t)\}_{i\in I(\theta_t)}$.

\newpage
\section{Contrasting CVV-Pro and OGD: A Comparative Study}\label{app:OGD_CVVPro}

In this section, we compare the runtime performance and regret guarantee of the standard Online Gradient Descent (OGD) algorithm and our (CVV-Pro) algorithm in the two-player game setting (defined in Section~\ref{sec:Simulation example}).
More concretely, we consider shared constraints of the form $C_xx+C_yy\leq b$.
We report results from numerical simulations with decision dimension $n=1000$, $m=100$ shared resource constraints, capacity $b=1.3$, $T=2000$ iterations, and 5 independently sampled instances of the two-player game.
We report below the results:

\paragraph{Regret:} The 25th percentile of OGD has a higher regret around iteration 1400 than the function $5\sqrt{t}$ and stays above it.
In contrast, CVV-Pro achieves better regret, with the 75th percentile being strictly bounded by the function $5\sqrt{t}$, see Figure~\ref{fig:OGD_vs_CVVPro}a.

\paragraph{$\%$ Constraints Violation} In each iteration, CVV-Pro requires an oracle access only to the currently violated constraints.
The percentage of violated constraints first increases from $0.01\%$ to $57\%$ in the first four iterations, and then decreases rapidly to plateau at $20\%$, see Figure~\ref{fig:OGD_vs_CVVPro}b.

\paragraph{Runtime:}
In Figure~\ref{fig:OGD_vs_CVVPro}c, we report the average runtime per iteration for computing a projection.  
Since CVV-Pro solves the velocity projection problem with a decreasing number of constraints, it achieves a faster average runtime of $0.11\pm0.01$s compared to OGD, which requires solving the full projection problem each time and runs in $0.18\pm0.01$s.
Thus, for the two-player game with shared constraints, our algorithm CVV-Pro achieves a runtime improvement of around $60\%$ over OGD.
Further, we report in Figure~\ref{fig:OGD_vs_CVVPro}d the total cumulative runtime of CVV-Pro and OGD for computing the projection.

The amount of improvement in execution time is likely to be greater for higher-dimensional 
problems, where fewer constraints tend to be active at each iteration. 
Moreover, there are important situations, for example if constraints are non-convex, where projections are very difficult to compute (and/or might not even be well defined). 
In contrast, the velocity projection step in CCV-Pro is always a convex problem, regardless of whether the underlying feasible set is convex or not.

\begin{figure}[h]
    \centering
	\begin{tabular}{@{}r@{\hspace{-1.5em}}cr@{}c@{}}
        (a)& & (b) \\[-1em]
        & \setlength\figurewidth{3.75cm}
        \input{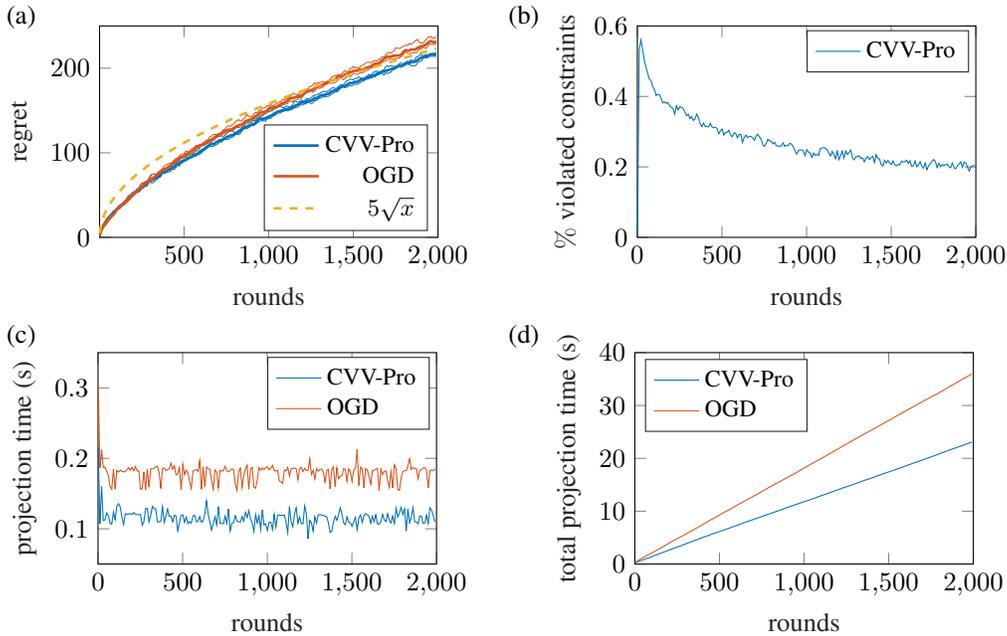} &&
        \setlength\figurewidth{3.75cm}
%
%
\definecolor{mycolor1}{rgb}{0.00000,0.44700,0.74100}%
\definecolor{mycolor2}{rgb}{0.85000,0.32500,0.09800}%
\definecolor{mycolor3}{rgb}{0.92900,0.69400,0.12500}%
\begin{tikzpicture}

\begin{axis}[%
width=1.2\figurewidth,
height=0.75\figurewidth,
at={(0\figurewidth,0\figurewidth)},
scale only axis,
xmin=0,
xmax=2000,
xlabel style={font=\color{white!15!black}},
xlabel={rounds},
ymin=0,
ymax=0.6,
ylabel style={font=\color{white!15!black}},
ylabel={$\%$ violated constraints},
axis background/.style={fill=white},
legend style={legend cell align=left, align=left, draw=white!15!black}
]
\addplot [color=mycolor1]
  table[row sep=crcr]{%
1	0.0109090909090909\\
11	0.530909090909091\\
21	0.56\\
31	0.533636363636364\\
41	0.502727272727273\\
51	0.483636363636364\\
61	0.465454545454545\\
71	0.449090909090909\\
81	0.444545454545455\\
91	0.427272727272727\\
101	0.42\\
111	0.402727272727273\\
121	0.406363636363636\\
131	0.404545454545455\\
141	0.393636363636364\\
151	0.381818181818182\\
161	0.384545454545455\\
171	0.38\\
181	0.382727272727273\\
191	0.372727272727273\\
201	0.369090909090909\\
211	0.365454545454545\\
221	0.35\\
231	0.373636363636364\\
241	0.361818181818182\\
251	0.370909090909091\\
261	0.369090909090909\\
271	0.365454545454545\\
281	0.355454545454545\\
291	0.345454545454545\\
301	0.349090909090909\\
311	0.337272727272727\\
321	0.342727272727273\\
331	0.331818181818182\\
341	0.336363636363636\\
351	0.35\\
361	0.34\\
371	0.322727272727273\\
381	0.314545454545455\\
391	0.324545454545455\\
401	0.314545454545455\\
411	0.314545454545455\\
421	0.313636363636364\\
431	0.323636363636364\\
441	0.313636363636364\\
451	0.310909090909091\\
461	0.312727272727273\\
471	0.309090909090909\\
481	0.294545454545455\\
491	0.301818181818182\\
501	0.307272727272727\\
511	0.290909090909091\\
521	0.298181818181818\\
531	0.298181818181818\\
541	0.303636363636364\\
551	0.29\\
561	0.285454545454545\\
571	0.296363636363636\\
581	0.295454545454545\\
591	0.282727272727273\\
601	0.291818181818182\\
611	0.292727272727273\\
621	0.269090909090909\\
631	0.283636363636364\\
641	0.29\\
651	0.282727272727273\\
661	0.286363636363636\\
671	0.286363636363636\\
681	0.283636363636364\\
691	0.275454545454545\\
701	0.287272727272727\\
711	0.292727272727273\\
721	0.285454545454545\\
731	0.279090909090909\\
741	0.262727272727273\\
751	0.258181818181818\\
761	0.266363636363636\\
771	0.278181818181818\\
781	0.273636363636364\\
791	0.273636363636364\\
801	0.265454545454545\\
811	0.27\\
821	0.26\\
831	0.259090909090909\\
841	0.258181818181818\\
851	0.26\\
861	0.268181818181818\\
871	0.258181818181818\\
881	0.260909090909091\\
891	0.26\\
901	0.267272727272727\\
911	0.264545454545455\\
921	0.257272727272727\\
931	0.246363636363636\\
941	0.236363636363636\\
951	0.245454545454545\\
961	0.248181818181818\\
971	0.248181818181818\\
981	0.245454545454545\\
991	0.231818181818182\\
1001	0.249090909090909\\
1011	0.241818181818182\\
1021	0.24\\
1031	0.227272727272727\\
1041	0.239090909090909\\
1051	0.232727272727273\\
1061	0.235454545454545\\
1071	0.232727272727273\\
1081	0.251818181818182\\
1091	0.253636363636364\\
1101	0.254545454545455\\
1111	0.252727272727273\\
1121	0.249090909090909\\
1131	0.24\\
1141	0.230909090909091\\
1151	0.243636363636364\\
1161	0.236363636363636\\
1171	0.223636363636364\\
1181	0.244545454545455\\
1191	0.220909090909091\\
1201	0.231818181818182\\
1211	0.244545454545455\\
1221	0.225454545454545\\
1231	0.245454545454545\\
1241	0.235454545454545\\
1251	0.240909090909091\\
1261	0.247272727272727\\
1271	0.225454545454545\\
1281	0.22\\
1291	0.229090909090909\\
1301	0.229090909090909\\
1311	0.231818181818182\\
1321	0.223636363636364\\
1331	0.225454545454545\\
1341	0.214545454545455\\
1351	0.224545454545455\\
1361	0.229090909090909\\
1371	0.220909090909091\\
1381	0.226363636363636\\
1391	0.23\\
1401	0.212727272727273\\
1411	0.234545454545455\\
1421	0.224545454545455\\
1431	0.222727272727273\\
1441	0.206363636363636\\
1451	0.202727272727273\\
1461	0.218181818181818\\
1471	0.210909090909091\\
1481	0.218181818181818\\
1491	0.21\\
1501	0.207272727272727\\
1511	0.216363636363636\\
1521	0.205454545454545\\
1531	0.202727272727273\\
1541	0.207272727272727\\
1551	0.205454545454545\\
1561	0.216363636363636\\
1571	0.213636363636364\\
1581	0.214545454545455\\
1591	0.219090909090909\\
1601	0.21\\
1611	0.217272727272727\\
1621	0.201818181818182\\
1631	0.202727272727273\\
1641	0.216363636363636\\
1651	0.200909090909091\\
1661	0.206363636363636\\
1671	0.214545454545455\\
1681	0.200909090909091\\
1691	0.196363636363636\\
1701	0.210909090909091\\
1711	0.191818181818182\\
1721	0.203636363636364\\
1731	0.208181818181818\\
1741	0.216363636363636\\
1751	0.220909090909091\\
1761	0.215454545454545\\
1771	0.198181818181818\\
1781	0.21\\
1791	0.200909090909091\\
1801	0.206363636363636\\
1811	0.202727272727273\\
1821	0.206363636363636\\
1831	0.208181818181818\\
1841	0.200909090909091\\
1851	0.197272727272727\\
1861	0.210909090909091\\
1871	0.198181818181818\\
1881	0.194545454545455\\
1891	0.202727272727273\\
1901	0.208181818181818\\
1911	0.214545454545455\\
1921	0.199090909090909\\
1931	0.213636363636364\\
1941	0.210909090909091\\
1951	0.199090909090909\\
1961	0.188181818181818\\
1971	0.200909090909091\\
1981	0.201818181818182\\
1991	0.205454545454545\\
};
\addlegendentry{\footnotesize CVV-Pro}
\end{axis}
\end{tikzpicture}
        (c)& & (d) \\[-1em]
        & \setlength\figurewidth{3.75cm}
%
%
\definecolor{mycolor1}{rgb}{0.00000,0.44700,0.74100}%
\definecolor{mycolor2}{rgb}{0.85000,0.32500,0.09800}%
\definecolor{mycolor3}{rgb}{0.92900,0.69400,0.12500}%
\begin{tikzpicture}

\begin{axis}[%
width=1.2\figurewidth,
height=0.75\figurewidth,
at={(0\figurewidth,0\figurewidth)},
scale only axis,
xmin=0,
xmax=2000,
xlabel style={font=\color{white!15!black}},
xlabel={rounds},
ymin=0.05,
ymax=0.35,
ylabel style={font=\color{white!15!black}},
ylabel={projection time (s)},
axis background/.style={fill=white},
legend style={legend cell align=left, align=left, draw=white!15!black}
]
\addplot [color=mycolor1]
  table[row sep=crcr]{%
1	0.216187\\
11	0.106992\\
21	0.159891\\
31	0.111651\\
41	0.112754\\
51	0.1226\\
61	0.118676\\
71	0.107566\\
81	0.106775\\
91	0.123748\\
101	0.113387\\
111	0.132053\\
121	0.135728\\
131	0.130773\\
141	0.116277\\
151	0.106216\\
161	0.121399\\
171	0.120177\\
181	0.120902\\
191	0.121455\\
201	0.120374\\
211	0.122981\\
221	0.120612\\
231	0.120081\\
241	0.119833\\
251	0.120594\\
261	0.10517\\
271	0.110253\\
281	0.12863\\
291	0.129158\\
301	0.109966\\
311	0.118732\\
321	0.132002\\
331	0.128613\\
341	0.128393\\
351	0.109761\\
361	0.109608\\
371	0.11085\\
381	0.131082\\
391	0.108994\\
401	0.118019\\
411	0.127835\\
421	0.117608\\
431	0.117758\\
441	0.109865\\
451	0.119244\\
461	0.118042\\
471	0.097149\\
481	0.108531\\
491	0.118921\\
501	0.119854\\
511	0.10938\\
521	0.116574\\
531	0.107462\\
541	0.109862\\
551	0.108203\\
561	0.118112\\
571	0.119094\\
581	0.109932\\
591	0.110348\\
601	0.117358\\
611	0.109354\\
621	0.111374\\
631	0.121026\\
641	0.139233\\
651	0.120354\\
661	0.109169\\
671	0.098421\\
681	0.10761\\
691	0.117027\\
701	0.105759\\
711	0.131318\\
721	0.107792\\
731	0.10732\\
741	0.107298\\
751	0.117843\\
761	0.107926\\
771	0.128318\\
781	0.097248\\
791	0.108014\\
801	0.108734\\
811	0.118513\\
821	0.11131\\
831	0.10995\\
841	0.109423\\
851	0.117339\\
861	0.11101\\
871	0.123307\\
881	0.116328\\
891	0.125644\\
901	0.113936\\
911	0.108189\\
921	0.096133\\
931	0.120057\\
941	0.108382\\
951	0.118049\\
961	0.099176\\
971	0.127014\\
981	0.105373\\
991	0.107127\\
1001	0.097758\\
1011	0.109235\\
1021	0.109265\\
1031	0.099017\\
1041	0.120263\\
1051	0.102797\\
1061	0.119058\\
1071	0.119571\\
1081	0.1189\\
1091	0.119131\\
1101	0.119609\\
1111	0.108693\\
1121	0.09459\\
1131	0.131465\\
1141	0.109542\\
1151	0.103042\\
1161	0.121237\\
1171	0.129608\\
1181	0.108186\\
1191	0.11693\\
1201	0.107335\\
1211	0.107113\\
1221	0.132585\\
1231	0.120972\\
1241	0.085983\\
1251	0.112001\\
1261	0.108252\\
1271	0.096942\\
1281	0.11686\\
1291	0.108061\\
1301	0.119841\\
1311	0.120143\\
1321	0.119432\\
1331	0.119148\\
1341	0.117868\\
1351	0.117357\\
1361	0.108658\\
1371	0.094993\\
1381	0.096613\\
1391	0.109327\\
1401	0.117534\\
1411	0.112231\\
1421	0.119618\\
1431	0.12968\\
1441	0.120983\\
1451	0.108598\\
1461	0.105741\\
1471	0.116885\\
1481	0.109714\\
1491	0.108296\\
1501	0.103762\\
1511	0.109762\\
1521	0.116191\\
1531	0.098774\\
1541	0.115415\\
1551	0.119726\\
1561	0.109119\\
1571	0.11846\\
1581	0.12151\\
1591	0.119898\\
1601	0.118187\\
1611	0.108061\\
1621	0.117628\\
1631	0.110114\\
1641	0.119914\\
1651	0.106857\\
1661	0.106518\\
1671	0.110043\\
1681	0.120312\\
1691	0.11304\\
1701	0.110286\\
1711	0.121267\\
1721	0.111558\\
1731	0.108258\\
1741	0.117956\\
1751	0.107685\\
1761	0.12912\\
1771	0.116961\\
1781	0.122435\\
1791	0.130286\\
1801	0.120318\\
1811	0.128339\\
1821	0.096927\\
1831	0.128207\\
1841	0.108721\\
1851	0.116437\\
1861	0.129962\\
1871	0.110092\\
1881	0.109759\\
1891	0.106921\\
1901	0.118064\\
1911	0.118144\\
1921	0.107562\\
1931	0.108295\\
1941	0.119415\\
1951	0.107638\\
1961	0.129773\\
1971	0.126184\\
1981	0.118803\\
1991	0.109736\\
};
\addlegendentry{\footnotesize CVV-Pro}

\addplot [color=mycolor2]
  table[row sep=crcr]{%
1	0.303214\\
11	0.186491\\
21	0.21211\\
31	0.187101\\
41	0.187492\\
51	0.181682\\
61	0.17569\\
71	0.15955\\
81	0.156208\\
91	0.18439\\
101	0.156214\\
111	0.182638\\
121	0.180666\\
131	0.183378\\
141	0.181308\\
151	0.180867\\
161	0.181007\\
171	0.184244\\
181	0.18676\\
191	0.183097\\
201	0.187005\\
211	0.182184\\
221	0.188346\\
231	0.182213\\
241	0.182144\\
251	0.156103\\
261	0.156788\\
271	0.180736\\
281	0.155772\\
291	0.184464\\
301	0.159113\\
311	0.185195\\
321	0.184677\\
331	0.185948\\
341	0.188307\\
351	0.188598\\
361	0.179012\\
371	0.157068\\
381	0.185721\\
391	0.18624\\
401	0.182577\\
411	0.184055\\
421	0.185168\\
431	0.193026\\
441	0.18476\\
451	0.189466\\
461	0.176302\\
471	0.178351\\
481	0.186714\\
491	0.155119\\
501	0.191612\\
511	0.177102\\
521	0.182972\\
531	0.179447\\
541	0.167098\\
551	0.15741\\
561	0.181746\\
571	0.182912\\
581	0.184218\\
591	0.181154\\
601	0.196729\\
611	0.156103\\
621	0.187803\\
631	0.160037\\
641	0.187725\\
651	0.185675\\
661	0.184166\\
671	0.168734\\
681	0.156229\\
691	0.18324\\
701	0.180626\\
711	0.158359\\
721	0.182988\\
731	0.181916\\
741	0.182321\\
751	0.188384\\
761	0.181949\\
771	0.181034\\
781	0.182798\\
791	0.185113\\
801	0.184061\\
811	0.163378\\
821	0.154463\\
831	0.181002\\
841	0.185795\\
851	0.158673\\
861	0.175966\\
871	0.184589\\
881	0.190366\\
891	0.156086\\
901	0.184506\\
911	0.187005\\
921	0.158643\\
931	0.15758\\
941	0.169584\\
951	0.186782\\
961	0.183037\\
971	0.186282\\
981	0.184148\\
991	0.1868\\
1001	0.187958\\
1011	0.183889\\
1021	0.185985\\
1031	0.190157\\
1041	0.182477\\
1051	0.182059\\
1061	0.182983\\
1071	0.167663\\
1081	0.166903\\
1091	0.164078\\
1101	0.18349\\
1111	0.18212\\
1121	0.187209\\
1131	0.175025\\
1141	0.185203\\
1151	0.18535\\
1161	0.186239\\
1171	0.185386\\
1181	0.169655\\
1191	0.185265\\
1201	0.186994\\
1211	0.181212\\
1221	0.185771\\
1231	0.182801\\
1241	0.182861\\
1251	0.199313\\
1261	0.177161\\
1271	0.182826\\
1281	0.170799\\
1291	0.185325\\
1301	0.181773\\
1311	0.184256\\
1321	0.184705\\
1331	0.180657\\
1341	0.166678\\
1351	0.182698\\
1361	0.184372\\
1371	0.156665\\
1381	0.184232\\
1391	0.188078\\
1401	0.159731\\
1411	0.186389\\
1421	0.171295\\
1431	0.174559\\
1441	0.175184\\
1451	0.185209\\
1461	0.184339\\
1471	0.167616\\
1481	0.184884\\
1491	0.180532\\
1501	0.167389\\
1511	0.183456\\
1521	0.185778\\
1531	0.212695\\
1541	0.161387\\
1551	0.187342\\
1561	0.185315\\
1571	0.157904\\
1581	0.185177\\
1591	0.183782\\
1601	0.174378\\
1611	0.163512\\
1621	0.182616\\
1631	0.182893\\
1641	0.184625\\
1651	0.185846\\
1661	0.154173\\
1671	0.184661\\
1681	0.177164\\
1691	0.181348\\
1701	0.182148\\
1711	0.169503\\
1721	0.159005\\
1731	0.181075\\
1741	0.15806\\
1751	0.190604\\
1761	0.184989\\
1771	0.155866\\
1781	0.154685\\
1791	0.172824\\
1801	0.181239\\
1811	0.182588\\
1821	0.182432\\
1831	0.189375\\
1841	0.19917\\
1851	0.184231\\
1861	0.186887\\
1871	0.185645\\
1881	0.185215\\
1891	0.170848\\
1901	0.18277\\
1911	0.183702\\
1921	0.186556\\
1931	0.186795\\
1941	0.182853\\
1951	0.15841\\
1961	0.182245\\
1971	0.183443\\
1981	0.18244\\
1991	0.185031\\
};
\addlegendentry{\footnotesize OGD}

\end{axis}
\end{tikzpicture}
	\setlength\figurewidth{3.75cm}
%
%
\definecolor{mycolor1}{rgb}{0.00000,0.44700,0.74100}%
\definecolor{mycolor2}{rgb}{0.85000,0.32500,0.09800}%
\begin{tikzpicture}

\begin{axis}[%
width=1.2\figurewidth,
height=0.75\figurewidth,
at={(0\figurewidth,0\figurewidth)},
scale only axis,
xmin=0,
xmax=2000,
xlabel style={font=\color{white!15!black}},
xlabel={rounds},
ymin=0,
ymax=40,
ylabel style={font=\color{white!15!black}},
ylabel={total projection time (s)},
axis background/.style={fill=white},
legend style={at={(0.03,0.97)}, anchor=north west, legend cell align=left, align=left, draw=white!15!black}
]
\addplot [color=mycolor1]
  table[row sep=crcr]{%
1	0.216187\\
11	0.323179\\
21	0.48307\\
31	0.594721\\
41	0.707475\\
51	0.830075\\
61	0.948751\\
71	1.056317\\
81	1.163092\\
91	1.28684\\
101	1.400227\\
111	1.53228\\
121	1.668008\\
131	1.798781\\
141	1.915058\\
151	2.021274\\
161	2.142673\\
171	2.26285\\
181	2.383752\\
191	2.505207\\
201	2.625581\\
211	2.748562\\
221	2.869174\\
231	2.989255\\
241	3.109088\\
251	3.229682\\
261	3.334852\\
271	3.445105\\
281	3.573735\\
291	3.702893\\
301	3.812859\\
311	3.931591\\
321	4.063593\\
331	4.192206\\
341	4.320599\\
351	4.43036\\
361	4.539968\\
371	4.650818\\
381	4.7819\\
391	4.890894\\
401	5.008913\\
411	5.136748\\
421	5.254356\\
431	5.372114\\
441	5.481979\\
451	5.601223\\
461	5.719265\\
471	5.816414\\
481	5.924945\\
491	6.043866\\
501	6.16372\\
511	6.2731\\
521	6.389674\\
531	6.497136\\
541	6.606998\\
551	6.715201\\
561	6.833313\\
571	6.952407\\
581	7.062339\\
591	7.172687\\
601	7.290045\\
611	7.399399\\
621	7.510773\\
631	7.631799\\
641	7.771032\\
651	7.891386\\
661	8.000555\\
671	8.098976\\
681	8.206586\\
691	8.323613\\
701	8.429372\\
711	8.56069\\
721	8.668482\\
731	8.775802\\
741	8.8831\\
751	9.000943\\
761	9.108869\\
771	9.237187\\
781	9.334435\\
791	9.442449\\
801	9.551183\\
811	9.669696\\
821	9.781006\\
831	9.890956\\
841	10.000379\\
851	10.117718\\
861	10.228728\\
871	10.352035\\
881	10.468363\\
891	10.594007\\
901	10.707943\\
911	10.816132\\
921	10.912265\\
931	11.032322\\
941	11.140704\\
951	11.258753\\
961	11.357929\\
971	11.484943\\
981	11.590316\\
991	11.697443\\
1001	11.795201\\
1011	11.904436\\
1021	12.013701\\
1031	12.112718\\
1041	12.232981\\
1051	12.335778\\
1061	12.454836\\
1071	12.574407\\
1081	12.693307\\
1091	12.812438\\
1101	12.932047\\
1111	13.04074\\
1121	13.13533\\
1131	13.266795\\
1141	13.376337\\
1151	13.479379\\
1161	13.600616\\
1171	13.730224\\
1181	13.83841\\
1191	13.95534\\
1201	14.062675\\
1211	14.169788\\
1221	14.302373\\
1231	14.423345\\
1241	14.509328\\
1251	14.621329\\
1261	14.729581\\
1271	14.826523\\
1281	14.943383\\
1291	15.051444\\
1301	15.171285\\
1311	15.291428\\
1321	15.41086\\
1331	15.530008\\
1341	15.647876\\
1351	15.765233\\
1361	15.873891\\
1371	15.968884\\
1381	16.065497\\
1391	16.174824\\
1401	16.292358\\
1411	16.404589\\
1421	16.524207\\
1431	16.653887\\
1441	16.77487\\
1451	16.883468\\
1461	16.989209\\
1471	17.106094\\
1481	17.215808\\
1491	17.324104\\
1501	17.427866\\
1511	17.537628\\
1521	17.653819\\
1531	17.752593\\
1541	17.868008\\
1551	17.987734\\
1561	18.096853\\
1571	18.215313\\
1581	18.336823\\
1591	18.456721\\
1601	18.574908\\
1611	18.682969\\
1621	18.800597\\
1631	18.910711\\
1641	19.030625\\
1651	19.137482\\
1661	19.244\\
1671	19.354043\\
1681	19.474355\\
1691	19.587395\\
1701	19.697681\\
1711	19.818948\\
1721	19.930506\\
1731	20.038764\\
1741	20.15672\\
1751	20.264405\\
1761	20.393525\\
1771	20.510486\\
1781	20.632921\\
1791	20.763207\\
1801	20.883525\\
1811	21.011864\\
1821	21.108791\\
1831	21.236998\\
1841	21.345719\\
1851	21.462156\\
1861	21.592118\\
1871	21.70221\\
1881	21.811969\\
1891	21.91889\\
1901	22.036954\\
1911	22.155098\\
1921	22.26266\\
1931	22.370955\\
1941	22.49037\\
1951	22.598008\\
1961	22.727781\\
1971	22.853965\\
1981	22.972768\\
1991	23.082504\\
};
\addlegendentry{\footnotesize CVV-Pro}

\addplot [color=mycolor2]
  table[row sep=crcr]{%
1	0.303214\\
11	0.489705\\
21	0.701815\\
31	0.888916\\
41	1.076408\\
51	1.25809\\
61	1.43378\\
71	1.59333\\
81	1.749538\\
91	1.933928\\
101	2.090142\\
111	2.27278\\
121	2.453446\\
131	2.636824\\
141	2.818132\\
151	2.998999\\
161	3.180006\\
171	3.36425\\
181	3.55101\\
191	3.734107\\
201	3.921112\\
211	4.103296\\
221	4.291642\\
231	4.473855\\
241	4.655999\\
251	4.812102\\
261	4.96889\\
271	5.149626\\
281	5.305398\\
291	5.489862\\
301	5.648975\\
311	5.83417\\
321	6.018847\\
331	6.204795\\
341	6.393102\\
351	6.5817\\
361	6.760712\\
371	6.91778\\
381	7.103501\\
391	7.289741\\
401	7.472318\\
411	7.656373\\
421	7.841541\\
431	8.034567\\
441	8.219327\\
451	8.408793\\
461	8.585095\\
471	8.763446\\
481	8.95016\\
491	9.105279\\
501	9.296891\\
511	9.473993\\
521	9.656965\\
531	9.836412\\
541	10.00351\\
551	10.16092\\
561	10.342666\\
571	10.525578\\
581	10.709796\\
591	10.89095\\
601	11.087679\\
611	11.243782\\
621	11.431585\\
631	11.591622\\
641	11.779347\\
651	11.965022\\
661	12.149188\\
671	12.317922\\
681	12.474151\\
691	12.657391\\
701	12.838017\\
711	12.996376\\
721	13.179364\\
731	13.36128\\
741	13.543601\\
751	13.731985\\
761	13.913934\\
771	14.094968\\
781	14.277766\\
791	14.462879\\
801	14.64694\\
811	14.810318\\
821	14.964781\\
831	15.145783\\
841	15.331578\\
851	15.490251\\
861	15.666217\\
871	15.850806\\
881	16.041172\\
891	16.197258\\
901	16.381764\\
911	16.568769\\
921	16.727412\\
931	16.884992\\
941	17.054576\\
951	17.241358\\
961	17.424395\\
971	17.610677\\
981	17.794825\\
991	17.981625\\
1001	18.169583\\
1011	18.353472\\
1021	18.539457\\
1031	18.729614\\
1041	18.912091\\
1051	19.09415\\
1061	19.277133\\
1071	19.444796\\
1081	19.611699\\
1091	19.775777\\
1101	19.959267\\
1111	20.141387\\
1121	20.328596\\
1131	20.503621\\
1141	20.688824\\
1151	20.874174\\
1161	21.060413\\
1171	21.245799\\
1181	21.415454\\
1191	21.600719\\
1201	21.787713\\
1211	21.968925\\
1221	22.154696\\
1231	22.337497\\
1241	22.520358\\
1251	22.719671\\
1261	22.896832\\
1271	23.079658\\
1281	23.250457\\
1291	23.435782\\
1301	23.617555\\
1311	23.801811\\
1321	23.986516\\
1331	24.167173\\
1341	24.333851\\
1351	24.516549\\
1361	24.700921\\
1371	24.857586\\
1381	25.041818\\
1391	25.229896\\
1401	25.389627\\
1411	25.576016\\
1421	25.747311\\
1431	25.92187\\
1441	26.097054\\
1451	26.282263\\
1461	26.466602\\
1471	26.634218\\
1481	26.819102\\
1491	26.999634\\
1501	27.167023\\
1511	27.350479\\
1521	27.536257\\
1531	27.748952\\
1541	27.910339\\
1551	28.097681\\
1561	28.282996\\
1571	28.4409\\
1581	28.626077\\
1591	28.809859\\
1601	28.984237\\
1611	29.147749\\
1621	29.330365\\
1631	29.513258\\
1641	29.697883\\
1651	29.883729\\
1661	30.037902\\
1671	30.222563\\
1681	30.399727\\
1691	30.581075\\
1701	30.763223\\
1711	30.932726\\
1721	31.091731\\
1731	31.272806\\
1741	31.430866\\
1751	31.62147\\
1761	31.806459\\
1771	31.962325\\
1781	32.11701\\
1791	32.289834\\
1801	32.471073\\
1811	32.653661\\
1821	32.836093\\
1831	33.025468\\
1841	33.224638\\
1851	33.408869\\
1861	33.595756\\
1871	33.781401\\
1881	33.966616\\
1891	34.137464\\
1901	34.320234\\
1911	34.503936\\
1921	34.690492\\
1931	34.877287\\
1941	35.06014\\
1951	35.21855\\
1961	35.400795\\
1971	35.584238\\
1981	35.766678\\
1991	35.951709\\
};
\addlegendentry{\footnotesize OGD}

\end{axis}
\end{tikzpicture}%
     \end{tabular}
    \vspace{-1.1em}
    \caption{The figure contrasts CVV-Pro and OGD by comparing the resulting regret (a) and execution time (c,d).
    Panel (b) shows how the number of violated constraints evolves over time.}
	\label{fig:OGD_vs_CVVPro}
\end{figure}

\newpage
\section{Proof of Theorem \ref{thm:main}}\label{app:thm_main}

In this section, we consider an online optimization problem with time-invariant constraints and a bounded iterate assumption.
The bounded iterate assumption will be removed subsequently in Section~\ref{sec:algorithmic_result}, which however, will require a more complex analysis.

We restate Theorem~\ref{thm:main} below for the convenience of the reader.
 
\begin{theorem}[Structural]
Suppose Assumption~\ref{as:fg} holds and in addition $x_{t}\in\mathcal{B}_R$ for all $t\in\{1,\dots,T\}$.
Then, on input $\alpha = L_{\mathcal{F}}/R$, Algorithm \ref{alg:cvv-pro} with step sizes
$\eta_{t}=\frac{1}{\alpha\sqrt{t}}$ guarantees the following for all $T\geq1$: 
 \vspace{-5pt}
 \begin{itemize}[noitemsep, topsep=0pt, leftmargin=1pc]
 \item[]\textbf{(regret)}\quad\quad $\sum_{t=1}^{T}f_{t}(x_{t})-\min_{x\in\mathcal{C}}\sum_{t=1}^{T}f_{t}(x)\leq 18L_{\mathcal{F}}R\sqrt{T}$;
 \item[]\textbf{(feasibility)}\, $g_{i}(x_{t})\geq-8\left[\frac{L_{\mathcal{G}}}{R}+2\beta_{\mathcal{G}}\right]\frac{R^{2}}{\sqrt{t}}$,\quad for all $t\in\{1,\dots,T\}$ and $i\in\{1,\dots,m\}$.
 \end{itemize}
\end{theorem}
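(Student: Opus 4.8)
The plan is to follow the two-step geometric reduction already begun in the sketch. Claim~\ref{clm:concave} and Lemma~\ref{lem:main} are proved above, so I take them for granted: for every feasible comparator $x_T^\star\in\mathcal{C}$ the residual $r_t=v_t+\nabla f_t(x_t)$ of the velocity projection satisfies $r_t^\T(x_t-x_T^\star)\leq0$, and $\alpha(x_T^\star-x_t)\in V_\alpha(x_t)$.

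\textbf{Regret.} First I would use convexity of $f_t$ to pass from values to gradients, $\sum_t f_t(x_t)-f_t(x_T^\star)\leq\sum_t\nabla f_t(x_t)^\T(x_t-x_T^\star)$, and then, using the update $x_{t+1}=x_t+\eta_tv_t$ together with $v_t=r_t-\nabla f_t(x_t)$, expand $\|x_{t+1}-x_T^\star\|^2$ to obtain the exact identity \eqref{eq:OGD_regret}; dropping the nonpositive term $r_t^\T(x_t-x_T^\star)$ via Lemma~\ref{lem:main} leaves two sums. For the first I need the uniform velocity bound, which I would prove as Lemma~\ref{lem:velocity}: since $v_t$ is the Euclidean projection of $-\nabla f_t(x_t)$ onto $V_\alpha(x_t)$ and $\alpha(x_T^\star-x_t)\in V_\alpha(x_t)$ by Claim~\ref{clm:concave}, nonexpansiveness gives $\|v_t+\nabla f_t(x_t)\|\leq\|\alpha(x_T^\star-x_t)+\nabla f_t(x_t)\|$, hence $\|v_t\|\leq\alpha\|x_T^\star-x_t\|+2\|\nabla f_t(x_t)\|$; with $x_t,x_T^\star\in\mathcal{B}_R$, $\|\nabla f_t\|\leq L_\mathcal{F}$ and $\alpha=L_\mathcal{F}/R$ this yields $\|v_t\|\leq4L_\mathcal{F}=:\mathcal{V}_\alpha$, so $\sum_t\tfrac{\eta_t}{2}\|v_t\|^2\leq\tfrac{\mathcal{V}_\alpha^2}{2\alpha}\sum_{t\leq T}t^{-1/2}\leq16L_\mathcal{F}R\sqrt{T}$. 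For the second sum, a standard Abel-summation telescoping argument using that $\eta_t$ is nonincreasing and $\|x_t-x_T^\star\|^2\leq4R^2$ gives $\leq2R^2/\eta_T=2L_\mathcal{F}R\sqrt{T}$. Adding the two gives the claimed $18L_\mathcal{F}R\sqrt{T}$.

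\textbf{Feasibility.} This is the delicate part and proceeds in two stages. First, a one-step estimate (Claim~\ref{clm:const_violation}): $\beta_\mathcal{G}$-smoothness of $g_i$ along the step $x_{t+1}=x_t+\eta_tv_t$ gives $g_i(x_{t+1})\geq g_i(x_t)+\eta_t\nabla g_i(x_t)^\T v_t-\tfrac{\beta_\mathcal{G}}{2}\eta_t^2\|v_t\|^2$; for $i\in I(x_t)$ the velocity-polyhedron constraint $\nabla g_i(x_t)^\T v_t\geq-\alpha g_i(x_t)$ turns this into $g_i(x_{t+1})\geq(1-\alpha\eta_t)g_i(x_t)-\eta_t^2\mathcal{V}_\alpha^2\beta_\mathcal{G}$ (note $\alpha\eta_t=1/\sqrt{t}\leq1$, so $1-\alpha\eta_t\geq0$), while for $i\notin I(x_t)$ we have $g_i(x_t)>0$ and bound $\nabla g_i(x_t)^\T v_t\geq-L_\mathcal{G}\mathcal{V}_\alpha$, then absorb $\eta_t$-versus-$\eta_{t+1}$ comparisons to get $g_i(x_{t+1})\geq-\eta_{t+1}\mathcal{V}_\alpha[2L_\mathcal{G}+\mathcal{V}_\alpha\beta_\mathcal{G}/\alpha]$. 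Second, I run an induction on $t$ (Lemma~\ref{lem:feasibility_convergence}) with hypothesis $g_i(x_t)\geq-c\eta_t$ for all $i$, where $c=2\mathcal{V}_\alpha(L_\mathcal{G}+\mathcal{V}_\alpha\beta_\mathcal{G}/\alpha)$: the base case $t=1$ follows from concavity of $g_i$ and nonemptiness of $\mathcal{C}$, which give $g_i(x_1)\geq-2L_\mathcal{G}R$, together with the numerical check $c/\alpha=8RL_\mathcal{G}+32R^2\beta_\mathcal{G}\geq2L_\mathcal{G}R$; the inductive step splits into the two cases of Claim~\ref{clm:const_violation} and reduces, after using $\eta_t-\eta_{t+1}\leq\tfrac{1}{2\alpha}t^{-3/2}$ and $\eta_t^2=1/(\alpha^2t)$, to the algebraic inequalities $c\geq2\mathcal{V}_\alpha^2\beta_\mathcal{G}/\alpha$ (case $i\in I(x_t)$) and $c\geq\mathcal{V}_\alpha(2L_\mathcal{G}+\mathcal{V}_\alpha\beta_\mathcal{G}/\alpha)$ (case $i\notin I(x_t)$), both immediate from the definition of $c$. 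Substituting $\mathcal{V}_\alpha=4L_\mathcal{F}$ and $\alpha=L_\mathcal{F}/R$ (so $\mathcal{V}_\alpha/\alpha=4R$) converts $-c\eta_t$ into the stated $-8\big[\tfrac{L_\mathcal{G}}{R}+4\beta_\mathcal{G}\big]\tfrac{R^2}{\sqrt{t}}$.

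\textbf{Main obstacle.} I expect the feasibility induction to be the crux: one must keep separate the currently violated constraints ($i\in I(x_t)$, where the velocity polyhedron actively contracts $g_i$ upward by the factor $1-\alpha\eta_t$) and the currently satisfied ones ($i\notin I(x_t)$, which may drift downward by $O(\eta_{t+1})$), and fix a single constant $c$ for which the worse of the two one-step recursions still closes under the nonincreasing step sizes $\eta_t=1/(\alpha\sqrt{t})$. Matching the $\eta_t$-versus-$\eta_{t+1}$ discrepancy and the base case inside that same constant $c$ — rather than merely recovering the asymptotic $1/\sqrt{t}$ rate — is the part requiring genuine care; the regret half, once Lemma~\ref{lem:main} and the velocity bound are in hand, is essentially the textbook online-gradient-descent telescoping.
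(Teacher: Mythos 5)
Your proposal follows the paper's own proof almost step for step: the regret half (convexity, expansion of $\lVert x_{t+1}-x_T^{\star}\rVert^{2}$, dropping $r_t^{\T}(x_t-x_T^{\star})\le 0$ via Lemma~\ref{lem:main}, the nonexpansiveness bound $\lVert v_t\rVert\le 4L_{\mathcal{F}}$, and the Abel/telescoping step) is exactly Lemma~\ref{lem:OGD} with $c=1$, $d=0$, and your arithmetic ($16+2=18$) reproduces the stated $18L_{\mathcal{F}}R\sqrt{T}$; the feasibility half has the same two-case one-step estimate (Claim~\ref{clm:const_violation}) and the same induction (Lemma~\ref{lem:feasibility_convergence}).

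The one place where the proposal does not deliver the statement is the feasibility constant. When you restate the one-step bound for $i\in I(x_t)$ you drop the factor $\tfrac12$ coming from $\beta_{\mathcal{G}}$-smoothness (you write $-\eta_t^{2}\mathcal{V}_{\alpha}^{2}\beta_{\mathcal{G}}$ instead of $-\eta_t^{2}\mathcal{V}_{\alpha}^{2}\beta_{\mathcal{G}}/2$, even though your own smoothness display has the $\tfrac12$). With that looser recursion your induction forces $c\ge 2\mathcal{V}_{\alpha}^{2}\beta_{\mathcal{G}}/\alpha$, hence $c=2\mathcal{V}_{\alpha}(L_{\mathcal{G}}+\mathcal{V}_{\alpha}\beta_{\mathcal{G}}/\alpha)$, which converts to $g_i(x_t)\ge-8\big[\tfrac{L_{\mathcal{G}}}{R}+4\beta_{\mathcal{G}}\big]\tfrac{R^{2}}{\sqrt{t}}$. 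That is strictly weaker than the theorem's $-8\big[\tfrac{L_{\mathcal{G}}}{R}+2\beta_{\mathcal{G}}\big]\tfrac{R^{2}}{\sqrt{t}}$ (the $4\beta_{\mathcal{G}}$ figure you call ``the stated'' bound matches the sketch in Section~\ref{subsec:thm:main}, not the theorem; the appendix proof is the one that attains $2\beta_{\mathcal{G}}$). The fix is immediate: keep the $\tfrac12$ in case $i\in I(x_t)$; then, with your estimate $\tfrac{\eta_t-\eta_{t+1}}{\eta_t^{2}}\le\tfrac{\alpha}{2}$, that case only requires $c\ge\mathcal{V}_{\alpha}^{2}\beta_{\mathcal{G}}/\alpha$, so the induction closes with the smaller constant $c=\mathcal{V}_{\alpha}\big(2L_{\mathcal{G}}+\mathcal{V}_{\alpha}\beta_{\mathcal{G}}/\alpha\big)=8L_{\mathcal{F}}\big(L_{\mathcal{G}}+2R\beta_{\mathcal{G}}\big)$, exactly as in Lemma~\ref{lem:feasibility_convergence} with $d=0$, giving the claimed bound. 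Everything else, including your base case via concavity alone (slightly simpler than the paper's, which also invokes smoothness), is sound.
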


The rest of this section is devoted to proving the preceding statement.
 
\subsection{Structural Properties}\label{app:structural_propertires}

\begin{lemma}\label{lem:velocity} 
    Suppose $g_{i}$ is concave for every $i\in\{1,\dots,m\}$.
    Then, for any $\alpha>0$ and all $x\in\mathcal{C}$ the following holds
    \[
    \max_{t\geq0}\lVert v_{t}\rVert\leq\alpha\lVert x - x_{t}\rVert 
    + 2\lVert\nabla f_{t}(x_{t})\rVert.
    \]
    In particular, when $f_t$ satisfies $\Vert \nabla f_{t}(z) \Vert \leq L_{\mathcal{F}}$ for all $z\in\mathcal{B}_{cR}$, it follows that
    $\lVert v_{t}\rVert\leq(c+1)\alpha R+2L_{\mathcal{F}}$ for any $x\in\mathcal{B}_{R}$, $x_{t}\in\mathcal{B}_{cR}$, and $c>0$.
\end{lemma}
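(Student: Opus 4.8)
\textbf{Proof plan for Lemma~\ref{lem:velocity}.}

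The plan is to exploit the fact that $v_t$ is the Euclidean projection of $-\nabla f_t(x_t)$ onto the polyhedron $V_\alpha(x_t)$, together with Claim~\ref{clm:concave}, which guarantees that the specific vector $w:=\alpha(x-x_t)$ lies in $V_\alpha(x_t)$ for every $x\in\mathcal{C}$. Since projections onto a convex set are nonexpansive relative to any point of the set, and the projection of the origin of the shifted problem is the closest feasible point to $-\nabla f_t(x_t)$, I would first observe that $\lVert v_t + \nabla f_t(x_t)\rVert \le \lVert w + \nabla f_t(x_t)\rVert$ because $w\in V_\alpha(x_t)$ and $v_t$ is the minimizer of $\tfrac12\lVert v+\nabla f_t(x_t)\rVert^2$ over $V_\alpha(x_t)$. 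That is the one genuine inequality in the argument.

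From there the rest is the triangle inequality. I would write
\[
\lVert v_t\rVert \le \lVert v_t + \nabla f_t(x_t)\rVert + \lVert \nabla f_t(x_t)\rVert \le \lVert w + \nabla f_t(x_t)\rVert + \lVert \nabla f_t(x_t)\rVert \le \lVert w\rVert + 2\lVert \nabla f_t(x_t)\rVert,
\]
and then substitute $\lVert w\rVert = \alpha\lVert x - x_t\rVert$ to obtain the first claimed bound. Taking the supremum over $t\ge0$ (noting that at $t=0$ the polyhedron is all of $\mathbb{R}^n$ so $v_0 = -\nabla f_0(x_0)$ and the bound holds trivially, or simply that the bound is uniform in $t$) gives the stated inequality.

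For the second assertion, I would simply plug in the hypotheses: if $\lVert\nabla f_t(z)\rVert\le L_\mathcal{F}$ for all $z\in\mathcal{B}_{cR}$ and $x_t\in\mathcal{B}_{cR}$, then $\lVert\nabla f_t(x_t)\rVert\le L_\mathcal{F}$; and if $x\in\mathcal{B}_R$ while $x_t\in\mathcal{B}_{cR}$, then $\lVert x - x_t\rVert \le \lVert x\rVert + \lVert x_t\rVert \le R + cR = (c+1)R$. Combining these with the first bound yields $\lVert v_t\rVert \le (c+1)\alpha R + 2L_\mathcal{F}$. There is no real obstacle here; the only point requiring care is making sure the projection-optimality step is invoked correctly — i.e., that $V_\alpha(x_t)$ is nonempty (which Claim~\ref{clm:concave} supplies via $w\in V_\alpha(x_t)$) so that the projection $v_t$ is well defined, and that the comparison is made against a feasible competitor. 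Everything else is elementary norm manipulation.
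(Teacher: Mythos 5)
Your proof is correct and is essentially the paper's own argument: both use the optimality of $v_{t}$ in the velocity projection problem against the feasible competitor $\alpha(x-x_{t})\in V_{\alpha}(x_{t})$ supplied by Claim~\ref{clm:concave}, followed by the triangle inequality and the bounds $\lVert x-x_{t}\rVert\leq(c+1)R$, $\lVert\nabla f_{t}(x_{t})\rVert\leq L_{\mathcal{F}}$. No gaps.
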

\begin{proof}
    By Claim \ref{clm:concave}, we have $\alpha(x-x_{t})\in V_{\alpha}(x_{t})$ for every $x\in\mathcal{C}$. 
    Combining the triangle inequality with the fact that $v_{t}$ is an optimal solution of the velocity projection problem in Step~\ref{eq:velocity_projection}, yields
    \begin{eqnarray*}
        \lVert v_{t}\rVert-\lVert\nabla f_{t}(x_{t})\rVert & \leq & \lVert v_{t}+\nabla f_{t}(x_{t})\rVert\\
        & \leq & \lVert\alpha(x - x_{t})+\nabla f_{t}(x_{t})\rVert\\
        & \leq & \alpha\lVert x - x_{t}\rVert+\lVert\nabla f_{t}(x_{t})\rVert.
    \end{eqnarray*}
    Using $x\in\mathcal{B}_{R}$, $x_{t}\in\mathcal{B}_{cR}$ and $\lVert\nabla f_t(x_t)\rVert\leq L_{\mathcal{F}}$, we conclude
    \[
    \lVert v_{t}\rVert\leq\alpha\lVert x-x_{t}\rVert+2\lVert\nabla f_{t}(x_{t})\rVert\leq(c+1)\alpha R+2L_{\mathcal{F}}.
    \]
\end{proof}

\subsection{Cost Regret}
\begin{lemma}\label{lem:OGD}
    Suppose Assumption~\ref{as:fg} holds and $x_{t}\in\mathcal{B}_{cR}$ for all $t\in\{1,\dots,T\}$ with $c\in(0,4]$.
    Let $d\geq0$ be a constant.
    Then, Algorithm~\ref{alg:cvv-pro} applied with $\alpha=L_{\mathcal{F}}/R$ and step sizes $\eta_{t}=\frac{1}{\alpha\sqrt{t+d}}$, guarantees the following for all $T\geq1$: 
    \[
    R_{T}=\sum_{t=1}^{T}f_{t}(x_{t})-\min_{x^{\star}\in\mathcal{C}}\sum_{t=1}^{T}f_{t}(x^{\star})\leq\sqrt{d+1}\left[(c+3)^{2}+\frac{1}{2}(c+1)^{2}\right]L_{\mathcal{F}}R\sqrt{T}.
    \]
    In particular, for $c=1$ and $d=0$ we have $R_{T}\leq 18L_{\mathcal{F}}R\sqrt{T}$.
\end{lemma}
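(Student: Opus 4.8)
The plan is to combine the geometric property $r_t^\T(x_t - x^\star)\leq 0$ from Lemma \ref{lem:main} with a standard online-gradient-descent telescoping argument, where the key identity is \eqref{eq:OGD_regret}. First I would fix an arbitrary $x^\star\in\mathcal{C}$ and, using convexity of each $f_t\in\mathcal{F}$, bound $f_t(x_t)-f_t(x^\star)\leq [\nabla f_t(x_t)]^\T(x_t-x^\star)$. Then, from the update rule $x_{t+1}=x_t+\eta_t v_t$ with $v_t = r_t - \nabla f_t(x_t)$, expand $\|x_{t+1}-x^\star\|^2 = \|x_t-x^\star\|^2 + 2\eta_t v_t^\T(x_t-x^\star) + \eta_t^2\|v_t\|^2$; solving for $v_t^\T(x_t-x^\star)$ and substituting $v_t = r_t-\nabla f_t(x_t)$ gives exactly the identity in \eqref{eq:OGD_regret}. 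The term $r_t^\T(x_t-x^\star)\leq 0$ is discarded via Lemma \ref{lem:main} (applicable since $x^\star\in\mathcal{C}$ and the $g_i$ are concave by Assumption \ref{as:fg}), so that $[\nabla f_t(x_t)]^\T(x_t-x^\star)\leq \tfrac{\eta_t}{2}\|v_t\|^2 + \tfrac{1}{2\eta_t}\big(\|x_t-x^\star\|^2 - \|x_{t+1}-x^\star\|^2\big)$.

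Next I would sum over $t=1,\dots,T$. For the quadratic term I invoke Lemma \ref{lem:velocity}: since $x^\star\in\mathcal{C}\subseteq\mathcal{B}_R$ and $x_t\in\mathcal{B}_{cR}$, we have $\|v_t\|\leq (c+1)\alpha R + 2L_\mathcal{F} = (c+3)L_\mathcal{F}$ using $\alpha = L_\mathcal{F}/R$; call this bound $\mathcal{V}$. Hence $\sum_{t=1}^T \tfrac{\eta_t}{2}\|v_t\|^2 \leq \tfrac{\mathcal{V}^2}{2}\sum_{t=1}^T \eta_t$. For the telescoping part, the standard rearrangement (Abel summation) yields $\sum_{t=1}^T \tfrac{1}{2\eta_t}(\|x_t-x^\star\|^2 - \|x_{t+1}-x^\star\|^2) \leq \tfrac{\|x_1-x^\star\|^2}{2\eta_1} + \sum_{t=2}^T \tfrac{1}{2}\big(\tfrac{1}{\eta_t}-\tfrac{1}{\eta_{t-1}}\big)\|x_t-x^\star\|^2$; bounding each $\|x_t-x^\star\|^2\leq ((c+1)R)^2$ (both points in $\mathcal{B}_{cR}\cup\mathcal{B}_R$, and $c\geq 1$ WLOG makes $(c+1)R$ the diameter bound), this collapses to $\leq \tfrac{(c+1)^2R^2}{2\eta_T}$ because the increments telescope and $\eta_t$ is decreasing.

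It then remains to plug in $\eta_t = \tfrac{1}{\alpha\sqrt{t+d}}$. One has $\tfrac{1}{\eta_T} = \alpha\sqrt{T+d}\leq \alpha\sqrt{d+1}\sqrt{T}$ (using $T+d\leq (d+1)T$ for $T\geq 1$) and $\sum_{t=1}^T\eta_t = \tfrac{1}{\alpha}\sum_{t=1}^T \tfrac{1}{\sqrt{t+d}}\leq \tfrac{1}{\alpha}\sum_{t=1}^T\tfrac{1}{\sqrt{t}}\leq \tfrac{2\sqrt{T}}{\alpha}\leq \tfrac{2\sqrt{d+1}\sqrt{T}}{\alpha}$. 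Combining, with $\alpha = L_\mathcal{F}/R$, gives
\[
R_T \leq \sqrt{d+1}\,L_\mathcal{F} R\sqrt{T}\left[(c+3)^2 + \tfrac{1}{2}(c+1)^2\right],
\]
which is the claimed bound; setting $c=1$, $d=0$ gives $16 + \tfrac{1}{2}\cdot 4 = 18$, so $R_T\leq 18 L_\mathcal{F} R\sqrt{T}$. The main obstacle I anticipate is getting the bookkeeping in the Abel-summation step exactly right — in particular, being careful that $\|x_t - x^\star\|$ is bounded by the diameter of the ball containing both iterates and comparator (hence the $(c+1)R$ rather than $2cR$ or $2R$), and confirming that the constant $\tfrac{1}{2}$ in front of $(c+1)^2$ tracks correctly through the $\sum \eta_t\|v_t\|^2$ versus the telescoping term without an extra factor of $2$ creeping in. The rest is routine.
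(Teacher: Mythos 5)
Your proposal is correct and follows essentially the same route as the paper's proof: convexity, the expansion of $\lVert x_{t+1}-x^{\star}\rVert^{2}$, discarding $r_{t}^{\T}(x_{t}-x^{\star})\leq0$ via Lemma~\ref{lem:main}, the velocity bound $\mathcal{V}_{\alpha}=(c+3)L_{\mathcal{F}}$ from Lemma~\ref{lem:velocity}, and the telescoping step (your Abel summation is exactly the paper's Claim~\ref{clm:series} with $A\leq(c+1)^{2}R^{2}$, which follows from the triangle inequality alone, no ``$c\geq1$ WLOG'' needed). The only cosmetic difference is that you bound $\sum_{t}\eta_{t}\leq 2\sqrt{T}/\alpha$ directly rather than by $2\sqrt{T+d}/\alpha$, which yields the same final constant.
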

\begin{proof}
    We denote an optimal decision in hindsight by $x^{\star}\in\argmin_{x\in\mathcal{C}}\sum_{t=1}^{T}f_{t}(x)$.
    For any points $x^{\star}$, $x_t$ we have $f_{t}(x_{t})-f_{t}(x^{\star})\leq[\nabla f_{t}(x_{t})]^{\T}(x_{t}-x^{\star})$, since $f_t$ is convex. 
    Summing over the number of rounds $t$ results in
    \[
    \sum_{t=1}^{T}f_{t}(x_{t})-f_{t}(x^{\star}) \leq \sum_{t=1}^{T} [\nabla f_{t}(x_{t})]^{\T}(x_{t}-x^{\star}).
    \]
    We proceed by upper bounding the expression $[\nabla f_{t}(x_{t})]^{\T}(x_{t}-x^{\star})$.
    Using $x_{t+1}=x_{t}+\eta_{t}v_{t}$ and $v_{t}=r_{t}-\nabla f_{t}(x_{t})$, we have
    \begin{eqnarray*}
        \lVert x_{t+1}-x^{\star}\rVert^{2}&=&\lVert x_{t}+\eta_{t}\left(r_{t}-\nabla f_{t}(x_{t})\right)-x^{\star}\rVert^{2}\\&=&\lVert x_{t}-x^{\star}\rVert^{2}+\eta_{t}^{2}\lVert r_{t}-\nabla f_{t}(x_{t})\rVert^{2}+2\eta_{t}\left[r_{t}-\nabla f_{t}(x_{t})\right]^{\T}(x_{t}-x^{\star}).
    \end{eqnarray*}
    Then, Lemma~\ref{lem:main} gives $r_{t}^{\T}(x_{t}-x^{\star})\leq0$ and thus
    \begin{eqnarray*}
        [\nabla f_{t}(x_{t})]^{\T}(x_{t}-x^{\star}) & = & r_{t}^{\T}(x_{t}-x^{\star})+\frac{\lVert x_{t}-x^{\star}\rVert^{2}-\lVert x_{t+1}-x^{\star}\rVert^{2}}{2\eta_{t}}+\frac{\eta_{t}}{2}\lVert v_{t}\rVert^{2}\\
        \\
        & \leq & \frac{\lVert x_{t}-x^{\star}\rVert^{2}-\lVert x_{t+1}-x^{\star}\rVert^{2}}{2\eta_{t}}+\frac{\eta_{t}}{2}\lVert v_{t}\rVert^{2}.
    \end{eqnarray*}

    Since $x^{\star}\in\mathcal{B}_{R}$ and $x_{t}\in\mathcal{B}_{cR}$ for all $t\in\{1,\dots,T\}$, by
    Lemma \ref{lem:velocity} it follows for all $t\in\{1,\dots,T\}$ that 
    \begin{equation}\label{eq:velocity}
	\lVert v_{t}\rVert\leq(c+1)\alpha R+2L_{\mathcal{F}}=(c+3)L_{\mathcal{F}}=:\mathcal{V}_{\alpha}.
    \end{equation}
    Summing over the whole sequence, using the fact that $\eta_{t}=\frac{1}{\alpha\sqrt{t+d}}$ is a decreasing positive sequence and applying Claim~\ref{clm:series}, $x^{\star}\in\mathcal{B}_{R}$, $x_{t}\in\mathcal{B}_{cR}$, and \eqref{eq:velocity}, yields
    \begin{eqnarray*}
        2\sum_{t=1}^{T}[\nabla f_{t}(x_{t})]^{\T}(x_{t}-x^{\star})&\leq&\sum_{t=1}^{T}\frac{\lVert x_{t}-x^{\star}\rVert^{2}-\lVert x_{t+1}-x^{\star}\rVert^{2}}{\eta_{t}}+\eta_{t}\lVert v_{t}\rVert^{2}\\&\leq&\mathcal{V}_{\alpha}^{2}\left(\sum_{t=1}^{T}\eta_{t}\right)+\frac{(c+1)^{2}R^{2}}{\eta_{T}}\\&\leq&(c+3)^{2}L_{\mathcal{F}}^{2}\frac{2}{\alpha}\sqrt{T+d}+(c+1)^{2}L_{\mathcal{F}}R\sqrt{T+d}\\&=&\left[2(c+3)^{2}+(c+1)^{2}\right]L_{\mathcal{F}}R\sqrt{T+d},
    \end{eqnarray*}
    where last inequality uses
    \[
    \sum_{t=1}^{T}\eta_{t}=\frac{1}{\alpha}\sum_{t=1}^{T}\frac{1}{\sqrt{t+d}}<\frac{1}{\alpha}\sum_{t=1}^{T+d}\frac{1}{\sqrt{t}}\leq\frac{2}{\alpha}\sqrt{T+d}.
    \]
    The statement follows by combining the fact that $\sqrt{T+d}\leq\sqrt{d+1}\sqrt{T}$ for any $d\geq0$ and all $T\geq1$, and
    \[
    \sum_{t=1}^{T}f_{t}(x_{t})-f_{t}(x^{\star})\leq\sum_{t=1}^{T}[\nabla f_{t}(x_{t})]^{\T}(x_{t}-x^{\star})\leq\sqrt{d+1}\left[(c+3)^{2}+\frac{1}{2}(c+1)^{2}\right]L_{\mathcal{F}}R\sqrt{T}.
    \]
\end{proof}

\begin{claim}[Series] \label{clm:series} 
    For any positive sequence $\{a_{t}\}_{t=1}^{T+1}$ and
    any decreasing positive sequence $\{\eta_{t}\}_{t=1}^{T}$, it holds that
    \[
        \sum_{t=1}^{T}\frac{a_{t}-a_{t+1}}{\eta_{t}}\leq\frac{A}{\eta_{T}},\quad\text{where}\mbox{\quad}A:=\max_{t=\{1,\dots,T\}}a_{t}.
    \]
\end{claim}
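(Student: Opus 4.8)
The plan is to use Abel summation (summation by parts) and then exploit the monotonicity of $\{\eta_t\}$. First I would split the sum as
\[
\sum_{t=1}^{T}\frac{a_{t}-a_{t+1}}{\eta_{t}}=\sum_{t=1}^{T}\frac{a_{t}}{\eta_{t}}-\sum_{t=1}^{T}\frac{a_{t+1}}{\eta_{t}},
\]
and reindex the second sum by setting $s=t+1$, so that $\sum_{t=1}^{T}\frac{a_{t+1}}{\eta_{t}}=\sum_{s=2}^{T+1}\frac{a_{s}}{\eta_{s-1}}$. Combining the two sums and isolating the boundary terms $t=1$ and $t=T+1$ yields
\[
\sum_{t=1}^{T}\frac{a_{t}-a_{t+1}}{\eta_{t}}=\frac{a_{1}}{\eta_{1}}+\sum_{t=2}^{T}a_{t}\Bigl(\frac{1}{\eta_{t}}-\frac{1}{\eta_{t-1}}\Bigr)-\frac{a_{T+1}}{\eta_{T}}.
\]

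Next I would use the hypotheses. Since $\{\eta_{t}\}$ is a decreasing positive sequence, $1/\eta_{t}$ is increasing, so each coefficient $\frac{1}{\eta_{t}}-\frac{1}{\eta_{t-1}}$ is nonnegative; hence I may bound $a_{t}\le A$ in each of those terms and also $a_{1}\le A$ (note that $A$ is the maximum over $t\in\{1,\dots,T\}$, which includes the index $1$ but not $T+1$, which is exactly what is needed). Discarding the term $-a_{T+1}/\eta_{T}\le 0$ because $a_{T+1}>0$, I obtain
\[
\sum_{t=1}^{T}\frac{a_{t}-a_{t+1}}{\eta_{t}}\le A\,\frac{1}{\eta_{1}}+A\sum_{t=2}^{T}\Bigl(\frac{1}{\eta_{t}}-\frac{1}{\eta_{t-1}}\Bigr)=A\Bigl[\frac{1}{\eta_{1}}+\frac{1}{\eta_{T}}-\frac{1}{\eta_{1}}\Bigr]=\frac{A}{\eta_{T}},
\]
where the middle sum telescopes. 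This is the claimed inequality.

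There is no real obstacle here; the only points requiring a little care are (i) performing the reindexing of the shifted sum correctly and keeping track of the two boundary terms, and (ii) observing that $a_{T+1}$ is not needed in the definition of $A$ precisely because its term appears with a negative sign and can simply be dropped. Everything else is the standard telescoping cancellation.
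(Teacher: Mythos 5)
Your proposal is correct and follows essentially the same route as the paper: rearranging the sum (summation by parts) into the form $\frac{a_1}{\eta_1}-\frac{a_{T+1}}{\eta_T}+\sum_{t=2}^{T}a_t\bigl(\frac{1}{\eta_t}-\frac{1}{\eta_{t-1}}\bigr)$, bounding $a_t\le A$ using the nonnegativity of the coefficients, telescoping, and discarding the nonpositive term $-a_{T+1}/\eta_T$. The only difference is cosmetic bookkeeping of where the $a_1/\eta_1$ term is absorbed.
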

\begin{proof}
    Observe that
    \begin{eqnarray*}
        \sum_{t=1}^{T}\frac{a_{t}-a_{t+1}}{\eta_{t}}&=&\frac{a_{1}-a_{2}}{\eta_{1}}+\frac{a_{2}-a_{3}}{\eta_{2}}+\frac{a_{3}-a_{4}}{\eta_{3}}+\cdots+\frac{a_{T}-a_{T+1}}{\eta_{T}}\\&=&\frac{a_{1}}{\eta_{1}}-\frac{a_{T+1}}{\eta_{T}}+\sum_{i=2}^{\T}a_{i}\left(\frac{1}{\eta_{i}}-\frac{1}{\eta_{i-1}}\right)\\&\leq&\frac{A}{\eta_{T}},
    \end{eqnarray*}
    where the last inequality follows by
    \[
    \sum_{i=2}^{\T}a_{i}\left(\frac{1}{\eta_{i}}-\frac{1}{\eta_{i-1}}\right)\leq A\sum_{i=2}^{\T}\left(\frac{1}{\eta_{i}}-\frac{1}{\eta_{i-1}}\right)=A\left(\frac{1}{\eta_{T}}-\frac{1}{\eta_{1}}\right)\leq\frac{A}{\eta_{T}}-\frac{a_{1}}{\eta_{1}}.
    \]
\end{proof}

\subsection{Convergence Rate of Constraint Violations}

\begin{lemma}[Convergence Rate of Constraint Violations]\label{lem:feasibility_convergence}
    Suppose Assumption~\ref{as:fg} holds and $\{x_t\}_{t\geq1}\in\mathcal{B}_{cR}$ with $x_1\in\mathcal{B}_{R}$ and $c\in(0,4]$. 
    Then, for any $\alpha>0$ and $d\geq0$, step sizes $\eta_t=1/(\alpha\sqrt{t+d})$ and $\mathcal{V}_{\alpha}>0$ such that $\Vert v_{t}\Vert\leq\mathcal{V_{\alpha}}$ for all $t\geq1$, it follows for every $i\in\{1,\dots,m\}$ and $t\geq1$ that 
    \[
    g_{i}(x_{t}) \geq -c_{1}\eta_{t},
    \]
    where
    \[    c_{1}=\mathcal{V}_{\alpha}\left[2L_{\mathcal{G}}+\frac{\beta_{G}\mathcal{V}_{\alpha}}{\alpha}\right]+\mathcal{Z}_{d}\quad\text{and}\quad \mathcal{Z}_{d}=\left(1-\frac{1}{\sqrt{d+1}}\right)\sqrt{d+2}\left[\frac{L_{\mathcal{G}}}{R}+\beta_{\mathcal{G}}\right]2\alpha R^{2}.
    \]
    In particular, when Assumption~\ref{as:fg} holds, $\{x_t\}_{t\geq1}\in\mathcal{B}_{R}$, $\alpha=L_{\mathcal{F}}/R$ and $d=0$, it follows that
    \[
        g_{i}(x_{t})\geq-8\left[\frac{L_{\mathcal{G}}}{R}+2\beta_{G}\right]\frac{R^{2}}{\sqrt{t}}\quad\text{for all}\,t\geq1.
    \]
\end{lemma}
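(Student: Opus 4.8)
The plan is to establish $g_i(x_t)\geq -c_1\eta_t$ for all indices $i$ simultaneously by induction on $t$, driven by the one-step estimates of Claim~\ref{clm:const_violation}: for $i\in I(x_t)$ one has $g_i(x_{t+1})\geq(1-\alpha\eta_t)g_i(x_t)-\eta_t^2\mathcal{V}_\alpha^2\beta_{\mathcal{G}}$, and for $i\notin I(x_t)$ one has $g_i(x_{t+1})\geq-\eta_{t+1}\mathcal{V}_\alpha[2L_{\mathcal{G}}+\mathcal{V}_\alpha\beta_{\mathcal{G}}/\alpha]$. The inductive invariant is ``$g_j(x_t)\geq -c_1\eta_t$ for every $j\in\{1,\dots,m\}$'', and each step re-derives it at $x_{t+1}$ from its validity at $x_t$.

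For the base case $t=1$ only indices $i\in I(x_1)$ matter (for $i\notin I(x_1)$ we have $g_i(x_1)>0\geq-c_1\eta_1$). Using concavity of $g_i$ (Claim~\ref{clm:concave}), the non-emptiness of $\mathcal{C}$ and $\mathcal{C}\subseteq\mathcal{B}_R$ from Assumption~\ref{as:fg}, I would pick any $x^\star\in\mathcal{C}$ and bound $g_i(x_1)\geq g_i(x^\star)-\|\nabla g_i(x_1)\|\,\|x^\star-x_1\|\geq-2RL_{\mathcal{G}}$, since $x_1,x^\star\in\mathcal{B}_R$ and $\|\nabla g_i\|\leq L_{\mathcal{G}}$ on $\mathcal{B}_{4R}$. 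It then remains to check $2RL_{\mathcal{G}}\leq c_1\eta_1=c_1/(\alpha\sqrt{d+1})$, which is exactly what $\mathcal{Z}_d$ is engineered for: since $\sqrt{d+2}\geq\sqrt{d+1}$ we get $\mathcal{Z}_d\geq(1-1/\sqrt{d+1})\sqrt{d+1}\cdot 2\alpha R L_{\mathcal{G}}=(\sqrt{d+1}-1)\,2\alpha R L_{\mathcal{G}}$, and the remaining multiple of $2\alpha R L_{\mathcal{G}}$ is absorbed by the $2\mathcal{V}_\alpha L_{\mathcal{G}}$ contribution to $c_1$ (in the applications $\mathcal{V}_\alpha\geq\alpha R$).

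For the inductive step, the case $i\notin I(x_t)$ is immediate, since the recursion gives $g_i(x_{t+1})\geq-\eta_{t+1}\mathcal{V}_\alpha[2L_{\mathcal{G}}+\mathcal{V}_\alpha\beta_{\mathcal{G}}/\alpha]\geq-c_1\eta_{t+1}$ because $\mathcal{Z}_d\geq 0$. The case $i\in I(x_t)$ is the heart of the argument. Here $g_i(x_t)\leq 0$ and $1-\alpha\eta_t=1-1/\sqrt{t+d}\in[0,1)$, so the right-hand side of the recursion is monotone increasing in $g_i(x_t)$; substituting the inductive hypothesis $g_i(x_t)\geq-c_1\eta_t$ yields $g_i(x_{t+1})\geq-(1-\alpha\eta_t)c_1\eta_t-\eta_t^2\mathcal{V}_\alpha^2\beta_{\mathcal{G}}$. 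I would then show this is $\geq-c_1\eta_{t+1}$; multiplying by $\alpha$ and writing $s=t+d\geq1$, this reduces to the scalar inequality $c_1\big[\frac{1}{\sqrt{s+1}}-\frac{1}{\sqrt{s}}+\frac{1}{s}\big]\geq\mathcal{V}_\alpha^2\beta_{\mathcal{G}}/(\alpha s)$. Since $\frac{1}{\sqrt{s}}-\frac{1}{\sqrt{s+1}}\leq\frac{1}{2s^{3/2}}\leq\frac{1}{2s}$, the bracket is at least $\frac{1}{2s}$, so it suffices that $c_1\geq 2\mathcal{V}_\alpha^2\beta_{\mathcal{G}}/\alpha$, which holds for $c_1$ as defined.

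The specialization is then routine: take $c=1$, $d=0$, $\alpha=L_{\mathcal{F}}/R$, and $\mathcal{V}_\alpha=4L_{\mathcal{F}}$ (valid by Lemma~\ref{lem:velocity} since $x_t\in\mathcal{B}_R$); then $\mathcal{Z}_0=0$, $c_1=4L_{\mathcal{F}}[2L_{\mathcal{G}}+4R\beta_{\mathcal{G}}]$, and $c_1\eta_t=c_1R/(L_{\mathcal{F}}\sqrt{t})=8[L_{\mathcal{G}}/R+2\beta_{\mathcal{G}}]R^2/\sqrt{t}$. The main obstacle is the bookkeeping in the $i\in I(x_t)$ step: the ``contraction gain'' $c_1\alpha\eta_t^2$ produced by the factor $(1-\alpha\eta_t)$ acting on the target $-c_1\eta_t$ must simultaneously pay for the drift of the moving target, $c_1(\eta_t-\eta_{t+1})$, and for the curvature noise $\eta_t^2\mathcal{V}_\alpha^2\beta_{\mathcal{G}}$; the argument closes because $\eta_t-\eta_{t+1}=O\!\big(1/(\alpha(t+d)^{3/2})\big)$ is genuinely lower order than $\alpha\eta_t^2=1/(\alpha(t+d))$, so the invariant is self-sustaining once $c_1$ is taken large enough.
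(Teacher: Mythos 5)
Your route is the same as the paper's: induction on $t$ driven by the one-step estimates of Claim~\ref{clm:const_violation}, with the violated-constraint case closed by playing the contraction gain $c_1\alpha\eta_t^2$ against the step-size drift $\eta_t-\eta_{t+1}$ and the curvature noise. The structure is sound, and your reduction to the scalar inequality $c_1\bigl[\tfrac{1}{\sqrt{s+1}}-\tfrac{1}{\sqrt{s}}+\tfrac{1}{s}\bigr]\geq \mathcal{V}_\alpha^2\beta_{\mathcal{G}}/(\alpha s)$ with the bracket bounded below by $\tfrac{1}{2s}$ is correct. The gap is in the final numerical step: you work with the weakened recursion $g_i(x_{t+1})\geq(1-\alpha\eta_t)g_i(x_t)-\eta_t^2\mathcal{V}_\alpha^2\beta_{\mathcal{G}}$ (the form quoted in the proof sketch of Section~\ref{subsec:thm:main}), which forces the requirement $c_1\geq 2\mathcal{V}_\alpha^2\beta_{\mathcal{G}}/\alpha$; but the stated $c_1=\mathcal{V}_\alpha[2L_{\mathcal{G}}+\beta_{\mathcal{G}}\mathcal{V}_\alpha/\alpha]+\mathcal{Z}_d$ only guarantees $c_1\geq\mathcal{V}_\alpha^2\beta_{\mathcal{G}}/\alpha$ plus nonnegative terms, so ``which holds for $c_1$ as defined'' is false in general (take $L_{\mathcal{G}}$ small, $d=0$ so $\mathcal{Z}_0=0$, and $\beta_{\mathcal{G}}\mathcal{V}_\alpha^2/\alpha$ dominant). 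You also cannot repair this by sharpening the bracket, since $\tfrac{1}{\sqrt{s}}-\tfrac{1}{\sqrt{s+1}}\sim\tfrac{1}{2s}$ as $s\to\infty$, so $\tfrac{1}{2s}$ is asymptotically tight and the requirement with the un-halved noise term really is $2\mathcal{V}_\alpha^2\beta_{\mathcal{G}}/\alpha$. The fix is simply to invoke Claim~\ref{clm:const_violation}~i) as stated in the appendix, with noise $\eta_t^2\mathcal{V}_\alpha^2\beta_{\mathcal{G}}/2$; your identical computation then needs only $c_1\geq\mathcal{V}_\alpha^2\beta_{\mathcal{G}}/\alpha$, which the definition provides, and the rest of your argument (the $i\notin I(x_t)$ case, and the specialization $c=1$, $d=0$, $\mathcal{V}_\alpha=4L_{\mathcal{F}}$ giving $8[L_{\mathcal{G}}/R+2\beta_{\mathcal{G}}]R^2/\sqrt{t}$) goes through exactly as in the paper.

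One further remark: your base case uses $\mathcal{V}_\alpha\geq\alpha R$ to absorb $2L_{\mathcal{G}}R$ into $c_1\eta_1$, which is not among the lemma's hypotheses. You flag this honestly, and the paper's own base case implicitly relies on a comparable relation (its asserted chain $-2L_{\mathcal{G}}R-2\beta_{\mathcal{G}}R^2\geq-c_1\eta_2$ also fails for very small valid $\mathcal{V}_\alpha$), while in every application $\mathcal{V}_\alpha=(c+3)L_{\mathcal{F}}\geq\alpha R$ with $\alpha=L_{\mathcal{F}}/R$, so this is shared looseness of the lemma statement rather than a defect of your argument. The factor-of-two issue above is the only place where your proof as written does not close.
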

\begin{proof}
    The proof is by induction on $t$. 
    We start with the base case $t=1$.
    The proof proceeds by case distinction.
    
    \textbf{Case 1.} Suppose $i\in \{1,\dots,m\}\backslash I(x_{1})$, i.e., $g_{i}(x_{1})>0$. 
    Then, by Claim~\ref{clm:const_violation} Part ii) we have
    \[
    g_{i}(x_{2})\geq-\eta_{2}\mathcal{V}_{\alpha}\left[2L_{\mathcal{G}}+\frac{\mathcal{V}_{\alpha}\beta_{\mathcal{G}}}{\alpha\sqrt{1+d}}\right]\geq-c_{1}\eta_{2}.
    \] 
    \textbf{Case 2.} Suppose $i\in I(x_{1})$, i.e., $g_{i}(x_{1})\leq0$. 
    By combining $x_1\in\mathcal{B}_{R}$ and $g_i$ is concave $\beta_{\mathcal{G}}$-smooth, it follows for every $x\in\mathcal{C}\subseteq\mathcal{B}_{R}$ that
    \begin{eqnarray*}
        g_{i}(x_{1})&\geq&g_{i}(x)+\nabla g_{i}(x)^{T}(x_{1}-x)-\frac{\beta_{\mathcal{G}}}{2}\Vert x_{1}-x\Vert^{2}\\&\geq&-2L_{\mathcal{G}}R-2\beta_{\mathcal{G}}R^{2}\\&=&-\eta_{1}\sqrt{d+1}\left[\frac{L_{\mathcal{G}}}{R}+\beta_{\mathcal{G}}\right]2\alpha R^{2}\geq-c_{1}\eta_{2}.
    \end{eqnarray*}
    Using $\eta_{t}=1/(\alpha\sqrt{t+d})$, $\eta_{1}/\eta_{2}\leq\sqrt{2}$ and $\eta_{1}^{2}\frac{\mathcal{V}_{\alpha}^{2}\beta_{\mathcal{G}}}{2}\leq\eta_{2}^{2}\mathcal{V}_{\alpha}^{2}\beta_{\mathcal{G}}=\eta_{2}\frac{\mathcal{V}_{\alpha}^{2}\beta_{\mathcal{G}}}{\alpha\sqrt{d+2}}$, it follows by Claim \ref{clm:const_violation} Part i) that
    \begin{eqnarray*}
        g_{i}(x_{2})&\geq&(1-\alpha\eta_{1})g_{i}(x_{1})-\eta_{1}^{2}\frac{\mathcal{V}_{\alpha}^{2}\beta_{\mathcal{G}}}{2}\\&\geq&-\eta_{2}\left[\left(1-\frac{1}{\sqrt{d+1}}\right)\sqrt{d+2}\left[\frac{L_{\mathcal{G}}}{R}+\beta_{\mathcal{G}}\right]2\alpha R^{2}+\frac{\mathcal{V}_{\alpha}^{2}\beta_{\mathcal{G}}}{\alpha\sqrt{d+2}}\right]\geq-c_{1}\eta_{2}.
    \end{eqnarray*}
    Our inductive hypothesis is $g_{i}(x_{t})\geq-c_{1}\eta_{t}$ for all $i$.
    We now show that it holds for $t+1$. 
    
    \textbf{Case 1.} Suppose $i\in \{1,\dots,m\}\backslash I(x_{1})$, i.e., $g_{i}(x_{t})>0$.
    Then by Claim~\ref{clm:const_violation} Part ii)
    \[
    g_{i}(x_{t+1})\geq-\eta_{t+1}\mathcal{V}_{\alpha}\left[2L_{\mathcal{G}}+\frac{\beta_{\mathcal{G}}\mathcal{V}_{\alpha}}{\alpha\sqrt{d+1}}\right]\geq-c_{1}\eta_{t+1}.
    \]

    \textbf{Case 2.} Suppose $i\in I(x_{t})$, i.e., $g_{i}(x_{t})\leq0$.
    Combining Claim~\ref{clm:const_violation} Part ii) and the inductive
    hypothesis we have
    \begin{eqnarray*}
        g_{i}(x_{t+1})&\geq&(1-\alpha\eta_{t})g_{i}(x_{t})-\eta_{t}^{2}\frac{\mathcal{V}_{\alpha}^{2}\beta_{\mathcal{G}}}{2}\\&\geq&-c_{1}\eta_{t}+c_{1}\alpha\eta_{t}^{2}-\eta_{t}^{2}\frac{\mathcal{V}_{\alpha}^{2}\beta_{\mathcal{G}}}{2}\\&=&-c_{1}\eta_{t+1}+c_{1}\eta_{t+1}-c_{1}\eta_{t}+c_{1}\alpha\eta_{t}^{2}-\eta_{t}^{2}\frac{\mathcal{V}_{\alpha}^{2}\beta_{\mathcal{G}}}{2}\\&=&-c_{1}\eta_{t+1}+c_{1}\eta_{t}\left[\frac{\eta_{t+1}}{\eta_{t}}-1+\alpha\eta_{t}-\eta_{t}\frac{\mathcal{V}_{\alpha}^{2}\beta_{\mathcal{G}}}{2c_{1}}\right].
    \end{eqnarray*}
    Since $c_{1}\eta_{t}>0$, it suffices to show that 
    \begin{equation}
    \alpha-\frac{\eta_{t}-\eta_{t+1}}{\eta_{t}^{2}}\geq\frac{\mathcal{V}_{\alpha}^{2}\beta_{\mathcal{G}}}{c_{1}}\label{eq:c_lb}
    \end{equation}
    or equivalently (using $\eta_{t}=\frac{1}{\alpha\sqrt{t+d}}$
    for $t\geq1$)
    \[
        \alpha-\alpha\sqrt{\frac{t+d}{t+d+1}}\left(\sqrt{t+d+1}-\sqrt{t+d}\right)\geq\frac{\mathcal{V}_{\alpha}^{2}\beta_{\mathcal{G}}}{2c_{1}}.
    \]
    Straightforward checking shows that $\max_{t\geq1}\sqrt{\frac{t}{t+1}}\left(\sqrt{t+1}-\sqrt{t}\right)<\frac{1}{3}$.
    Hence, inequality \eqref{eq:c_lb} is implied for $c_1\geq\beta_{\mathcal{G}}\mathcal{V}_{\alpha}^{2}/\alpha$ and thus $g_{i}(x_{t+1})\geq-c_1\eta_{t+1}$.
    
    Furthermore, for $c=1$ and $\alpha=L_{\mathcal{F}}/R$, by Lemma~\ref{lem:velocity}, we can set $\mathcal{V}_{\alpha}=4L_{\mathcal{F}}$.
    Then, for $d=0$ we have $g_{i}(x_{t})\geq-8\left[\frac{L_{\mathcal{G}}}{R}+2\beta_{\mathcal{G}}\right]\frac{R^{2}}{\sqrt{t}}$ for all $t\geq1$.
\end{proof}

\begin{claim}[Constraint Violation]\label{clm:const_violation}
    Suppose $g_{i}$ is concave, $\beta_{\mathcal{G}}$-smooth and satisfies $\Vert \nabla g_{i}(x) \Vert \leq L_{\mathcal{G}}$ for all $x\in\mathcal{B}_{cR}$ and $i\in\{1,\dots,m\}$, where $c>0$ is a constant.
    Suppose further that there exists a constant $\mathcal{V}_{\alpha}>0$ such that $x_t\in\mathcal{B}_{cR}$ and $\Vert v_{t}\Vert\leq\mathcal{V_{\alpha}}$, for all $t\geq1$. 
    Then, for all $t\geq1$ we have
    
    i) $g_{i}(x_{t+1})\geq(1-\alpha\eta_{t})g_{i}(x_{t})-\eta_{t}^{2}\mathcal{V}_{\alpha}^{2}\beta_{\mathcal{G}}/2$\quad
    for every $i\in I(x_{t})$;
    
    ii) $g_{i}(x_{t+1})\geq-\eta_{t+1}\mathcal{V}_{\alpha}\left[2L_{\mathcal{G}}+\mathcal{V}_{\alpha}\beta_{\mathcal{G}}/(\alpha\sqrt{1+d})\right]$\quad
    for every $i\in\{1,\dots,m\}\backslash I(x_{t})$.
\end{claim}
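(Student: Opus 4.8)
The plan is to derive both inequalities from a single second-order (descent-type) estimate and then specialize to the two index sets. First I would record the standard consequence of $\beta_{\mathcal{G}}$-smoothness: for any $x,y$ on the convex set $\mathcal{B}_{cR}$ one has $g_{i}(y)\geq g_{i}(x)+[\nabla g_{i}(x)]^{\T}(y-x)-\tfrac{\beta_{\mathcal{G}}}{2}\lVert y-x\rVert^{2}$ (this uses only Lipschitz continuity of $\nabla g_i$, so concavity is not needed for it). Applying this with $x=x_{t}$ and $y=x_{t+1}=x_{t}+\eta_{t}v_{t}$ — both of which lie in $\mathcal{B}_{cR}$ by hypothesis, hence so does the connecting segment — and using $\lVert v_{t}\rVert\leq\mathcal{V}_{\alpha}$ gives the master inequality
\[
g_{i}(x_{t+1})\geq g_{i}(x_{t})+\eta_{t}\,[\nabla g_{i}(x_{t})]^{\T}v_{t}-\tfrac{\beta_{\mathcal{G}}}{2}\eta_{t}^{2}\mathcal{V}_{\alpha}^{2}.
\]

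For part i), with $i\in I(x_{t})$, I would invoke feasibility of $v_{t}$ in the velocity polyhedron: since $v_{t}\in V_{\alpha}(x_{t})$, its defining constraint gives $[\nabla g_{i}(x_{t})]^{\T}v_{t}\geq-\alpha g_{i}(x_{t})$. Substituting into the master inequality yields $g_{i}(x_{t+1})\geq g_{i}(x_{t})-\alpha\eta_{t}g_{i}(x_{t})-\tfrac{\beta_{\mathcal{G}}}{2}\eta_{t}^{2}\mathcal{V}_{\alpha}^{2}=(1-\alpha\eta_{t})g_{i}(x_{t})-\eta_{t}^{2}\mathcal{V}_{\alpha}^{2}\beta_{\mathcal{G}}/2$, which is exactly the claimed bound.

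For part ii), with $i\notin I(x_{t})$ we have $g_{i}(x_{t})>0$, so dropping that nonnegative term and estimating $[\nabla g_{i}(x_{t})]^{\T}v_{t}\geq-\lVert\nabla g_{i}(x_{t})\rVert\,\lVert v_{t}\rVert\geq-L_{\mathcal{G}}\mathcal{V}_{\alpha}$ (Cauchy–Schwarz together with the gradient bound on $\mathcal{B}_{cR}$), the master inequality becomes $g_{i}(x_{t+1})\geq-\eta_{t}L_{\mathcal{G}}\mathcal{V}_{\alpha}-\tfrac{\beta_{\mathcal{G}}}{2}\eta_{t}^{2}\mathcal{V}_{\alpha}^{2}$. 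It then remains to re-express $\eta_{t}$ through $\eta_{t+1}$: with the schedule $\eta_{t}=1/(\alpha\sqrt{t+d})$ inherited from Algorithm~\ref{alg:cvv-pro} (and Lemma~\ref{lem:feasibility_convergence}), one checks the elementary inequalities $\eta_{t}\leq2\eta_{t+1}$ (equivalent to $(t{+}1{+}d)/(t{+}d)\leq4$) and $\eta_{t}^{2}\leq2\eta_{t+1}/(\alpha\sqrt{1+d})$ (equivalent to $(t{+}1{+}d)(1{+}d)\leq4(t{+}d)^{2}$, whose worst case is $t=1$, where it reduces to $2+d\leq4(1+d)$). Plugging these in gives $g_{i}(x_{t+1})\geq-2\eta_{t+1}L_{\mathcal{G}}\mathcal{V}_{\alpha}-\eta_{t+1}\mathcal{V}_{\alpha}^{2}\beta_{\mathcal{G}}/(\alpha\sqrt{1+d})=-\eta_{t+1}\mathcal{V}_{\alpha}[2L_{\mathcal{G}}+\mathcal{V}_{\alpha}\beta_{\mathcal{G}}/(\alpha\sqrt{1+d})]$, as desired.

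The argument is essentially routine once the master inequality is set up; the only mildly delicate points are keeping the monotonicity of the step-size ratios honest — one should verify that $\sqrt{t+1+d}/(t+d)$ is decreasing in $t$ so that checking $t=1$ suffices — and being explicit that the constant $d$ and the schedule $\eta_t=1/(\alpha\sqrt{t+d})$ come from the algorithmic setup, since the statement of the claim leaves them implicit. I do not anticipate a genuine obstacle beyond these bookkeeping issues.
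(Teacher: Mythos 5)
Your proposal is correct and follows essentially the same route as the paper: the descent-type consequence of $\beta_{\mathcal{G}}$-smoothness applied along $x_{t+1}=x_t+\eta_t v_t$, the polyhedron constraint $[\nabla g_i(x_t)]^{\T}v_t\geq-\alpha g_i(x_t)$ for $i\in I(x_t)$, and Cauchy--Schwarz with the gradient bound for $i\notin I(x_t)$. The only difference is cosmetic bookkeeping in part ii) — you bound $\eta_t\leq 2\eta_{t+1}$ and $\eta_t^2\leq 2\eta_{t+1}/(\alpha\sqrt{1+d})$ directly, while the paper uses $\eta_t/\eta_{t+1}\leq\sqrt{2}$ together with $\eta_t\leq\eta_1$ — and both yield the stated constant.
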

\begin{proof}
    The proof proceeds by case distinction.
    
    \textbf{Case 1.} Suppose $i\in I(x_{t})$, i.e., $g_{i}(x_{t})\leq0$.
    By combining the facts that $g_{i}$ is concave and $\beta_{\mathcal{G}}$-smooth, $x_{t+1}=x_{t}+\eta_{t}v_{t}$
    and $[\nabla g_{i}(x_{t})]^{\T}v_{t}\geq-\alpha g_{i}(x_{t})$, it
    follows that
    \begin{eqnarray}\label{eq:LB_gi}
    g_{i}(x_{t+1}) & \geq & g_{i}(x_{t})+[\nabla g_{i}(x_{t})]^{\T}[x_{t+1}-x_{t}]-\frac{\beta_{\mathcal{G}}}{2}\lVert x_{t+1}-x_{t}\rVert_{2}^{2}\nonumber \\
    & \geq & (1-\alpha\eta_{t})g_{i}(x_{t})-\eta_{t}^{2}\mathcal{V}_{\alpha}^{2}\frac{\beta_{\mathcal{G}}}{2}.
    \end{eqnarray}

    \textbf{Case 2.} Suppose $i\not\in I(x_{t})$, i.e., $g_{i}(x_{t})>0$.
    Using $\Vert \nabla g_{i}(x) \Vert \leq L_{\mathcal{G}}$ for $x_t\in\mathcal{B}_{cR}$,
    we have
    \[
    [\nabla g_{i}(x_{t})]^{\T}[x_{t+1}-x_{t}]\leq\lVert\nabla g_{i}(x_{t})\rVert\lVert x_{t+1}-x_{t}\rVert\leq\eta_{t}L_{\mathcal{G}}\mathcal{V}_{\alpha}.
    \]
    Hence,
    \begin{eqnarray*}
        g_{i}(x_{t+1})&\geq&g_{i}(x_{t})+[\nabla g_{i}(x_{t})]^{\T}[x_{t+1}-x_{t}]-\frac{\beta_{\mathcal{G}}}{2}\lVert x_{t+1}-x_{t}\rVert_{2}^{2}\\&\geq&-\eta_{t}L_{\mathcal{G}}\mathcal{V}_{\alpha}-\eta_{t}^{2}\mathcal{V}_{\alpha}^{2}\frac{\beta_{\mathcal{G}}}{2}\\&=&-\eta_{t+1}\frac{\eta_{t}}{\eta_{t+1}}\mathcal{V}_{\alpha}\left[L_{\mathcal{G}}+\frac{\eta_{t}}{2}\mathcal{V}_{\alpha}\beta_{\mathcal{G}}\right]\\&>&-\eta_{t+1}\mathcal{V}_{\alpha}\left[2L_{\mathcal{G}}+\frac{\mathcal{V}_{\alpha}\beta_{\mathcal{G}}}{\alpha\sqrt{1+d}}\right],
    \end{eqnarray*}
    where the last inequality follows by $\eta_{t}\leq\eta_{1}=1/(\alpha\sqrt{1+d})$ and
    \[
        \max_{\ell\geq1}\frac{\eta_{\ell}}{\eta_{\ell+1}}\leq\max_{\ell\geq1}\sqrt{\frac{\ell+1}{\ell}}=\sqrt{2}.
    \]
\end{proof}	

\newpage
\section{Guaranteeing a Bounded Decision Sequence}\label{sec:algorithmic_result}

We now show that the second assumption in Theorem~\ref{thm:main}, namely, ``$x_{t}\in\mathcal{B}_{R}$ for all $t\in\{1,\dots,T\}$'' can be enforced algorithmically. 
We achieve this by introducing an additional hypersphere constraint $g_{m+1}(x_{t})=\frac{1}{2}[R^{2}-\Vert x_{t}\Vert^{2}]$ that attracts the decision sequence $\{x_{t}\}_{t\geq1}$ to a hypersphere $\mathcal{B}_{R}$ and guarantees that it always stays inside a hypersphere $\mathcal{B}_{4R}$ with a slightly larger radius.
Technically, we modify the velocity polyhedron in Step 3 of Algorithm~\ref{alg:cvv-pro} as follows:
$V_{\alpha}^{\prime}(x_{t})=V_{\alpha}(x_{t})$ if $\Vert x\Vert\leq R$, and otherwise
\[
V_{\alpha}^{\prime}(x_{t})=\{v\in V_{\alpha}(x_{t}) ~|~[\nabla g_{m+1}(x_{t})]^{\T}v\geq-\alpha g_{m+1}(x_{t})\}.
\]

We are now ready to state our main algorithmic result for the setting of time-invariant constraints.

\begin{theorem}[Algorithm]\label{thm:main_full} 
Suppose Assumption~\ref{as:fg} holds.
Then, on input $R,L_{\mathcal{F}}>0$, $\alpha=L_{\mathcal{F}}/R$ and $x_{1}\in\mathcal{B}_{R}$, Algorithm~\ref{alg:cvv-pro} with augmented velocity polyhedron $V_{\alpha}^{\prime}(\cdot)$ and step sizes $\eta_{t}=\frac{1}{\alpha\sqrt{t+15}}$ guarantees the following for all $T\geq1$:
\begin{itemize}[noitemsep, topsep=0pt, leftmargin=1pc]
\item[]\textbf{(regret)} \quad\, $\sum_{t=1}^{T}f_{t}(x_{t})-\min_{x^{\star}\in\mathcal{C}}\sum_{t=1}^{T}f_{t}(x^{\star})\leq 246L_{\mathcal{F}}R\sqrt{T}$;

\item[]\textbf{(feasibility)} $g_{i}(x_{t})\geq -21\big[\frac{L_{\mathcal{G}}}{R}+3\beta_{G}\big]\frac{R^{2}}{\sqrt{t+15}}$,\quad for all $t\in\{1,\dots,T\}$ and $i\in\{1,\dots,m\}$;

\item[]\textbf{(attraction)} $g_{m+1}(x_{t})\geq-27\frac{R^{2}}{\sqrt{t+15}}$ for all $t\in\{1,\dots,T\}$.
\end{itemize}
In addition, $\Vert x_{t}\Vert \leq 4R$ and $\Vert v_{t} \Vert\leq 7L_{\mathcal{F}}$, for all $t\geq1$.
\end{theorem}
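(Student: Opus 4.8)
The plan is to bootstrap from Theorem~\ref{thm:main} by showing that the augmented velocity polyhedron $V_\alpha'(\cdot)$ forces the iterates into a fixed ball, so that the ``$x_t\in\mathcal B_R$'' hypothesis of the structural theorem can be dropped. First I would set up the induction: the augmented constraint $g_{m+1}(x)=\tfrac12[R^2-\|x\|^2]$ is concave (its Hessian is $-I$) and $1$-smooth, with $\nabla g_{m+1}(x)=-x$, so it fits the framework of Claim~\ref{clm:concave} and Claim~\ref{clm:const_violation} with the relevant constants. The activation rule — add the $(m{+}1)$-th constraint to the polyhedron exactly when $\|x_t\|>R$ — is what makes $V_\alpha'(x_t)$ nonempty: whenever $x_t\notin\mathcal B_R$, the origin $0\in\mathcal B_R\cap\mathcal C$ witnesses $\alpha(0-x_t)\in V_\alpha'(x_t)$ by Claim~\ref{clm:concave} applied simultaneously to the $g_i$'s and to $g_{m+1}$; when $\|x_t\|\le R$ there is nothing new. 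Thus $v_t$ is always well defined.

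Next I would prove the uniform iterate bound $\|x_t\|\le 4R$ by an inductive argument entirely analogous to Lemma~\ref{lem:feasibility_convergence}, but applied to the single constraint $g_{m+1}$. Concretely, one shows $g_{m+1}(x_t)\ge -27R^2/\sqrt{t+15}$ for all $t$ via Claim~\ref{clm:const_violation}: when $g_{m+1}(x_t)<0$ (i.e.\ $x_t$ outside $\mathcal B_R$) the constraint is active so the velocity-projection inequality $[\nabla g_{m+1}(x_t)]^\T v_t\ge -\alpha g_{m+1}(x_t)$ holds and Part i) gives the contraction $g_{m+1}(x_{t+1})\ge(1-\alpha\eta_t)g_{m+1}(x_t)-\eta_t^2\mathcal V_\alpha^2\beta/2$ with $\beta=1$; when $g_{m+1}(x_t)\ge0$, Part ii) gives a lower bound of order $-\eta_{t+1}$. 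The shift $d=15$ in the step size $\eta_t=1/(\alpha\sqrt{t+15})$ and the offset in the bound are chosen precisely so the base case closes — this is where one needs $\|x_1\|\le R$, i.e.\ $g_{m+1}(x_1)\ge0$, together with the crude bound $g_{m+1}(x)\ge -\tfrac12\|x\|^2\ge -8R^2$ for $x\in\mathcal B_{4R}$, and the algebraic inequality $\max_{t\ge1}\sqrt{t/(t+1)}(\sqrt{t+1}-\sqrt t)<1/3$ already used in Lemma~\ref{lem:feasibility_convergence}. From $g_{m+1}(x_t)\ge -27R^2/\sqrt{t+15}\ge -27R^2/\sqrt{16}$ one gets $\|x_t\|^2\le R^2+2\cdot 27R^2/4 < 16R^2$, hence $\|x_t\|\le 4R$, and then Lemma~\ref{lem:velocity} with $c=4$ yields $\|v_t\|\le 5\alpha R+2L_\mathcal F=7L_\mathcal F=:\mathcal V_\alpha$ (so the $\mathcal V_\alpha$ used above is $7L_\mathcal F$, and a fixed-point-style argument is needed since the velocity bound and the iterate bound are mutually dependent — one carries both through the same induction).

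With $\{x_t\}\subset\mathcal B_{4R}$ and $\|v_t\|\le 7L_\mathcal F$ established, the remaining three conclusions follow by invoking the earlier machinery with $c=4$, $d=15$. For \textbf{regret}: Lemma~\ref{lem:main} still applies to $V_\alpha'(x_t)\subseteq V_\alpha(x_t)$ since $r_t\in N_{V_\alpha'(x_t)}(v_t)$ and for $x\in\mathcal C$ the shifted point $v_t+x-x_t\in V_\alpha'(x_t)$ (using that $g_{m+1}$ is concave so $[\nabla g_{m+1}(x_t)]^\T(x-x_t)\ge g_{m+1}(x)-g_{m+1}(x_t)\ge -g_{m+1}(x_t)$ when the constraint is active, and when it is inactive $x_{t}\in\operatorname{int}$ of that halfspace); then Lemma~\ref{lem:OGD} with $c=4,d=15$ gives $R_T\le\sqrt{16}\,[49+\tfrac12\cdot 25]L_\mathcal F R\sqrt T=246L_\mathcal F R\sqrt T$ (matching the stated constant up to the arithmetic). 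For \textbf{feasibility} of the original $g_i$: Lemma~\ref{lem:feasibility_convergence} with $c=4$, $d=15$, $\alpha=L_\mathcal F/R$, $\mathcal V_\alpha=7L_\mathcal F$ gives $g_i(x_t)\ge -c_1\eta_t$ with $c_1=\mathcal V_\alpha(2L_\mathcal G+\beta_\mathcal G\mathcal V_\alpha/\alpha)+\mathcal Z_{15}$, and one checks that $c_1/\alpha \le 21[L_\mathcal G/R+3\beta_\mathcal G]R^2$, yielding the claimed $-21[\tfrac{L_\mathcal G}{R}+3\beta_\mathcal G]\tfrac{R^2}{\sqrt{t+15}}$. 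The \textbf{attraction} bound $g_{m+1}(x_t)\ge -27R^2/\sqrt{t+15}$ is exactly the inductive bound proved in the previous paragraph.

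The main obstacle is the coupled induction in the second step: the bound $\|v_t\|\le 7L_\mathcal F$ that feeds $\mathcal V_\alpha$ into the recursion for $g_{m+1}$ itself depends, via Lemma~\ref{lem:velocity}, on the iterate bound $\|x_t\|\le 4R$, which in turn is what we are trying to prove from the recursion for $g_{m+1}$. Resolving this requires carrying the joint statement ``$g_{m+1}(x_t)\ge -27R^2/\sqrt{t+15}$ \emph{and} $\|x_t\|\le 4R$ \emph{and} $\|v_t\|\le 7L_\mathcal F$'' through a single induction on $t$, so that at step $t$ one uses the already-established $\|x_t\|\le 4R$ to bound $\|v_t\|$, then uses that to run Claim~\ref{clm:const_violation} and obtain the bound on $g_{m+1}(x_{t+1})$, then converts that to $\|x_{t+1}\|\le 4R$ — and one must verify the numerical constants $15,27,4,7,21,246$ are mutually consistent at every stage. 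The arithmetic of pinning down these constants (in particular choosing $d=15$ large enough that the base case and the ``$<1/3$'' inequality both go through with room to spare) is the delicate but routine part; everything conceptual is already contained in Lemmas~\ref{lem:velocity}, \ref{lem:OGD}, \ref{lem:feasibility_convergence} and Claims~\ref{clm:concave}, \ref{clm:const_violation}, \ref{clm:series}.
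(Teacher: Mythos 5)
Your overall architecture is the paper's: a single coupled induction establishing the attraction bound on $g_{m+1}$, $\Vert x_t\Vert\leq 4R$ and $\Vert v_t\Vert\leq 7L_{\mathcal F}$ simultaneously (the paper's Lemma~\ref{lem:gmp_all}), followed by Lemma~\ref{lem:OGD} with $c=4$, $d=15$ for the regret constant $246$ and Lemma~\ref{lem:feasibility_convergence} for the feasibility constant $21$; your observation that Lemma~\ref{lem:main} extends to $V_\alpha'(x_t)$ because $g_{m+1}\geq0$ on $\mathcal C\subseteq\mathcal B_R$ is also the right (and in the paper only implicit) justification for reusing the regret machinery.

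The gap is in the step you label ``delicate but routine'': running the hypersphere recursion through the generic Claim~\ref{clm:const_violation} with $\beta_{\mathcal G}=1$ and $\mathcal V_\alpha=7L_{\mathcal F}$ does not close as written. First, Claim~\ref{clm:const_violation}~ii) (the inactive case $g_{m+1}(x_t)>0$) is stated with the \emph{uniform} gradient bound over $\mathcal B_{4R}$, which for $g_{m+1}$ is $4R$; plugging that in together with $\mathcal V_\alpha=7L_{\mathcal F}$ gives a floor of roughly $-7L_{\mathcal F}\bigl[8R+\tfrac{7R}{4}\bigr]\eta_{t+1}\approx-68L_{\mathcal F}R\,\eta_{t+1}$, forcing $c_0\approx 68L_{\mathcal F}R$ and hence $\Vert x_t\Vert^2\leq R^2+c_0/(2\alpha)\approx 35R^2$, which destroys the hypothesis $\Vert x_t\Vert\leq4R$; indeed, redoing the bookkeeping with a general radius multiple $c$ leads to the requirement $c^2+30c+17\leq0$, so \emph{no} choice of $c$ closes the loop. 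One must instead exploit that in this case $\Vert x_t\Vert\leq R$, so $\Vert\nabla g_{m+1}(x_t)\Vert=\Vert x_t\Vert\leq R$ and $\Vert v_t\Vert\leq 4L_{\mathcal F}$ --- exactly what the paper's Claim~\ref{clm:gmp}~i) does. Second, even with that fix, the active case via Claim~\ref{clm:const_violation}~i) gives contraction $(1-\alpha\eta_t)$ with noise $\tfrac12\mathcal V_\alpha^2=24.5L_{\mathcal F}^2$, and the inequality $\alpha(1-0.12)\geq A/c_0$ then forces $c_0\geq 24.5L_{\mathcal F}^2/(0.88\alpha)\approx 27.9\,L_{\mathcal F}R$, so you obtain an attraction constant of about $28$, not the stated $27$ (the downstream bounds $\Vert x_t\Vert\leq4R$, $\Vert v_t\Vert\leq7L_{\mathcal F}$, $246$ and $21$ do survive). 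The paper reaches $27$ through the sharper, $g_{m+1}$-aware velocity estimate $\tfrac12\Vert v_t\Vert^2\leq-2\alpha^2g_{m+1}(x_t)+10L_{\mathcal F}^2$ (Claim~\ref{clm:velocity_ub_v1}), which converts the dependence of $\Vert v_t\Vert$ on $g_{m+1}(x_t)$ into an improved contraction $(1-\tfrac{\alpha}{2}\eta_t)$ with noise $10L_{\mathcal F}^2$ (Claim~\ref{clm:gmp}~ii)); this refinement is the missing idea. A minor point: $0\in\mathcal C$ is not guaranteed by Assumption~\ref{as:fg}, so nonemptiness of $V_\alpha'(x_t)$ should be witnessed by $\alpha(x-x_t)$ for an arbitrary $x\in\mathcal C\subseteq\mathcal B_R$ (for which $g_{m+1}(x)\geq0$), not by the origin.
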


To ensure convergence of the hypersphere constraint $-\min\{g_{m+1}(x_{t}),0\}$ at a rate of $\mathcal{O}(1/\sqrt{t})$, we use an inductive argument similar to Lemma~\ref{lem:feasibility_convergence}.
We note that compared to the simplified setting of Appendix~\ref{app:thm_main}, our analysis requires an additional refined inductive argument, which is summarized in Lemma~\ref{lem:gmp_all}.

\subsection{Hypersphere constraint}
\begin{definition}
    We consider the following hypersphere constraint, parameterized by $R>0$, 
    \[
        g_{m+1}(x)=\frac{1}{2}[R^{2}-\Vert x_{t}\Vert^{2}].
    \]
    By construction, $g_{m+1}$ is concave and $1$-smooth.
\end{definition}

\begin{claim}\label{clm:velocity_ub_v1} 
Suppose $g_i$ is concave for every $i\in\{1,\dots,m\}$ such that $\mathcal{C}\subseteq\mathcal{B}_{R}$ and $f_t$ is convex such that $\Vert\nabla f_t(x)\Vert\leq L_\mathcal{F}$ for all $x\in \mathcal{B}_{cR}$, where $c>0$ is a constant.
Then for any decision $x_t\in\mathcal{B}_{cR}$, it holds that
\[
\Vert v_{t}\Vert\leq\alpha\Vert x_{t}\Vert+\left(\alpha R+2L_{\mathcal{F}}\right)\qquad\text{and}\qquad\frac{1}{2}\Vert v_{t}\Vert^{2}<-2\alpha^{2}g_{m+1}(x)+\left[\alpha^{2}R^{2}+\left(\alpha R+2L_{\mathcal{F}}\right)^{2}\right].
\]
\end{claim}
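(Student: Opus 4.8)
\textbf{Plan for proving Claim~\ref{clm:velocity_ub_v1}.}
The statement bounds the velocity $v_t$ (the projection of $-\nabla f_t(x_t)$ onto the \emph{augmented} polyhedron $V_{\alpha}^{\prime}(x_t)$) both in norm and, more usefully, in terms of the hypersphere constraint value $g_{m+1}$. The plan is to mimic the proof of Lemma~\ref{lem:velocity}: exhibit an explicit feasible point of $V_{\alpha}^{\prime}(x_t)$ close to $-\nabla f_t(x_t)$, and then use optimality of the projection together with the triangle inequality. The natural candidate is the scaled vector $\alpha(z - x_t)$ where $z$ is a point that lies in $\mathcal{C}$ \emph{and} (trivially) satisfies $\|z\|\le R$; the cleanest choice is $z = 0$ when the origin is feasible, but since we only know $\mathcal{C}\subseteq\mathcal{B}_R$ and $\mathcal{C}\neq\emptyset$, a safer route is to take an arbitrary $x\in\mathcal{C}$ and verify that $\alpha(x - x_t)\in V_{\alpha}^{\prime}(x_t)$.

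First I would check feasibility of $\alpha(x-x_t)$ for the augmented polyhedron. By Claim~\ref{clm:concave}, $\alpha(x-x_t)\in V_{\alpha}(x_t)$ for every $x\in\mathcal{C}$. For the extra hypersphere row active when $\|x_t\|>R$, I must verify $[\nabla g_{m+1}(x_t)]^{\T}\alpha(x-x_t)\ge -\alpha g_{m+1}(x_t)$, i.e. $\alpha(-x_t)^{\T}(x-x_t)\ge -\alpha\cdot\frac12(R^2-\|x_t\|^2)$. Since $g_{m+1}$ is concave (indeed $1$-smooth), the gradient inequality $g_{m+1}(x) \le g_{m+1}(x_t) + [\nabla g_{m+1}(x_t)]^{\T}(x-x_t)$ holds, and because $x\in\mathcal{C}\subseteq\mathcal{B}_R$ we have $g_{m+1}(x)\ge 0$; rearranging gives exactly the required inequality after multiplying by $\alpha>0$. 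Hence $\alpha(x-x_t)\in V_{\alpha}^{\prime}(x_t)$, and the same holds in the non-augmented case $\|x_t\|\le R$.

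Next, by optimality of $v_t$ as the projection of $-\nabla f_t(x_t)$ onto $V_{\alpha}^{\prime}(x_t)$ and the triangle inequality, exactly as in Lemma~\ref{lem:velocity},
\[
\|v_t\| - \|\nabla f_t(x_t)\| \le \|v_t + \nabla f_t(x_t)\| \le \|\alpha(x-x_t) + \nabla f_t(x_t)\| \le \alpha\|x\| + \alpha\|x_t\| + \|\nabla f_t(x_t)\|.
\]
Using $x\in\mathcal{C}\subseteq\mathcal{B}_R$, $x_t\in\mathcal{B}_{cR}$ and $\|\nabla f_t(x_t)\|\le L_{\mathcal{F}}$ yields $\|v_t\|\le \alpha\|x_t\| + (\alpha R + 2L_{\mathcal{F}})$, the first bound. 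For the second bound, square it: $\tfrac12\|v_t\|^2 \le \tfrac12(\alpha\|x_t\| + (\alpha R + 2L_{\mathcal{F}}))^2 \le \alpha^2\|x_t\|^2 + (\alpha R + 2L_{\mathcal{F}})^2$ using $(a+b)^2\le 2a^2+2b^2$. Finally substitute $\|x_t\|^2 = R^2 - 2g_{m+1}(x_t)$ to convert $\alpha^2\|x_t\|^2$ into $\alpha^2 R^2 - 2\alpha^2 g_{m+1}(x_t)$, giving $\tfrac12\|v_t\|^2 < -2\alpha^2 g_{m+1}(x_t) + [\alpha^2 R^2 + (\alpha R + 2L_{\mathcal{F}})^2]$, which is the claimed inequality (strict, since one of the intermediate bounds can be made strict, or harmlessly weakened to $\le$). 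The only mildly delicate point is confirming that $\alpha(x-x_t)$ satisfies the hypersphere row of $V_{\alpha}^{\prime}$, and this is precisely where concavity of $g_{m+1}$ together with $g_{m+1}(x)\ge 0$ is used; everything else is the routine projection-plus-triangle-inequality argument already established in Lemma~\ref{lem:velocity}. I would write the proof with an explicit case split on whether $\|x_t\|\le R$ (so $V_{\alpha}^{\prime}=V_{\alpha}$ and Lemma~\ref{lem:velocity} applies directly with $\|x\|\le R$) or $\|x_t\|>R$ (where the hypersphere feasibility check above is needed).
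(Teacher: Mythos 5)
Your proposal is correct and follows essentially the same route as the paper: the paper also invokes Lemma~\ref{lem:velocity} (treating the hypersphere constraint $g_{m+1}$ as just another concave constraint with $g_{m+1}(x)\geq 0$ for $x\in\mathcal{C}\subseteq\mathcal{B}_R$, which is exactly your feasibility check of $\alpha(x-x_t)$ against the extra row of $V_{\alpha}^{\prime}(x_t)$), then squares via $(a+b)^2\leq 2a^2+2b^2$ and substitutes $\alpha^{2}\Vert x_{t}\Vert^{2}=\alpha^{2}R^{2}-2\alpha^{2}g_{m+1}(x_{t})$. Your version merely makes the augmented-row verification explicit, which the paper leaves implicit.
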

\begin{proof}
Due to the fact that $g_{m+1}$ and $g_{i}$ are concave for every $i\in\{1,\dots,m\}$, it follows by Lemma~\ref{lem:velocity} that
\begin{eqnarray*}
\Vert v_{t}\Vert&\leq&2\Vert\nabla f_{t}(x_{t})\Vert+\alpha\Vert x^{\star}-x_{t}\Vert\\&\leq&\alpha\Vert x_{t}\Vert+\alpha R+2L_{\mathcal{F}}.
\end{eqnarray*}
Further, by definition of $g_{m+1}(x)$ we have
\begin{eqnarray*}
\frac{1}{2}\Vert v_{t}\Vert^{2}&\leq&\frac{1}{2}\left[\alpha\Vert x_{t}\Vert+\left(\alpha R+2L_{\mathcal{F}}\right)\right]^{2}\\&\leq&\alpha^{2}\Vert x_{t}\Vert^{2}+\left(\alpha R+2L_{\mathcal{F}}\right)^{2}\\&=&-2\alpha^{2}g_{m+1}(x)+\left[\alpha^{2}R^{2}+\left(\alpha R+2L_{\mathcal{F}}\right)^{2}\right].
\end{eqnarray*}
\end{proof}

\begin{claim}\label{clm:gmp} 
    Suppose the assertions in Claim~\ref{clm:velocity_ub_v1} hold.
    Let the step sizes be $\{\eta_{t}=\frac{1}{\alpha\sqrt{t+15}}\}_{t\geq1}$ and $\alpha=L_{\mathcal{F}}/R$.
    Then, we have
    
    i) If $g_{m+1}(x_{t})>0$ then $g_{m+1}(x_{t+1})\geq-\eta_{t}\cdot6L_{\mathcal{F}}R$; and
    
    ii) If $g_{m+1}(x_{t})\leq0$ then $g_{m+1}(x_{t+1})\geq\big(1-\frac{\alpha}{2}\eta_{t}\big)g_{m+1}(x_{t})-\eta_{t}^{2}10L_{\mathcal{F}}^{2}$.
\end{claim}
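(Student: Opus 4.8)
\textbf{Proof plan for Claim~\ref{clm:gmp}.}
The plan is to treat this exactly like the constraint-violation bound in Claim~\ref{clm:const_violation}, but with the hypersphere constraint $g_{m+1}$ playing the role of $g_i$ and with the specific constants $\alpha = L_{\mathcal{F}}/R$ and step sizes $\eta_t = 1/(\alpha\sqrt{t+15})$ plugged in. First I would record the two facts I am allowed to use from Claim~\ref{clm:velocity_ub_v1}: the linear bound $\Vert v_t\Vert \le \alpha\Vert x_t\Vert + (\alpha R + 2L_{\mathcal{F}})$, and (when it is needed) the quadratic bound $\tfrac12\Vert v_t\Vert^2 < -2\alpha^2 g_{m+1}(x) + [\alpha^2 R^2 + (\alpha R + 2L_{\mathcal{F}})^2]$. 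Since $\alpha = L_{\mathcal{F}}/R$, note $\alpha R = L_{\mathcal{F}}$, so $\alpha R + 2L_{\mathcal{F}} = 3L_{\mathcal{F}}$ and $\alpha^2 R^2 + (\alpha R + 2L_{\mathcal{F}})^2 = L_{\mathcal{F}}^2 + 9L_{\mathcal{F}}^2 = 10L_{\mathcal{F}}^2$; also $g_{m+1}$ is $1$-smooth by construction.

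For part ii), assume $g_{m+1}(x_t)\le 0$. Using $1$-smoothness and concavity of $g_{m+1}$, the update $x_{t+1} = x_t + \eta_t v_t$, and the defining inequality of the augmented velocity polyhedron $[\nabla g_{m+1}(x_t)]^{\T} v_t \ge -\alpha g_{m+1}(x_t)$ (which holds since $g_{m+1}(x_t)\le 0$ forces $\Vert x_t\Vert \ge R$, so the extra half-space constraint is active in $V_\alpha'(x_t)$), I get
\begin{eqnarray*}
g_{m+1}(x_{t+1}) &\ge& g_{m+1}(x_t) + [\nabla g_{m+1}(x_t)]^{\T}(x_{t+1}-x_t) - \tfrac{1}{2}\Vert x_{t+1}-x_t\Vert^2 \\
&\ge& (1-\alpha\eta_t)\,g_{m+1}(x_t) - \tfrac{\eta_t^2}{2}\Vert v_t\Vert^2.
\end{eqnarray*}
Now substitute the quadratic bound from Claim~\ref{clm:velocity_ub_v1} applied at $x = x_t$: $\tfrac12\Vert v_t\Vert^2 \le -2\alpha^2 g_{m+1}(x_t) + 10L_{\mathcal{F}}^2$. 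This gives $g_{m+1}(x_{t+1}) \ge (1-\alpha\eta_t + 2\alpha^2\eta_t^2)\,g_{m+1}(x_t) - 10 L_{\mathcal{F}}^2 \eta_t^2$. Since $g_{m+1}(x_t)\le 0$ and $2\alpha^2\eta_t^2 = 2/(t+15) \le \tfrac12 \cdot \alpha\eta_t \cdot (\text{something} \le 1)$ — more precisely $2\alpha\eta_t = 2/\sqrt{t+15} \le \tfrac12$ for $t\ge 1$, so $2\alpha^2\eta_t^2 \le \tfrac{\alpha\eta_t}{2}\cdot 2\alpha\eta_t \le \tfrac{\alpha\eta_t}{2}\cdot\tfrac12$; in any case the coefficient $1-\alpha\eta_t+2\alpha^2\eta_t^2$ is at most $1-\tfrac{\alpha}{2}\eta_t$ because $2\alpha^2\eta_t^2 \le \tfrac12\alpha\eta_t \iff 4\alpha\eta_t \le 1 \iff \sqrt{t+15}\ge 4$, true for $t\ge 1$ — multiplying a nonpositive number by a larger coefficient makes it larger, hence $g_{m+1}(x_{t+1}) \ge (1-\tfrac{\alpha}{2}\eta_t)\,g_{m+1}(x_t) - 10 L_{\mathcal{F}}^2\eta_t^2$, which is part ii).

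For part i), assume $g_{m+1}(x_t) > 0$, i.e.\ $\Vert x_t\Vert < R$. Then by $1$-smoothness and concavity, $g_{m+1}(x_{t+1}) \ge g_{m+1}(x_t) - \Vert\nabla g_{m+1}(x_t)\Vert\,\eta_t\Vert v_t\Vert - \tfrac{\eta_t^2}{2}\Vert v_t\Vert^2 \ge -\eta_t\Vert x_t\Vert\Vert v_t\Vert - \tfrac{\eta_t^2}{2}\Vert v_t\Vert^2$, using $\nabla g_{m+1}(x_t) = -x_t$ and $g_{m+1}(x_t)>0$. With $\Vert x_t\Vert < R$ and the linear bound $\Vert v_t\Vert \le \alpha R + (\alpha R + 2L_{\mathcal{F}}) = 2L_{\mathcal{F}} + 3L_{\mathcal{F}}$... actually here I should be careful: when $g_{m+1}(x_t)>0$ the constraint need not be active so I only know $\Vert x_t\Vert < R$, giving $\Vert v_t\Vert < \alpha R + 3L_{\mathcal{F}} = 4L_{\mathcal{F}}$. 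Then $-\eta_t\Vert x_t\Vert\Vert v_t\Vert \ge -\eta_t R\cdot 4L_{\mathcal{F}} = -4\eta_t L_{\mathcal{F}}R$ and $-\tfrac{\eta_t^2}{2}\Vert v_t\Vert^2 \ge -\tfrac{\eta_t^2}{2}(4L_{\mathcal{F}})^2 = -8L_{\mathcal{F}}^2\eta_t^2 = -\eta_t \cdot \tfrac{8L_{\mathcal{F}}^2}{\alpha\sqrt{t+15}} \ge -\eta_t\cdot\tfrac{8L_{\mathcal{F}}^2}{\alpha\cdot 4} = -2\eta_t L_{\mathcal{F}}R$ (using $\sqrt{t+15}\ge 4$ and $L_{\mathcal{F}}/\alpha = R$). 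Summing, $g_{m+1}(x_{t+1}) \ge -6\eta_t L_{\mathcal{F}}R$, which is part i).

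\textbf{Main obstacle.} The delicate point is bounding $\Vert v_t\Vert$ correctly in the two regimes: in part ii) one must justify that the hypersphere half-space constraint is genuinely active (so that $[\nabla g_{m+1}(x_t)]^{\T}v_t\ge -\alpha g_{m+1}(x_t)$ is available), which follows from $g_{m+1}(x_t)\le 0 \Rightarrow \Vert x_t\Vert \ge R \Rightarrow V_\alpha'(x_t)$ includes that constraint by its very definition; and one must verify the numerical inequality $\sqrt{t+15}\ge 4$ (equivalently $t\ge 1$) that is used repeatedly to absorb the $\eta_t^2$ terms into the $\eta_t$ terms and to guarantee the contraction coefficient is at most $1-\tfrac{\alpha}{2}\eta_t$. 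Everything else is a direct application of smoothness plus the polyhedron constraint, exactly parallel to Claim~\ref{clm:const_violation}.
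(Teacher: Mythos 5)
Your proof is correct and follows essentially the same route as the paper: the same case split on the sign of $g_{m+1}(x_t)$, the $1$-smoothness expansion, the bound $\lVert v_t\rVert\le 4L_{\mathcal{F}}$ when $\lVert x_t\rVert<R$, the quadratic velocity bound $\tfrac12\lVert v_t\rVert^2\le -2\alpha^2 g_{m+1}(x_t)+10L_{\mathcal{F}}^2$ in the other case, and the absorption of $\eta_t^2$ terms via $\alpha\eta_t\le 1/4$, yielding the identical constants. The only blemish is the throwaway phrase ``multiplying a nonpositive number by a larger coefficient makes it larger'' (the direction is stated backwards), but the inequality you actually use, $(1-\alpha\eta_t+2\alpha^2\eta_t^2)\,g_{m+1}(x_t)\ge(1-\tfrac{\alpha}{2}\eta_t)\,g_{m+1}(x_t)$ for $g_{m+1}(x_t)\le 0$, is the correct one.
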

\begin{proof}
    The proof is by case distinction.
    
    \textbf{Case 1.} Suppose $g_{m+1}(x_{t})>0$. Using $\Vert x_{t}\Vert<R$
    it follows by Claim \ref{clm:velocity_ub_v1} that
    \[
	\Vert v_{t}\Vert\leq2\left(\alpha R+L_{\mathcal{F}}\right)=4L_{\mathcal{F}}.
    \]
    Using $g_{m+1}$ is concave and 1-smooth, $g_{m+1}(x_{t})>0$, $\nabla g_{m+1}(x_{t})=-x_{t}$
    and $\Vert x_{t}\Vert<R$, we have
    \begin{eqnarray*}
        g_{m+1}(x_{t+1})&\geq&g_{m+1}(x_{t})+\nabla g_{m+1}(x_{t})^{\T}(x_{t+1}-x_{t})-\frac{1}{2}\Vert x_{t+1}-x_{t}\Vert^{2}\\&\geq&-\eta_{t}R\Vert v_{t}\Vert-\frac{1}{2}\eta_{t}^{2}\Vert v_{t}\Vert^{2}\\&\geq&-\eta_{t}\cdot6L_{\mathcal{F}}R\\
        &\geq&-\eta_{t+1}\cdot7L_{\mathcal{F}}R,
    \end{eqnarray*}
    where we used \[
    \frac{1}{2}\eta_{t}16L_{\mathcal{F}}^{2}=\frac{8}{\sqrt{t+15}}L_{\mathcal{F}}R\leq2L_{\mathcal{F}}R.
    \] 
    
    \textbf{Case 2.} Suppose $g_{m+1}(x_{t})\leq0$, i.e., $\Vert x_{t}\Vert\geq R$.
    Using $\alpha^{2}R^{2}+\left(\alpha R+2L_{\mathcal{F}}\right)^{2}=10L_{\mathcal{F}}^2$, it follows by Claim~\ref{clm:velocity_ub_v1} that
	\[
		\frac{1}{2}\Vert v_{t}\Vert^{2}<-2\alpha^{2}g_{m+1}(x)+10L_{\mathcal{F}}^{2}.
	\]
    Combining $g_{m+1}$ is concave and 1-smooth, and $\nabla g_{m+1}(x_{t})^{\T}v_{t}\geq-\alpha g_{m+1}(x_{t})$
    yields
    \begin{eqnarray*}
        g_{m+1}(x_{t+1})&\geq&g_{m+1}(x_{t})+\nabla g_{m+1}(x_{t})^{\T}(x_{t+1}-x_{t})-\frac{1}{2}\Vert x_{t+1}-x_{t}\Vert^{2}\\&\geq&(1-\alpha\eta_{t})g_{m+1}(x_{t})-\frac{1}{2}\eta_{t}^{2}\Vert v_{t}\Vert^{2}\\&>&\left(1-\alpha\eta_{t}+2\alpha^{2}\eta_{t}^{2}\right)g_{m+1}(x_{t})-\eta_{t}^{2}10L_{\mathcal{F}}^{2}\\&\geq&\left(1-\frac{\alpha}{2}\eta_{t}\right)g_{m+1}(x_{t})-\eta_{t}^{2}10L_{\mathcal{F}}^{2},
    \end{eqnarray*}
    where the last inequality follows by: $-\eta_{t}\alpha+2\eta_{t}^{2}\alpha^{2}\leq-\eta_{t}\frac{\alpha}{2}$, which is implied by $\eta_{t}=\frac{1}{\alpha\sqrt{t+15}}$.
\end{proof}

\begin{lemma}[Main]\label{lem:gmp_all}
    Suppose the assertions in Claim~\ref{clm:velocity_ub_v1} hold for $c=4$.
    Given $\alpha=L_{\mathcal{F}}/R$, step sizes $\{\eta_{t}=\frac{1}{\alpha\sqrt{t+15}}\}_{t\geq1}$ and an arbitrary initial decision $x_{1}$ with $\Vert x_{1}\Vert<R$, then it holds that
    \begin{equation}\label{IH}
        g_{m+1}(x_{t})\geq-27\frac{R^{2}}{\sqrt{t}},\qquad\Vert x_{t}\Vert\leq4R,\qquad\Vert v_{t}\Vert\leq7L_{\mathcal{F}},\qquad\text{for all }t\geq1.
    \end{equation}
\end{lemma}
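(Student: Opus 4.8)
The plan is to prove the three claims in \eqref{IH} simultaneously by strong induction on $t$, using Claim~\ref{clm:gmp} and Claim~\ref{clm:velocity_ub_v1} as the one-step building blocks. The three assertions are coupled: the bound on $g_{m+1}(x_t)$ is equivalent (up to the offset $15$ inside the square root, which I will absorb by noting $\sqrt{t+15}\geq\sqrt{t}$ and conversely $\sqrt{t}\leq\sqrt{t+15}$) to a bound on $\Vert x_t\Vert$ via $g_{m+1}(x)=\tfrac12(R^2-\Vert x\Vert^2)$, and once $\Vert x_t\Vert\leq 4R$ is known, Claim~\ref{clm:velocity_ub_v1} with $c=4$ gives $\Vert v_t\Vert\leq \alpha\Vert x_t\Vert+\alpha R+2L_{\mathcal F}\leq 4L_{\mathcal F}+L_{\mathcal F}+2L_{\mathcal F}=7L_{\mathcal F}$, which is the velocity bound. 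So the real content is the inductive bound $g_{m+1}(x_t)\geq -27R^2/\sqrt{t}$ (equivalently $g_{m+1}(x_t)\geq -cR^2\eta_t$ for the appropriate constant $c$, using $\eta_t=1/(\alpha\sqrt{t+15})=R/(L_{\mathcal F}\sqrt{t+15})$), and the other two assertions follow as corollaries at each step.

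First I would set up the base case $t=1$: since $\Vert x_1\Vert<R$ we have $g_{m+1}(x_1)>0>-27R^2$, so the bound holds trivially, $\Vert x_1\Vert\leq 4R$ holds, and $\Vert v_1\Vert\leq 7L_{\mathcal F}$ follows from Claim~\ref{clm:velocity_ub_v1}. For the inductive step, assume \eqref{IH} holds for $t$; in particular $\Vert x_t\Vert\leq 4R$ so the hypotheses of Claims~\ref{clm:velocity_ub_v1} and~\ref{clm:gmp} are in force with $c=4$. Then I split into the two cases of Claim~\ref{clm:gmp}. In Case~1 ($g_{m+1}(x_t)>0$, i.e.\ $\Vert x_t\Vert<R$), Claim~\ref{clm:gmp}(i) gives $g_{m+1}(x_{t+1})\geq -\eta_t\cdot 6L_{\mathcal F}R = -6R^2/\sqrt{t+15}\geq -27R^2/\sqrt{t+1}$, which is comfortably inside the target. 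In Case~2 ($g_{m+1}(x_t)\leq 0$), Claim~\ref{clm:gmp}(ii) gives the recursion $g_{m+1}(x_{t+1})\geq (1-\tfrac{\alpha}{2}\eta_t)g_{m+1}(x_t)-\eta_t^2\cdot 10L_{\mathcal F}^2$; substituting the inductive hypothesis $g_{m+1}(x_t)\geq -27R^2\eta_t\cdot(\sqrt{t+15}/\sqrt{t})$ — or more cleanly, carrying the bound in the form $-cR^2\eta_t$ — yields, after the same algebraic manipulation as in the proof of Lemma~\ref{lem:feasibility_convergence}, that it suffices to check
\[
\frac{\alpha}{2}-\frac{\eta_t-\eta_{t+1}}{\eta_t^2}\geq \frac{10L_{\mathcal F}^2}{cR^2},
\]
which, since $\eta_t=1/(\alpha\sqrt{t+15})$ and $\max_{t\geq1}\sqrt{\tfrac{t}{t+1}}(\sqrt{t+1}-\sqrt{t})<\tfrac13$, reduces to a numerical inequality on the constant $c$; choosing $c=27$ (and using the margin provided by the offset $15$, which makes $(\eta_t-\eta_{t+1})/\eta_t^2$ small) closes the induction. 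Having established $g_{m+1}(x_{t+1})\geq -27R^2/\sqrt{t+1}$ in both cases, I then read off $\Vert x_{t+1}\Vert^2 = R^2 - 2g_{m+1}(x_{t+1})\leq R^2 + 54R^2/\sqrt{t+1}\leq 55R^2\leq 16R^2$ — wait, this needs care: $55R^2 > 16R^2$, so I must instead use the sharper fact that for $t+1\geq 16$ the term $54R^2/\sqrt{t+1}\leq\tfrac{54}{4}R^2=13.5R^2$ giving $\Vert x_{t+1}\Vert^2\leq 14.5R^2<16R^2$, and for the finitely many early indices $t+1<16$ I would verify $\Vert x_{t+1}\Vert\leq 4R$ directly using a cruder one-step bound (e.g.\ $\Vert x_{t+1}\Vert\leq\Vert x_t\Vert+\eta_t\Vert v_t\Vert$ with $\Vert v_t\Vert\leq 7L_{\mathcal F}$ and the hypersphere constraint active for $\Vert x_t\Vert\geq R$), which is exactly the role played by the offset $15$ in the step size. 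Finally $\Vert v_{t+1}\Vert\leq 7L_{\mathcal F}$ follows from $\Vert x_{t+1}\Vert\leq 4R$ and Claim~\ref{clm:velocity_ub_v1}.

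The main obstacle I anticipate is the bookkeeping around the offset $15$ and the interplay between the $\sqrt{t}$ appearing in \eqref{IH} and the $\sqrt{t+15}$ appearing in $\eta_t$: one must check that the constant $27$ is simultaneously large enough to (a) survive the contraction recursion in Case~2 — which wants $c\gtrsim 10L_{\mathcal F}^2/(R^2\cdot(\text{something like }\alpha/2 - 1/3)) = 20/(1 - 2/3)\cdot(\text{units})$ — and (b) small enough that $-27R^2/\sqrt{t}$ translates back to $\Vert x_t\Vert\leq 4R$, which forces $54R^2/\sqrt{t}\leq 15R^2$, i.e.\ $t\geq 13$, hence the need for $t\geq 16$ (or the offset) to make the asymptotic regime kick in cleanly, with a separate finite check for small $t$. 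The other mild subtlety is that in Case~2 one invokes $\nabla g_{m+1}(x_t)^{\T}v_t\geq -\alpha g_{m+1}(x_t)$, which holds precisely because when $\Vert x_t\Vert\geq R$ the augmented polyhedron $V_\alpha'(x_t)$ includes the hypersphere constraint row — so one should note explicitly that the algorithm uses $V_\alpha'$ and that $g_{m+1}(x_t)\leq 0$ triggers the augmentation. Everything else is a routine repetition of the argument already carried out for Lemma~\ref{lem:feasibility_convergence} and Claim~\ref{clm:const_violation}.
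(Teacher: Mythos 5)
Your overall strategy is the paper's: a joint induction in which Claim~\ref{clm:gmp} drives the hypersphere bound, Claim~\ref{clm:velocity_ub_v1} with $c=4$ converts $\Vert x_t\Vert\leq4R$ into $\Vert v_t\Vert\leq7L_{\mathcal F}$, and Case~2 reduces to the sufficient condition $\tfrac{\alpha}{2}-\tfrac{\eta_t-\eta_{t+1}}{\eta_t^2}\geq 10L_{\mathcal F}^2/c_0$. However, your derivation of $\Vert x_{t+1}\Vert\leq4R$ has a genuine gap: you weaken the invariant to $g_{m+1}(x_{t+1})\geq-27R^2/\sqrt{t+1}$ before translating it into a norm bound, which creates an artificial problem for $t+1<16$, and the patch you propose there does not work as stated. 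The one-step bound $\Vert x_{t+1}\Vert\leq\Vert x_t\Vert+\eta_t\Vert v_t\Vert$ applied to the inductive hypothesis $\Vert x_t\Vert\leq4R$ only gives $\Vert x_{t+1}\Vert\leq4R+\eta_t\cdot7L_{\mathcal F}\leq4R+1.75R$, and iterating it from $\Vert x_1\Vert<R$ across the first fifteen steps is far worse; the appeal to ``the hypersphere constraint active'' is not an argument. The fix is simply to keep the invariant in the form your own Case~2 recursion propagates, namely $g_{m+1}(x_{t+1})\geq-c_0\eta_{t+1}$ with $c_0=27L_{\mathcal F}R$: since $\eta_{t+1}\leq\eta_1=\tfrac{1}{4\alpha}$, one gets $\Vert x_{t+1}\Vert^2\leq R^2+2c_0\eta_1=R^2+\tfrac{27}{2}R^2=14.5R^2<(4R)^2$ uniformly in $t$, with no separate small-$t$ case — this is exactly the paper's Part~II, and it is the whole point of the shift $15$ in the step size.

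A second, smaller imprecision is the numerical check in Case~2. The bound $\max_{t\geq1}\sqrt{t/(t+1)}(\sqrt{t+1}-\sqrt{t})<\tfrac13$ you quote from Lemma~\ref{lem:feasibility_convergence} is not sufficient for the constant $27$: it only yields $\alpha/2-\alpha/3=\alpha/6$, forcing $c_0\geq60L_{\mathcal F}R$. You correctly sense that the offset supplies extra margin, but you must use it explicitly: because the effective index is $t+15\geq16$, one has $\max_{t\geq16}\sqrt{t/(t+1)}(\sqrt{t+1}-\sqrt{t})<0.12$, so $\alpha(\tfrac12-0.12)\geq10L_{\mathcal F}^2/c_0$ holds for $c_0=2.7\cdot10L_{\mathcal F}^2/\alpha=27L_{\mathcal F}R$. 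Also keep the units of the constant consistent ($c_0=27L_{\mathcal F}R$, not $27R^2$; they agree only after multiplying by $\eta_t$ and using $\alpha=L_{\mathcal F}/R$). With these two repairs your argument coincides with the paper's proof; your remark that Case~2 relies on the augmented polyhedron $V_\alpha'$ making the hypersphere row active is correct and worth keeping.
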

\begin{proof}
    The proof is by induction on $t\geq1$. 
    
    \textbf{Part I)} We show first that $g_{m+1}(x_{t+1})\geq-c_{0}\eta_{t+1}$, for some $c_0>0$. 
    The proof proceeds by case distinction.   
    
    \textbf{Case 1.} Suppose $g_{m+1}(x_{t})>0$, then by Claim \ref{clm:gmp} we have
	\[
	        g_{m+1}(x_{t+1})\geq-\eta_{t}\cdot6L_{\mathcal{F}}R,\qquad(\text{implying }c_0\geq6L_{\mathcal{F}}R).
	\]
    \textbf{Case 2.} Suppose $g_{m+1}(x_{t})\leq0$.
    Let $A:=10L_{\mathcal{F}}^{2}$, then by combining Claim \ref{clm:gmp} and the inductive hypothesis, we have
    \begin{eqnarray*}
        g_{m+1}(x_{t+1})&\geq&\left(1-\eta_{t}\frac{\alpha}{2}\right)g_{m+1}(x_{t})-\eta_{t}^{2}A\\&\geq&-\left(1-\eta_{t}\frac{\alpha}{2}\right)c_{0}\eta_{t}-\eta_{t}^{2}A\\&=&-c_{0}\eta_{t}-\left(A-\frac{\alpha}{2}c_{0}\right)\eta_{t}^{2}\\&=&-c_{0}\eta_{t+1}-c_{0}\eta_{t}+c_{0}\eta_{t+1}-\left(A-\frac{\alpha}{2}c_{0}\right)\eta_{t}^{2}\\&=&-c_{0}\eta_{t+1}+c_{0}\eta_{t}\left[-1+\frac{\eta_{t+1}}{\eta_{t}}-\eta_{t}\left(\frac{A}{c_{0}}-\frac{\alpha}{2}\right)\right].
    \end{eqnarray*}
    Since $c_0\eta_{t}>0$, it suffices to show that 
    \[
    -1+\frac{\eta_{t+1}}{\eta_{t}}-\eta_{t}\left(\frac{A}{c_{0}}-\frac{\alpha}{2}\right)\geq0\iff\frac{\alpha}{2}-\frac{\eta_{t}-\eta_{t+1}}{\eta_{t}^{2}}\geq\frac{A}{c_{0}}.
    \]
    The previous condition is equivalent to (using $\eta_{t}=\frac{1}{\alpha\sqrt{t+15}}$
    for $t\geq1$)
    \[
    \alpha\left[\frac{1}{2}-\frac{\sqrt{t+15}}{\sqrt{t+16}}\left[\sqrt{t+16}-\sqrt{t+15}\right]\right]\geq\frac{A}{c_{0}}.
    \]
    Straightforward checking shows that $\max_{t\geq16}\sqrt{\frac{t}{t+1}}\left(\sqrt{t+1}-\sqrt{t}\right)<0.12$
    and thus
    \[
       c_{0}\geq2.7\frac{A}{\alpha}=27L_{\mathcal{F}}R.
    \]
    Hence, for $c_{0}=27L_{\mathcal{F}}R$ it holds that $g_{m+1}(x_{t+1})\geq-c_{0}\eta_{t+1}$.
    We set $c_0$ to the maximum over the preceding two case, i.e.,
    \[
    c_{0}:=\max\left\{ 7L_{\mathcal{F}}R,\ 27L_{\mathcal{F}}R\right\},
    \]
    and obtain
    \[
	g_{m+1}(x_{t})\geq-c_{0}\eta_{t}=-\frac{27R^{2}}{\sqrt{t+15}}.
    \]
    
    \textbf{Part II)} We now show that $\Vert x_{t+1} \Vert\leq 4R$. 
    Combining \textbf{Part I)} and the definition of step size $\eta_t=\frac{1}{\alpha\sqrt{t+15}}$, we have
    \[
    \frac{1}{2}\left[R^{2}-\Vert x_{t+1}\Vert^{2}\right]=g_{m+1}(x_{t+1})\geq-c_{0}\eta_{t+1}\geq-c_{0}\eta_{1}=-\frac{c_{0}}{4\alpha}
    \]
    and thus
    \[
	\Vert x_{t+1}\Vert^{2}\leq R^{2}+\frac{c_{0}}{2\alpha}<15R^{2}<(4R)^{2}.
    \]
    
    \textbf{Part III)} By Claim~\ref{clm:velocity_ub_v1}, it follows that
    \[
    \Vert v_{t+1}\Vert\leq\frac{L_{\mathcal{F}}}{R}\Vert x_{t+1}\Vert+3L_{\mathcal{F}}<7L_{\mathcal{F}}.
    \]		
\end{proof}

\subsection{Concluding Remarks}

By Lemma~\ref{lem:gmp_all}, the decision sequence $\{x_t\}_{t\geq1}$ is attracted to the hypersphere $\mathcal{B}_{R}$ and always stays inside a slightly larger hypersphere $\mathcal{B}_{4R}$.

Then, by Lemma~\ref{lem:OGD} applied with $c=4$, $d=15$, $\alpha=L_{\mathcal{F}}/R$ and step size $\eta_t=1/(\alpha\sqrt{t+d})$ we obtain
\begin{eqnarray*}
    \mathrm{Regret}_{T}&\leq&\sqrt{15+1}\left[(4+3)^{2}+\frac{1}{2}(4+1)^{2}\right]L_{\mathcal{F}}R\sqrt{T}\\&=&246L_{\mathcal{F}}R\sqrt{T}.
\end{eqnarray*}

Moreover, by Lemma~\ref{lem:feasibility_convergence}, we have
$\mathcal{Z}_{d}=\frac{3}{2}\sqrt{17}\big[\frac{L_{\mathcal{G}}}{R}+\beta_{\mathcal{G}}\big]L_{\mathcal{F}}R$ and 
\[
c_{1}=\mathcal{V}_{\alpha}\left[2L_{\mathcal{G}}+\frac{\beta_{G}\mathcal{V}_{\alpha}}{\alpha}\right]+\mathcal{Z}_{d}\leq21\left[\frac{L_{\mathcal{G}}}{R}+3\beta_{G}\right]L_{\mathcal{F}}R.
\]
Hence, the convergence rate to the feasible $\mathcal{C}$ satisfies for every $t\geq1$ and $i\in\{1,\dots,m\}$ 
\[
g_{i}(x_{t})\geq-c_{1}\eta_{t}\geq-21\left[\frac{L_{\mathcal{G}}}{R}+3\beta_{G}\right]\frac{R^{2}}{\sqrt{t+15}}.
\]

\section{Proof of Theorem~\ref{thm:avrTVC}}\label{app:thm:avrTVC}

In this section, we consider an online optimization problem with adversarially generated time-varying constraints.
More precisely, at each time step $t$, the learner receives partial information on the current cost $f_t$ and feasible set $\mathcal{C}_t$, and seeks to minimize \eqref{eq:regret}.
To make this problem well posed, we restrict the environment such that each feasible set $\mathcal{C}_t$ is contained in $\mathcal{Q}_t$ (see Section~\ref{sec:Introduction}) and the rate of change between consecutive time-varying constraints \textit{decreases} over time.
We quantify a sufficient rate of decay in Assumption~\ref{as:TVC_decay}, which we restate below for the convenience of the reader.

\begin{assumption}[TVC Decay Rate]\label{as:stvc}
We assume that the adversarially generated sequence $\{g_{t}\}_{t\geq1}$ of time-varying constraints are such that for every $x\in\mathcal{B}_{4R}$ and all $t\geq1$, the following holds
$\Vert g_{t+1}(x)-g_{t}(x)\Vert_{\infty}\leq\frac{98}{t+16}\big[\frac{L_{\mathcal{G}}}{R}+3\beta_{\mathcal{G}}\big]R^{2}$.
\end{assumption}

We note that Assumption~\ref{as:stvc} essentially only requires $\Vert g_{t+1}(x)-g_{t}(x)\Vert_{\infty}\leq\mathcal{O}(1/t)$, as $R$ can be chosen large enough such that the bound is satisfied.
Of course, $R$ will appear in our regret and feasibility bounds, but it will not affect the dependence on $t$ or $T$ (up to constant factors).

We restate Theorem~\ref{thm:avrTVC} below for the convenience of the reader.

\begin{theorem}[Time-Varying Constraints]
Suppose the functions $\{f_{t},g_{t}\}_{t\geq1}$ satisfy Assumptions~\ref{as:fg},~\ref{as:ip} and \ref{as:stvc}.
Then, on input $R,L_{\mathcal{F}}>0$ and $x_{1}\in\mathcal{B}_{R}$, Algorithm~\ref{alg:cvv-pro} applied with $\alpha=L_{\mathcal{F}}/R$, augmented velocity polyhedron $V_{\alpha}^{\prime}(\cdot)$ and step sizes $\eta_{t}=\frac{1}{\alpha\sqrt{t+15}}$ guarantees the following for all $T\geq1$:
\vspace{-5pt}
\begin{itemize}[noitemsep, topsep=0pt, leftmargin=1pc]

\item[]\textbf{(regret)} \quad\,\,\, $\sum_{t=1}^{T}f_{t}(x_{t})-\min_{x^{\star}\in\mathcal{C}}\sum_{t=1}^{T}f_{t}(x^{\star})\leq 246L_{\mathcal{F}}R\sqrt{T}$;

\item[]\textbf{(feasibility)}\,\, $g_{t,i}(x_{t})\geq-265\left[\frac{L_{\mathcal{G}}}{R}+4\beta_{\mathcal{G}}\right]\frac{R^{2}}{\sqrt{t+15}}$,\quad for all $t\in\{1,\dots,T\}$ and $i\in\{1,\dots,m\}$;

\item[]\textbf{(attraction)}\,\, $g_{m+1}(x_{t})\geq-27\frac{R^{2}}{\sqrt{t+15}}$ for all $t\in\{1,\dots,T\}$.
\end{itemize}
\end{theorem}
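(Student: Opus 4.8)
The plan is to mirror the structure of the proof of Theorem~\ref{thm:main_full} (the time-invariant case), but with two modifications that account for the time-varying constraints. First, I would establish that the bounded-iterate machinery of Lemma~\ref{lem:gmp_all} still applies: the augmented hypersphere constraint $g_{m+1}$ is genuinely time-invariant, so the attraction bound $g_{m+1}(x_t)\geq-27R^2/\sqrt{t+15}$, the bound $\|x_t\|\leq 4R$, and the velocity bound $\|v_t\|\leq 7L_{\mathcal F}$ all carry over verbatim — these proofs never used anything about the $g_{t,i}$ being fixed, only convexity of $f_t$, concavity of the $g_{t,i}$, and the uniform gradient/Lipschitz bounds on $\mathcal B_{4R}$ from Assumption~\ref{as:fg}. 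This gives the \textbf{(attraction)} bullet and the uniform bounds $\|x_t\|\leq 4R$, $\|v_t\|\leq 7L_{\mathcal F}$.

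For \textbf{(regret)}, the key is the generalization of Lemma~\ref{lem:main} announced in the text as Lemma~\ref{lem:time-varying-regret}: for the benchmark $x=x_T^\star\in\mathcal C_T\subseteq\mathcal Q_T=\cap_{\ell=0}^{T-1}\mathcal S_\ell$, one has $-r_t^{\T}(x-x_t)\leq 0$ for all $t\in\{1,\dots,T\}$. The inductive argument here is the genuinely new ingredient: at time $t$, the oracle reveals gradients of the currently violated constraints $g_{t,i}$, $i\in I(x_t)$, and Claim~\ref{clm:concave} applied to $g_{t,i}$ would only give $[\nabla g_{t,i}(x_t)]^{\T}(z-x_t)\geq 0$ for $z\in\mathcal C_t$, not for $z\in\mathcal C_T$. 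The rescue is precisely Assumption~\ref{as:ip} Part~2(ii): the cone $\mathcal S_t=\{z:G(x_t)^{\T}(z-x_t)\geq 0\}$ (built from exactly those revealed gradients, when $x_t\notin\mathrm{int}(\mathcal C_t)$) satisfies $\mathcal C_T\subseteq\mathcal Q_T\subseteq\mathcal S_t$, so $x_T^\star\in\mathcal S_t$, i.e. $[\nabla g_{t,i}(x_t)]^{\T}(x_T^\star-x_t)\geq 0$ for all $i\in I(x_t)$. One must also handle $i\in I(x_t)$ with $g_{t,i}(x_t)>0$ — but $i\in I(x_t)$ means $g_{t,i}(x_t)\leq 0$, so the case $x_t\notin\mathrm{int}(\mathcal C_t)$ covers all revealed constraints and the sign argument of Claim~\ref{clm:concave} goes through. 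With $[\nabla g_{t,i}(x_t)]^{\T}(x_T^\star-x_t)\geq 0$ in hand, the argument of Lemma~\ref{lem:main} Case~2 applies unchanged: $v(x_T^\star)=v_t+(x_T^\star-x_t)\in V_\alpha(x_t)$ (and also in $V_\alpha'(x_t)$, since $g_{m+1}$ is handled as in the time-invariant proof because $\nabla g_{m+1}(x_t)^{\T}(x_T^\star-x_t)\geq 0$ whenever $\|x_t\|\geq R\geq\|x_T^\star\|$ by concavity of $g_{m+1}$), whence $-r_t^{\T}(x_T^\star-x_t)\leq 0$. Then the telescoping inequality \eqref{eq:OGD_regret}, convexity of $f_t$, the velocity bound $\|v_t\|\leq 7L_{\mathcal F}=(4+3)L_{\mathcal F}$ from Lemma~\ref{lem:gmp_all}, and Claim~\ref{clm:series} yield the $\mathcal O(\sqrt T)$ bound exactly as in the concluding remarks of Appendix~\ref{sec:algorithmic_result}, giving the constant $\sqrt{16}[(4+3)^2+\tfrac12(4+1)^2]L_{\mathcal F}R=246L_{\mathcal F}R$ via Lemma~\ref{lem:OGD} with $c=4$, $d=15$.

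For \textbf{(feasibility)}, I would run the inductive argument of Lemma~\ref{lem:feasibility_convergence}, but Claim~\ref{clm:const_violation} must be upgraded: the recursion now compares $g_{t+1,i}(x_{t+1})$ to $g_{t,i}(x_t)$, and the step from $g_{t,i}(x_{t+1})$ to $g_{t+1,i}(x_{t+1})$ incurs an extra error term $\|g_{t+1}(\cdot)-g_{t}(\cdot)\|_\infty\leq\frac{98}{t+16}[\tfrac{L_{\mathcal G}}{R}+3\beta_{\mathcal G}]R^2$ from Assumption~\ref{as:stvc}. Since $\frac{98}{t+16}=98\,\alpha\eta_{t+1}/ (L_{\mathcal F}/R)$ is itself $\Theta(\eta_{t+1})$, this extra term is absorbable into the induction constant $c_1$: one re-derives $g_{t+1,i}(x_{t+1})\geq(1-\alpha\eta_t)g_{t,i}(x_t)-\eta_t^2\mathcal V_\alpha^2\beta_{\mathcal G}/2 - \Delta_{t+1}$ for $i\in I(x_t)$ and the analogous inequality for $i\notin I(x_t)$, where $\Delta_{t+1}$ is the TVC perturbation, and the same telescoping/induction as in Lemma~\ref{lem:feasibility_convergence} closes with a larger explicit constant (the stated $265[\tfrac{L_{\mathcal G}}{R}+4\beta_{\mathcal G}]R^2$). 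One uses $\mathcal V_\alpha=7L_{\mathcal F}$, $\alpha=L_{\mathcal F}/R$, $d=15$, and the bound $\mathcal Z_d$ from Lemma~\ref{lem:feasibility_convergence}; the key quantitative check is that $\alpha-\frac{\eta_t-\eta_{t+1}}{\eta_t^2}$ still dominates the (now slightly larger) right-hand side, which holds for $t\geq 1$ since $d=15$ pushes us into the regime where $\sqrt{\tfrac{t+15}{t+16}}(\sqrt{t+16}-\sqrt{t+15})<0.12$. I expect the main obstacle to be the feasibility induction: bookkeeping the TVC error term $\Delta_{t+1}$ through the case distinction and verifying that the resulting constant-comparison inequality (the analogue of \eqref{eq:c_lb}) is still satisfied with the shifted index $t+15$ is where the argument is most delicate, though it is ultimately a routine — if tedious — strengthening of Claim~\ref{clm:const_violation} and Lemma~\ref{lem:feasibility_convergence}.
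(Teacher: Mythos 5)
Your proposal follows essentially the same route as the paper: the attraction and boundedness bounds of Lemma~\ref{lem:gmp_all} are carried over unchanged, the regret is obtained from the polyhedral-intersection normal-cone property (the paper's Lemma~\ref{lem:time-varying-regret}, proved exactly as you describe via Assumption~\ref{as:ip} Part~2(ii) and the $v(x)=v_t+x-x_t$ argument) plugged into Lemma~\ref{lem:OGD} with $c=4$, $d=15$, and the feasibility bound comes from augmenting the recursion of Claim~\ref{clm:const_violation} by the TVC perturbation of Assumption~\ref{as:TVC_decay} and re-running the induction of Lemma~\ref{lem:feasibility_convergence} (this is the paper's Corollary~\ref{cor:TVC_ind_2} and Lemma~\ref{lem:AvrTVC}). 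Your explicit handling of the augmented polyhedron $V_{\alpha}^{\prime}$ in the normal-cone step (using $\nabla g_{m+1}(x_t)^{\T}(x_T^{\star}-x_t)\geq0$ when $\Vert x_t\Vert\geq R$) is a welcome detail the paper leaves implicit.

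One correction to the feasibility part: the perturbation is of order $\eta_{t+1}^{2}$, not $\eta_{t+1}$. Your identity ``$\frac{98}{t+16}=98\,\alpha\eta_{t+1}/(L_{\mathcal F}/R)$'' is false; the correct relation is $\frac{98}{t+16}\big[\frac{L_{\mathcal G}}{R}+3\beta_{\mathcal G}\big]R^{2}=2\eta_{t+1}^{2}\big[\frac{L_{\mathcal G}}{R}+3\beta_{\mathcal G}\big]\mathcal V_{\alpha}^{2}$ with $\mathcal V_{\alpha}=7L_{\mathcal F}$, since $\eta_{t+1}^{2}\mathcal V_{\alpha}^{2}=49R^{2}/(t+16)$. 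This matters: a perturbation that were genuinely $\Theta(\eta_{t+1})$ could not be absorbed into the constant of the induction $g_{t,i}(x_t)\geq-c\,\eta_t$ (the needed inequality would compare a term of order $\eta_t$ against the available slack of order $\eta_t^{2}$, and would fail for large $t$); it is precisely the $\Theta(1/t)=\Theta(\eta_t^{2})$ decay in Assumption~\ref{as:TVC_decay} that lets the analogue of \eqref{eq:c_lb} close with an enlarged constant, exactly as in the paper's Corollary~\ref{cor:TVC_ind_2} and Lemma~\ref{lem:AvrTVC}. With that scaling corrected, the rest of your bookkeeping plan goes through and reproduces the stated constants.
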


\paragraph{Outline} This section is organized as follows. 
In Subsection~\ref{subsec:KeyGeopProp}, we introduce a key geometric property that allows us to generalize the standard online gradient descent analysis to the setting of time-varying constraints.
In Subsection~\ref{subsec:ProofOverviewThm:avrTCV}, we give an overview of our proof approach for Theorem~\ref{thm:avrTVC}.
In Subsection~\ref{subsec:slowlyTCV}, we present the analysis that quantifies the convergence rate to the feasible set for the setting of slowly time-varying constraints.
Finally, in Subsection~\ref{subsec:ATVC}, we give an important special case, slightly generalizing Lemma~\ref{lem:construct_average}, for which Assumption~\ref{as:stvc} is satisfied.

\subsection{Key Geometric Property}\label{subsec:KeyGeopProp}

Our regret analysis builds upon the following key geometric property that generalizes Lemma~\ref{lem:main} to time-varying constraints.
We show that for any subset $\mathcal{C}_T$ of the polyhedral intersection $\mathcal{Q}_T$, 
every decision $x\in\mathcal{C}_T$ satisfies the normal cone constraint $-r_{t}^{\T}(x - x_{t}) \leq 0$, for every pair $(x_t,r_t)$ in the decision sequence $\{(x_t,r_t)\}_{t=1}^{T}$ up to step $T$.
As a result, a similar argument as in \eqref{eq:OGD_regret} yields $\mathcal{O}(\sqrt{T})$ regret in the time-varying constraint setting.

\begin{lemma}[Polyhedral Intersection]\label{lem:time-varying-regret}
Let $\mathcal{C}_T$ be any subset of the polyhedral intersection $\mathcal{Q}_T$. Then, every decision $x\in\mathcal{C}_T$ satisfies the normal cone constraint $-r_{t}^{\T}(x - x_{t}) \leq 0$, $\forall t\in\{1,\dots,T\}$.
\end{lemma}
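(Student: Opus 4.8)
\textbf{Proof proposal for Lemma~\ref{lem:time-varying-regret}.}

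The plan is to reduce the time-varying statement to the time-invariant geometric lemma (Lemma~\ref{lem:main}) by exploiting the structure of the polyhedral intersection $\mathcal{Q}_T=\cap_{\ell=0}^{T-1}\mathcal{S}_\ell$. Fix $t\in\{1,\dots,T\}$ and an arbitrary $x\in\mathcal{C}_T\subseteq\mathcal{Q}_T$. I would first distinguish two cases, exactly as in the proof of Lemma~\ref{lem:main}. If $I(x_t)=\emptyset$, then $-\nabla f_t(x_t)$ already lies in $V_\alpha(x_t)=V_\alpha'(x_t)=\mathbb{R}^n$ (the hypersphere constraint $g_{m+1}$ is inactive here since $\Vert x_t\Vert\le R$ by the structural setup, or if it is active it only adds a valid half-space through a feasible direction), hence $r_t=0$ and the inequality $-r_t^\T(x-x_t)\le 0$ is trivial. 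The substantive case is $I(x_t)\neq\emptyset$.

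In that case the key observation is that $x\in\mathcal{Q}_T\subseteq\mathcal{S}_t=\{z\in\mathbb{R}^n~|~G(x_t)^\T(z-x_t)\ge 0\}$, since $t\le T$ means the cone $\mathcal{S}_t$ is one of the factors in the intersection defining $\mathcal{Q}_T$ (recall $\mathcal{Q}_T=\cap_{\ell=0}^{T-1}\mathcal{S}_\ell$ and $t-1\le T-1$). Therefore $[\nabla g_{t,i}(x_t)]^\T(x-x_t)\ge 0$ for every $i\in I(x_t)$ — this is precisely the inequality that Claim~\ref{clm:concave} supplied in the time-invariant case via concavity of a single fixed $g$, and here it is furnished instead directly by the interaction-protocol assumption $\mathcal{C}_t\subseteq\mathcal{Q}_t$ together with $\mathcal{C}_T\subseteq\mathcal{Q}_T$. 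Once we have $[\nabla g_{t,i}(x_t)]^\T(x-x_t)\ge 0$ for all active $i$, the displacement $v(x):=v_t+(x-x_t)$ satisfies $[\nabla g_{t,i}(x_t)]^\T v(x)=[\nabla g_{t,i}(x_t)]^\T v_t+[\nabla g_{t,i}(x_t)]^\T(x-x_t)\ge -\alpha g_{t,i}(x_t)+0$, so $v(x)\in V_\alpha(x_t)$; if the augmented polyhedron $V_\alpha'(x_t)$ is in force, one additionally checks that $g_{m+1}$ is concave and $x\in\mathcal{B}_R$ implies $[\nabla g_{m+1}(x_t)]^\T(x-x_t)\ge -g_{m+1}(x_t)\ge -\alpha g_{m+1}(x_t)$ whenever $\alpha\le 1$ (or one absorbs this into a slightly larger constant), so $v(x)\in V_\alpha'(x_t)$ as well. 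Since $-r_t\in N_{V_\alpha'(x_t)}(v_t)$, applying the normal-cone inequality $-r_t^\T(v-v_t)\le 0$ with $v=v(x)$ gives $-r_t^\T(x-x_t)\le 0$, as desired.

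The main obstacle — really the only delicate point — is handling the interaction between the original constraint cone $\mathcal{S}_t$ and the extra hypersphere constraint $g_{m+1}$ used in the augmented polyhedron $V_\alpha'$: one must confirm that the benchmark $x\in\mathcal{C}_T$, which is only guaranteed to lie in $\mathcal{B}_R$ and in $\mathcal{Q}_T$, still produces a feasible direction $v(x)\in V_\alpha'(x_t)$ for the augmented problem. This should follow because $g_{m+1}$ is genuinely concave (indeed $1$-smooth, Definition in Appendix~\ref{sec:algorithmic_result}) so the linearization bound $g_{m+1}(x)\le g_{m+1}(x_t)+[\nabla g_{m+1}(x_t)]^\T(x-x_t)$ holds, and $g_{m+1}(x)\ge 0$ for $x\in\mathcal{B}_R$; I would just need to be slightly careful about the factor $\alpha$ versus $1$ in the right-hand side of the velocity constraint, which at worst changes a constant in the final regret bound and not the $\sqrt{T}$ rate. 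After establishing the lemma, the regret bound follows by the telescoping argument of \eqref{eq:OGD_regret} together with the velocity bound $\Vert v_t\Vert\le 7L_\mathcal{F}$ from Lemma~\ref{lem:gmp_all}, exactly paralleling Lemma~\ref{lem:OGD}.
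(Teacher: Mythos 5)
Your argument follows the paper's proof almost exactly — shift $v(x)=v_t+(x-x_t)$ into the velocity polyhedron and apply the normal-cone inequality — but there is one genuine gap: the claim that $x\in\mathcal{Q}_T\subseteq\mathcal{S}_t$ for every $t\le T$. By definition $\mathcal{Q}_T=\cap_{\ell=0}^{T-1}\mathcal{S}_\ell$, so the factors are $\mathcal{S}_0,\dots,\mathcal{S}_{T-1}$ only; the cone $\mathcal{S}_T$, built from the gradients revealed at $x_T$, constrains the \emph{future} feasible sets $\mathcal{C}_{T+1},\dots$, and the interaction protocol never forces $\mathcal{C}_T\subseteq\mathcal{S}_T$. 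Hence for $t=T$ your stated reason for $[\nabla g_{T,i}(x_T)]^{\T}(x-x_T)\ge 0$ fails (the parenthetical ``$t-1\le T-1$'' is an off-by-one slip). The conclusion is nevertheless true, but it must be obtained exactly as in the time-invariant case: since $x\in\mathcal{C}_T$ means $g_{T,i}(x)\ge 0$ while $g_{T,i}(x_T)\le 0$ for $i\in I(x_T)$, concavity (Claim~\ref{clm:concave}, equivalently Lemma~\ref{lem:main} applied with the constraints $g_T$) gives $[\nabla g_{T,i}(x_T)]^{\T}(x-x_T)\ge -g_{T,i}(x_T)\ge 0$. This is precisely how the paper argues: it handles $t=T$ via Lemma~\ref{lem:main} and only then uses the polyhedral intersection for $t\in\{1,\dots,T-1\}$. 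With that repair, your proof coincides with the paper's.

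On the side issue you raise about the augmented polyhedron $V_{\alpha}^{\prime}(x_t)$: no condition $\alpha\le 1$ (and no change of constants) is needed. The hypersphere constraint is only present when $\Vert x_t\Vert\ge R$, i.e., $g_{m+1}(x_t)\le 0$, while the benchmark satisfies $x\in\mathcal{C}_T\subseteq\mathcal{B}_R$, i.e., $g_{m+1}(x)\ge 0$; concavity then gives $[\nabla g_{m+1}(x_t)]^{\T}(x-x_t)\ge -g_{m+1}(x_t)\ge 0$, and adding this nonnegative quantity to $[\nabla g_{m+1}(x_t)]^{\T}v_t\ge -\alpha g_{m+1}(x_t)$ already yields $v(x)\in V_{\alpha}^{\prime}(x_t)$; no comparison between $-g_{m+1}(x_t)$ and $-\alpha g_{m+1}(x_t)$ is ever required.
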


\begin{proof}
Using $\mathcal{S}_0=\mathbb{R}^n$, $\mathcal{C}_T$ is contained in $\cap_{t=1}^{T-1}\{x\in\mathbb{R}^n~|~G(x_t)^{\T}(x-x_t)\geq0\}$.
Since $x\in\mathcal{C}_T$, it follows by Lemma~\ref{lem:main} that $r_T^{\T}(x-x_T)\leq0$.
The proof proceeds by case distinction.
Let $t\in\{1,\dots,T-1\}$ be arbitrary.
Suppose $x_t\in\mathcal{C}_t$, then by Part 1 in the proof of Lemma~\ref{lem:main} we have $r_t=0$.
Suppose $x_t\not\in\mathcal{C}_t$, then $x\in\mathcal{C}_T$ implies $G(x_t)^{\T}(x-x_t)\geq0$ or equivalently $\nabla g_{t,i}(x_t)^{\T}(x-x_t)\geq0$ for all $i\in I(x_t)$.
Since $v_t\in V_{\alpha}(x_t)$, it follows that $v(x)=v_t+x-x_t\in V_{\alpha}(x_t)$.
Moreover, the vector $-r_t$ belongs to the normal cone $N_{V_{\alpha}(x_t)}(v_t)$, which implies $-r_t^{\T}(v-v_t)\leq0$ for all $v\in V_{\alpha}(x_t)$.
In particular, for $v(x)$ we have $-r_t^{\T}(x-x_t)\leq0$.
\end{proof}

\newpage
\subsection{Proof Overview of Theorem~\ref{thm:avrTVC}}\label{subsec:ProofOverviewThm:avrTCV}

By Assumption~\ref{as:fg}, the slowly time-varying constraints $g_{t,i}(x)$ are concave and $\beta_{\mathcal{G}}$-smooth such that $\Vert\nabla g_{t,i}(x)\Vert\leq L_{\mathcal{G}}$ for all $x\in\mathcal{B}_{4R}$, $t\geq1$ and $i\in\{1,\dots,m\}$.
By construction, see Lemma~\ref{lem:gmp_all}, $\eta_{t}=1/(\alpha\sqrt{t+15})$, $\alpha=L_{\mathcal{F}}/R$ and $\mathcal{V}_{\alpha}=7L_{\mathcal{F}}$ implies that $\eta_{t+1}\mathcal{V}_{\alpha}=7R/\sqrt{t+16}$. We note that Lemma~\ref{lem:gmp_all} still holds for time-varying constraints, which implies $\Vert x_t \Vert \leq 4R$ and $\Vert v_t \Vert\leq 7L_{\mathcal{F}}$.

Further, by Assumption~\ref{as:stvc} we have for every $x\in\mathcal{B}_{4R}$, $t\geq1$ and $i\in\{1,\dots,m\}$ that
\begin{equation}\label{eq:gtp_gt_VaR2}
\left|g_{t+1,i}(x)-g_{t,i}(x)\right|\leq\frac{98}{t+16}\left[\frac{L_{\mathcal{G}}}{R}+3\beta_{\mathcal{G}}\right]R^{2}=2\eta_{t+1}^{2}\left[\frac{L_{\mathcal{G}}}{R}+3\beta_{\mathcal{G}}\right]\mathcal{V}_{\alpha}^{2}.
\end{equation}
Then, applying the preceding inequality and using similar arguments as in Part 2) of Section~\ref{subsec:thm:main}, we give in Corollary~\ref{cor:TVC_ind_2} bounds on the slowly time-varying constraints $g_{t,i}(x)$ from below.
In particular, we show that
\[
g_{t+1,i}(x_{t+1})\geq(1-\alpha\eta_{t})g_{t,i}(x_{t})-\eta_{t}^{2}\left[2\frac{L_{\mathcal{G}}}{R}+7\beta_{\mathcal{G}}\right]\mathcal{V}_{\alpha}^{2}\qquad\text{for all } i\in I(x_{t}),
\]
and
\[
g_{t+1,i}(x_{t+1})\geq-\eta_{t+1}7\mathcal{V}_{\alpha}\left[L_{\mathcal{G}}+\frac{\beta_{\mathcal{G}}\mathcal{V}_{\alpha}}{4\alpha}\right]\qquad\text{for all } i\in \{1,\dots,m\}\backslash I(x_{t}).
\]
Using a similar inductive argument as in Lemma~\ref{lem:feasibility_convergence}, we show in Lemma~\ref{lem:AvrTVC} that in the setting of slowly time-varying constraints, the following feasibility convergence rate holds
\[
    g_{t,i}(x_{t})\geq-\left[265\frac{L_{\mathcal{G}}}{R}+927\beta_{\mathcal{G}}\right]\frac{R^{2}}{\sqrt{t+15}},\qquad\text{for all }t\in\{1,\dots,T\}\text{ and }i\in\{1,\dots,m\}.
\]
Then, the regret and the attraction to the feasible sets follow as in Theorem~\ref{thm:main_full}.

\subsection{Slowly Time-Varying Constraints}\label{subsec:slowlyTCV}

\begin{lemma}[Slowly TVC]\label{lem:AvrTVC}
    Suppose Assumption~\ref{as:fg} holds, $x_1\in\mathcal{B}_{R}$, $\alpha=L_{\mathcal{F}}/R$ and step sizes $\eta_t=1/(\alpha\sqrt{t+15})$.
    Then, for every $i\in\{1,\dots,m\}$ and $T\geq1$ we have
    \[
    g_{t,i}(x_{t})\geq-\left[265\frac{L_{\mathcal{G}}}{R}+927\beta_{\mathcal{G}}\right]\frac{R^{2}}{\sqrt{t+15}},\qquad\text{for all }t\in\{1,\dots,T\}\text{ and }i\in\{1,\dots,m\}.
    \]
\end{lemma}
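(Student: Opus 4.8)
\textbf{Proof plan for Lemma~\ref{lem:AvrTVC}.}

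The plan is to run an induction on $t$ that is structurally parallel to the one used in Lemma~\ref{lem:feasibility_convergence} and Lemma~\ref{lem:gmp_all}, but that now must absorb the extra drift term coming from the fact that the constraint functions themselves move between rounds. First I would record the three ingredients that feed the induction: (i) Lemma~\ref{lem:gmp_all} (which continues to hold for time-varying constraints, since its proof only uses the hypersphere constraint $g_{m+1}$ and not the $g_{t,i}$), giving $\Vert x_t\Vert\le 4R$ and $\Vert v_t\Vert\le 7L_{\mathcal F}=:\mathcal V_\alpha$ for all $t$; (ii) the per-step constraint-decrease bounds of Corollary~\ref{cor:TVC_ind_2}, namely $g_{t+1,i}(x_{t+1})\ge(1-\alpha\eta_t)g_{t,i}(x_t)-\eta_t^2[2L_{\mathcal G}/R+7\beta_{\mathcal G}]\mathcal V_\alpha^2$ for $i\in I(x_t)$ and $g_{t+1,i}(x_{t+1})\ge-\eta_{t+1}\,7\mathcal V_\alpha[L_{\mathcal G}+\beta_{\mathcal G}\mathcal V_\alpha/(4\alpha)]$ for $i\notin I(x_t)$; and (iii) the Assumption~\ref{as:stvc}/\eqref{eq:gtp_gt_VaR2} bound $|g_{t+1,i}(x)-g_{t,i}(x)|\le 2\eta_{t+1}^2[L_{\mathcal G}/R+3\beta_{\mathcal G}]\mathcal V_\alpha^2$ on $\mathcal B_{4R}$, which is already baked into (ii) but is needed separately for the base case. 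The claim to be proved by induction is $g_{t,i}(x_t)\ge -c_1\eta_t$ for all $i$ and all $t\ge1$, with $c_1$ a constant of the form $[\text{const}\cdot L_{\mathcal G}/R+\text{const}\cdot\beta_{\mathcal G}]\alpha R^2$ to be pinned down; the stated bound then follows because $\eta_t=1/(\alpha\sqrt{t+15})$ and $\alpha=L_{\mathcal F}/R$ turn $c_1\eta_t$ into the displayed $R^2/\sqrt{t+15}$ expression.

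For the base case $t=1$ I would bound $g_{1,i}(x_1)$ directly: since $x_1\in\mathcal B_R$, $\mathcal C_1\subseteq\mathcal B_R$, and $g_{1,i}$ is concave and $\beta_{\mathcal G}$-smooth with gradient bounded by $L_{\mathcal G}$ on $\mathcal B_{4R}$, the same two-point estimate used in Case 2 of Lemma~\ref{lem:feasibility_convergence} gives $g_{1,i}(x_1)\ge -2L_{\mathcal G}R-2\beta_{\mathcal G}R^2$, which is $\ge -c_1\eta_1$ once $c_1$ is large enough. For the inductive step I would split on whether $i\in I(x_t)$: if $i\notin I(x_t)$, the second bound from Corollary~\ref{cor:TVC_ind_2} already gives $g_{t+1,i}(x_{t+1})\ge-c_1\eta_{t+1}$ provided $c_1\ge 7\mathcal V_\alpha[L_{\mathcal G}+\beta_{\mathcal G}\mathcal V_\alpha/(4\alpha)]$; if $i\in I(x_t)$, I combine the first bound with the inductive hypothesis $g_{t,i}(x_t)\ge-c_1\eta_t$ to get
\[
g_{t+1,i}(x_{t+1})\ge -c_1\eta_{t+1}+c_1\eta_t\Bigl[\tfrac{\eta_{t+1}}{\eta_t}-1+\alpha\eta_t-\eta_t\tfrac{[2L_{\mathcal G}/R+7\beta_{\mathcal G}]\mathcal V_\alpha^2}{c_1}\Bigr],
\]
and it suffices to make the bracket nonnegative. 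Using $\eta_t=1/(\alpha\sqrt{t+15})$ one has $(\eta_t-\eta_{t+1})/\eta_t^2=\alpha\sqrt{(t+15)/(t+16)}(\sqrt{t+16}-\sqrt{t+15})$, and the elementary estimate $\max_{t\ge16}\sqrt{t/(t+1)}(\sqrt{t+1}-\sqrt{t})<0.12$ (already invoked in Lemma~\ref{lem:gmp_all}) reduces the requirement to $\alpha-0.12\,\alpha\ge [2L_{\mathcal G}/R+7\beta_{\mathcal G}]\mathcal V_\alpha^2/c_1$, i.e.\ $c_1\ge (1/0.88)[2L_{\mathcal G}/R+7\beta_{\mathcal G}]\mathcal V_\alpha^2/\alpha$. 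Taking $c_1$ to be the maximum of the two lower bounds just derived, substituting $\mathcal V_\alpha=7L_{\mathcal F}$ and $\alpha=L_{\mathcal F}/R$, and simplifying yields $c_1\le[265L_{\mathcal G}/R+927\beta_{\mathcal G}]\alpha R^2$, hence $g_{t,i}(x_t)\ge-c_1\eta_t=-[265L_{\mathcal G}/R+927\beta_{\mathcal G}]R^2/\sqrt{t+15}$, as claimed.

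The main obstacle I anticipate is purely bookkeeping rather than conceptual: getting the constant $c_1$ to close consistently across both the base case, the two inductive cases, and the numerical slack factor $1/0.88$, while keeping the final expression within the stated $265$ and $927$. In particular one must be careful that the drift term from \eqref{eq:gtp_gt_VaR2} is genuinely dominated on $\mathcal B_{4R}$ (which is why Lemma~\ref{lem:gmp_all} guaranteeing $\Vert x_t\Vert\le4R$ is invoked first — it is what makes Assumption~\ref{as:stvc} applicable at the iterates), and that the index set $I(x_t)$ at round $t$ is the one used in constructing $V_\alpha(x_t)$, so that Corollary~\ref{cor:TVC_ind_2}'s case split matches the induction's case split exactly. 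Once Lemma~\ref{lem:AvrTVC} is in hand, the regret bound $246L_{\mathcal F}R\sqrt{T}$ follows verbatim from Lemma~\ref{lem:OGD} with $c=4$, $d=15$ (its proof uses only convexity of $f_t$ and the geometric inequality $r_t^{\T}(x_t-x^\star)\le0$, which Lemma~\ref{lem:time-varying-regret} supplies for $x^\star\in\mathcal C_T\subseteq\mathcal Q_T$), and the attraction bound $g_{m+1}(x_t)\ge-27R^2/\sqrt{t+15}$ is exactly the output of Lemma~\ref{lem:gmp_all}; assembling these three pieces completes the proof of Theorem~\ref{thm:avrTVC}.
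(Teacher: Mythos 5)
Your proposal is correct and follows essentially the same route as the paper: an induction on $t$ with the case split on $i\in I(x_t)$, fed by Corollary~\ref{cor:TVC_ind_2}, the bounds $\Vert x_t\Vert\leq 4R$, $\Vert v_t\Vert\leq 7L_{\mathcal F}$ from Lemma~\ref{lem:gmp_all}, and the same elementary step-size estimate, with the base case handled by the two-point concavity/smoothness bound. The only difference is bookkeeping: in the $i\in I(x_t)$ case you close the induction with the factor $1/0.88$ (giving a constant around $[112\,L_{\mathcal G}/R+390\,\beta_{\mathcal G}]L_{\mathcal F}R$), whereas the paper conservatively reuses the $2.7A/\alpha$ bound from Lemma~\ref{lem:gmp_all}, and both choices fall within the stated $265$ and $927$.
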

\begin{proof}
    The proof is by induction on $t$. 
    We start with the base case $t=1$.
    The proof proceeds by case distinction.
    
    \textbf{Case 1.} Suppose $i\in \{1,\dots,m\}\backslash I(x_{1})$, i.e., $g_{1,i}(x_{1})>0$. 
    Then, by Corollary~\ref{cor:TVC_ind_2} Part ii) we have
    \[
        g_{2,i}(x_{2})\geq-\eta_{2}7\mathcal{V}_{\alpha}\left[L_{\mathcal{G}}+\frac{\beta_{\mathcal{G}}\mathcal{V}_{\alpha}}{4\alpha}\right]\geq-\eta_{2}\left[49\frac{L_{\mathcal{G}}}{R}+86\beta_{\mathcal{G}}\right]L_{\mathcal{F}}R.
    \] 
    \textbf{Case 2.} Suppose $i\in I(x_{1})$, i.e., $g_{1,i}(x_{1})\leq0$. 
    By combining $x_1\in\mathcal{B}_{R}$ and $g_{1,i}$ is concave $\beta_{\mathcal{G}}$-smooth, it follows for every $x\in\mathcal{C}_1\subseteq\mathcal{B}_{R}$ that
    \begin{eqnarray*}
        g_{1,i}(x_{1})&\geq&g_{1,i}(x)+\nabla g_{1,i}(x)^{T}(x_{1}-x)-\frac{\beta_{\mathcal{G}}}{2}\Vert x_{1}-x\Vert^{2}\\&\geq&-2L_{\mathcal{G}}R-2\beta_{\mathcal{G}}R^{2}\\&=&-\eta_{1}\left[8\frac{L_{\mathcal{G}}}{R}+8\beta_{\mathcal{G}}\right]L_{\mathcal{F}}R.
    \end{eqnarray*}
    Using $\eta_{t}=1/(\alpha\sqrt{t+15})$ and $\eta_{1}/\eta_{2}\leq\sqrt{2}$, it follows that 
    \[
    (1-\alpha\eta_{1})g_{1,i}(x_{1})\geq-\eta_{1}\left[\frac{L_{\mathcal{G}}}{R}+\beta_{\mathcal{G}}\right]6L_{\mathcal{F}}R\geq-\eta_{2}\left[9\frac{L_{\mathcal{G}}}{R}+9\beta_{\mathcal{G}}\right]L_{\mathcal{F}}R
    \]
    and
    \[
    \eta_{1}^{2}\left[2\frac{L_{\mathcal{G}}}{R}+7\beta_{\mathcal{G}}\right]\mathcal{V}_{\alpha}^{2}\leq\eta_{2}^{2}\left[4\frac{L_{\mathcal{G}}}{R}+14\beta_{\mathcal{G}}\right]\mathcal{V}_{\alpha}^{2}\leq\eta_{2}\left[49\frac{L_{\mathcal{G}}}{R}+172\beta_{\mathcal{G}}\right]L_{\mathcal{F}}R.
    \]
    Then, by Corollary~\ref{cor:TVC_ind_2} Part i) we have
    \begin{eqnarray*}
        g_{2,i}(x_{2})&\geq&(1-\alpha\eta_{1})g_{1,i}(x_{1})-\eta_{1}^{2}\left[2\frac{L_{\mathcal{G}}}{R}+7\beta_{\mathcal{G}}\right]\mathcal{V}_{\alpha}^{2}\\&\geq&-\eta_{2}\left[58\frac{L_{\mathcal{G}}}{R}+181\beta_{\mathcal{G}}\right]L_{F}R.
    \end{eqnarray*}
    Our inductive hypothesis is $g_{t,i}(x_{t})\geq-c_{2}\eta_{t}$ for all $i$.
    We now show that it holds for $t+1$. 
    
    \textbf{Case 1.} Suppose $i\in \{1,\dots,m\}\backslash I(x_{1})$, i.e., $g_{i}(x_{t})>0$.
    Then by Corollary~\ref{cor:TVC_ind_2} ii)
    \[
    g_{t+1,i}(x_{t+1})\geq-\eta_{t+1}7\mathcal{V}_{\alpha}\left[L_{\mathcal{G}}+\frac{\beta_{\mathcal{G}}\mathcal{V}_{\alpha}}{4\alpha}\right]\geq-\eta_{t+1}\left[49\frac{L_{\mathcal{G}}}{R}+86\beta_{\mathcal{G}}\right]L_{\mathcal{F}}R.
    \]

    \textbf{Case 2.} Suppose $i\in I(x_{t})$, i.e., $g_{i}(x_{t})\leq0$.
    Let $A=\left[2\frac{L_{\mathcal{G}}}{R}+7\beta_{\mathcal{G}}\right]\mathcal{V}_{\alpha}^{2}$.
    By combining Corollary~\ref{cor:TVC_ind_2} Part i), the inductive hypothesis and using similar arguments as in the proof of Lemma~\ref{lem:gmp_all} Case 2, yields
    \[
    g_{t+1,i}(x_{t+1})\geq-c_{2}\eta_{t+1},\quad\text{where}\quad c_{2}=2.7\frac{A}{\alpha}=\left[265\frac{L_{\mathcal{G}}}{R}+927\beta_{\mathcal{G}}\right]L_{\mathcal{F}}R.
    \]
    The feasibility convergence rate is then given by
    \[
    g_{t,i}(x_{t})\geq-\left[265\frac{L_{\mathcal{G}}}{R}+927\beta_{\mathcal{G}}\right]\frac{R^{2}}{\sqrt{t+15}}.
    \]
\end{proof}

\begin{corollary}\label{cor:TVC_ind_2} 
    Suppose Assumptions~\ref{as:fg} and Assumption~\ref{as:stvc} hold.
    Let $\alpha=L_{\mathcal{F}}/R$, $\mathcal{V}_{\alpha}=7L_{\mathcal{F}}$ and step sizes $\eta_{t}=1/(\alpha\sqrt{t+15})$.
    Then, for every $t\geq1$ we have
    
    i) $g_{t+1,i}(x_{t+1})\geq(1-\alpha\eta_{t})g_{t,i}(x_{t})-\eta_{t}^{2}\big[2\frac{L_{\mathcal{G}}}{R}+7\beta_{\mathcal{G}}\big]\mathcal{V}_{\alpha}^{2}$ for all $i\in I(x_{t})$; and
    
    ii) $g_{t+1,i}(x_{t+1})\geq-\eta_{t+1}7\mathcal{V}_{\alpha}\big[L_{\mathcal{G}}+\frac{\beta_{\mathcal{G}}\mathcal{V}_{\alpha}}{4\alpha}\big]$
    for all $i\in\{1,\dots,m\}\backslash I(x_{t})$.
\end{corollary}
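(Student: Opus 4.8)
The plan is to reduce this two–sided per–step recursion to the time–invariant bound already established in Claim~\ref{clm:const_violation}, plus a controlled perturbation arising from the drift of the constraint function itself from $g_{t,i}$ to $g_{t+1,i}$. The organizing identity is the decomposition
\[
g_{t+1,i}(x_{t+1}) = g_{t,i}(x_{t+1}) + \big[g_{t+1,i}(x_{t+1}) - g_{t,i}(x_{t+1})\big],
\]
where the first summand is the ``frozen–constraint'' quantity controlled in the time–invariant analysis and the second is the drift error. First I would lower–bound the drift by $-2\eta_{t+1}^{2}[\frac{L_{\mathcal{G}}}{R}+3\beta_{\mathcal{G}}]\mathcal{V}_{\alpha}^{2}$ using the restated decay rate \eqref{eq:gtp_gt_VaR2}. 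This step is legitimate because Lemma~\ref{lem:gmp_all} guarantees $x_{t+1}\in\mathcal{B}_{4R}$, which is precisely the domain on which Assumption~\ref{as:stvc} supplies its uniform $\ell_\infty$ bound.

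Next I would bound the frozen term $g_{t,i}(x_{t+1})$ by invoking Claim~\ref{clm:const_violation} with the single \emph{fixed} function $g_{t,i}$, taking $c=4$ and $d=15$. The hypotheses of that claim hold since $g_{t,i}$ is concave, $\beta_{\mathcal{G}}$–smooth, and $L_{\mathcal{G}}$–Lipschitz on $\mathcal{B}_{4R}$ by Assumption~\ref{as:fg}, while $x_t\in\mathcal{B}_{4R}$ and $\lVert v_t\rVert\leq\mathcal{V}_{\alpha}=7L_{\mathcal{F}}$ by Lemma~\ref{lem:gmp_all}; note that $\sqrt{1+d}=4$, so the Part~ii) constant specializes to $\mathcal{V}_{\alpha}\beta_{\mathcal{G}}/(4\alpha)$, matching the corollary. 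For $i\in I(x_t)$ Claim~\ref{clm:const_violation}(i) gives $g_{t,i}(x_{t+1})\geq(1-\alpha\eta_t)g_{t,i}(x_t)-\eta_t^{2}\mathcal{V}_{\alpha}^{2}\beta_{\mathcal{G}}/2$, and for $i\notin I(x_t)$ Claim~\ref{clm:const_violation}(ii) gives $g_{t,i}(x_{t+1})\geq-\eta_{t+1}\mathcal{V}_{\alpha}[2L_{\mathcal{G}}+\mathcal{V}_{\alpha}\beta_{\mathcal{G}}/(4\alpha)]$. Adding the drift bound to each of these, together with the step–size monotonicity $\eta_{t+1}\leq\eta_t$, then delivers the two assertions.

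The main obstacle — and really the only nontrivial content — is the constant bookkeeping that confirms the drift term fits inside the stated coefficients. For part~(i) I would use $\eta_{t+1}^{2}\leq\eta_t^{2}$ to fold the drift into the smoothness term, so that the aggregate coefficient $\tfrac{1}{2}\beta_{\mathcal{G}}+2[\frac{L_{\mathcal{G}}}{R}+3\beta_{\mathcal{G}}]=2\frac{L_{\mathcal{G}}}{R}+\tfrac{13}{2}\beta_{\mathcal{G}}$ is dominated by the target $2\frac{L_{\mathcal{G}}}{R}+7\beta_{\mathcal{G}}$. Part~(ii) is more delicate, because the Claim bound scales like $\eta_{t+1}$ whereas the drift scales like $\eta_{t+1}^{2}$; the key device is $\eta_{t+1}\leq\eta_1=1/(4\alpha)$, which converts $\eta_{t+1}^{2}$ into $\eta_{t+1}/(4\alpha)$ and so places the drift on the same $\eta_{t+1}$ scale as the Claim bound. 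Substituting $\alpha=L_{\mathcal{F}}/R$ and $\mathcal{V}_{\alpha}=7L_{\mathcal{F}}$ and collecting the $L_{\mathcal{F}}L_{\mathcal{G}}$ and $L_{\mathcal{F}}\beta_{\mathcal{G}}R$ contributions, one verifies that the combined constant is at most $7\mathcal{V}_{\alpha}[L_{\mathcal{G}}+\beta_{\mathcal{G}}\mathcal{V}_{\alpha}/(4\alpha)]$, with the $\beta_{\mathcal{G}}$–coefficient in fact meeting this with equality; this closes part~(ii) and hence the corollary.
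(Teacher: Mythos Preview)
Your proposal is correct and follows essentially the same route as the paper's proof: the paper also decomposes $g_{t+1,i}(x_{t+1})=g_{t,i}(x_{t+1})+[g_{t+1,i}(x_{t+1})-g_{t,i}(x_{t+1})]$, bounds the drift via \eqref{eq:gtp_gt_VaR2}, invokes Claim~\ref{clm:const_violation} with $d=15$ for the frozen term, and then performs the same constant bookkeeping (including the $\eta_{t+1}\leq 1/(4\alpha)$ conversion and the substitution $\mathcal{V}_{\alpha}/(\alpha R)=7$) to absorb the drift into the stated coefficients. Your observation that the $\beta_{\mathcal{G}}$-coefficient in part~(ii) is met with equality is correct and matches the paper's computation.
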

\begin{proof}
    Combining Assumption~\ref{as:stvc} and \eqref{eq:gtp_gt_VaR2} gives
    \[
	g_{t+1,i}(x_{t+1})\geq g_{t,i}(x_{t+1})-2\eta_{t+1}^{2}\left[\frac{L_{\mathcal{G}}}{R}+3\beta_{\mathcal{G}}\right]\mathcal{V}_{\alpha}^{2}.
    \]
    Then, by Claim~\ref{clm:const_violation}, it follows for every $i\in I(x_{t})$ that
    \begin{eqnarray*}
        g_{t+1,i}(x_{t+1})&\geq&g_{t,i}(x_{t+1})-2\eta_{t+1}^{2}\left[\frac{L_{\mathcal{G}}}{R}+3\beta_{\mathcal{G}}\right]\mathcal{V}_{\alpha}^{2}.\\&\geq&(1-\alpha\eta_{t})g_{t,i}(x_{t})-\eta_{t}^{2}\frac{\mathcal{V}_{\alpha}^{2}\beta_{\mathcal{G}}}{2}-\eta_{t}^{2}\left[2\frac{L_{\mathcal{G}}}{R}+6\beta_{\mathcal{G}}\right]\mathcal{V}_{\alpha}^{2}\\&>&(1-\alpha\eta_{t})g_{t,i}(x_{t})-\eta_{t}^{2}\left[2\frac{L_{\mathcal{G}}}{R}+7\beta_{\mathcal{G}}\right]\mathcal{V}_{\alpha}^{2},
    \end{eqnarray*}
    and for every $i\in\{1,\dots,m\}\backslash I(x_{t})$ that
    \begin{eqnarray*}
        g_{t+1,i}(x_{t+1})&\geq&g_{t,i}(x_{t+1})-2\eta_{t+1}^{2}\left[\frac{L_{\mathcal{G}}}{R}+3\beta_{\mathcal{G}}\right]\mathcal{V}_{\alpha}^{2}\\&\geq&-\eta_{t+1}\mathcal{V}_{\alpha}\left[2L_{\mathcal{G}}+\frac{\beta_{\mathcal{G}}\mathcal{V}_{\alpha}}{4\alpha}\right]-\eta_{t+1}\mathcal{V}_{\alpha}\left[\frac{L_{\mathcal{G}}\mathcal{V}_{\alpha}}{2\alpha R}+\frac{3\beta_{\mathcal{G}}\mathcal{V}_{\alpha}}{2\alpha}\right]\\&\geq&-\eta_{t+1}7\mathcal{V}_{\alpha}\left[L_{\mathcal{G}}+\frac{\beta_{\mathcal{G}}\mathcal{V}_{\alpha}}{4\alpha}\right].
    \end{eqnarray*}
    where we used that $\alpha=L_{\mathcal{F}}/R$ and $\mathcal{V}_{\alpha}=7L_{\mathcal{F}}$ implies $\frac{L_{\mathcal{G}}\mathcal{V}_{\alpha}}{R\alpha}=7L_{\mathcal{G}}$.    
\end{proof}

\newpage
\subsection{Average Time-Varying Constraints}\label{subsec:ATVC}

An important special case where Assumption~\ref{as:stvc} is satisfied, is summarized in the following slightly more general version of Lemma~\ref{lem:construct_average}.

\begin{lemma}\label{lem:averageTVC}
Suppose the functions $\tilde{g}_{t,i}$ satisfy Assumption~\ref{as:fg} and in addition there is a decision $x_{t,i}\in\mathcal{B}_R$ such that $\left|\tg{t,i}(x_{t,i})\right|\leq\frac{1}{2}\big[\frac{L_{\mathcal{G}}}{R}+3\beta_{\mathcal{G}}\big]R^{2}$, for every $t\geq1$ and $i\in\{1,\dots,m\}$.
Then the following average time-varying constraints, satisfy Assumption~\ref{as:fg} and Assumption~\ref{as:stvc}:
\begin{equation}\label{eq:avrTVC}
	g_{t,i}(x):=\frac{1}{t}\sum_{\ell=1}^{t}\tg{\ell,i}(x)\in\mathbb{R}^{m}.
\end{equation}
\end{lemma}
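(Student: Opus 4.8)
\textbf{Proof plan for Lemma~\ref{lem:averageTVC}.}
The plan is to verify the two assertions separately: first that each averaged constraint $g_{t,i}$ inherits Assumption~\ref{as:fg}, and then that the sequence $\{g_{t}\}_{t\geq1}$ satisfies the decay rate in Assumption~\ref{as:stvc}. The first part is essentially immediate from convexity/concavity being preserved under averaging, and is handled by what the excerpt calls Lemma~\ref{lem:linearity}. Concretely, since each $\tg{\ell,i}$ is concave and $\beta_{\mathcal{G}}$-smooth with $\|\nabla\tg{\ell,i}(x)\|\le L_{\mathcal{G}}$ on $\mathcal{B}_{4R}$, the average $g_{t,i}=\frac1t\sum_{\ell=1}^t \tg{\ell,i}$ is again concave (a convex combination of concave functions), $\beta_{\mathcal{G}}$-smooth (its Hessian is the average of Hessians, each bounded in operator norm by $\beta_{\mathcal{G}}$), and $\|\nabla g_{t,i}(x)\|=\|\frac1t\sum_\ell \nabla\tg{\ell,i}(x)\|\le \frac1t\sum_\ell L_{\mathcal{G}}=L_{\mathcal{G}}$ by the triangle inequality; finally the feasible set $\mathcal{C}_t=\{x: g_t(x)\ge 0\}$ is nonempty because by hypothesis it contains the relevant $x_{t,i}$-type witnesses, or more precisely one argues nonemptiness from the structure assumed — I would cite Lemma~\ref{lem:linearity} for this bookkeeping.

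The substantive part is Assumption~\ref{as:stvc}, handled by Lemma~\ref{lem:hgit}. First I would compute the telescoping identity
\begin{equation}
g_{t+1,i}(x)-g_{t,i}(x)=\frac{1}{t+1}\sum_{\ell=1}^{t+1}\tg{\ell,i}(x)-\frac{1}{t}\sum_{\ell=1}^{t}\tg{\ell,i}(x)=\frac{1}{t+1}\Big(\tg{t+1,i}(x)-\frac{1}{t}\sum_{\ell=1}^{t}\tg{\ell,i}(x)\Big)=\frac{\tg{t+1,i}(x)-g_{t,i}(x)}{t+1}.\label{eq:avr_telescope}
\end{equation}
So it suffices to bound $|\tg{t+1,i}(x)-g_{t,i}(x)|$ uniformly on $\mathcal{B}_{4R}$ by something like $98\big[\frac{L_{\mathcal{G}}}{R}+3\beta_{\mathcal{G}}\big]R^2\cdot\frac{t+1}{t+16}$, which for $t\ge1$ reduces to an absolute constant times $\big[\frac{L_{\mathcal{G}}}{R}+3\beta_{\mathcal{G}}\big]R^2$. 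To bound each of $|\tg{t+1,i}(x)|$ and $|g_{t,i}(x)|$ on $\mathcal{B}_{4R}$, I would use the witness point $x_{t+1,i}\in\mathcal{B}_R$ (resp. averaging the witnesses $x_{\ell,i}$): from the $\beta_{\mathcal{G}}$-smoothness and gradient bound, for any $x\in\mathcal{B}_{4R}$ and any $z\in\mathcal{B}_R$,
\begin{equation}
|\tg{\ell,i}(x)|\le |\tg{\ell,i}(z)|+\|\nabla\tg{\ell,i}(z)\|\,\|x-z\|+\tfrac{\beta_{\mathcal{G}}}{2}\|x-z\|^2\le |\tg{\ell,i}(z)|+5L_{\mathcal{G}}R+\tfrac{25\beta_{\mathcal{G}}}{2}R^2,\label{eq:pointwise_bound}
\end{equation}
using $\|x-z\|\le 5R$. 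Choosing $z=x_{\ell,i}$ and invoking the hypothesis $|\tg{\ell,i}(x_{\ell,i})|\le\frac12[\frac{L_{\mathcal{G}}}{R}+3\beta_{\mathcal{G}}]R^2$ bounds each term by an explicit multiple of $[\frac{L_{\mathcal{G}}}{R}+3\beta_{\mathcal{G}}]R^2$; averaging \eqref{eq:pointwise_bound} over $\ell=1,\dots,t$ gives the same bound for $|g_{t,i}(x)|$. Plugging these into \eqref{eq:avr_telescope} and tracking the numerical constants should land within the $\frac{98}{t+16}[\frac{L_{\mathcal{G}}}{R}+3\beta_{\mathcal{G}}]R^2$ budget — one has slack because $\frac{1}{t+1}\le\frac{17}{t+16}$ for $t\ge1$, so a per-term bound of roughly $6[\frac{L_{\mathcal{G}}}{R}+3\beta_{\mathcal{G}}]R^2$ suffices.

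The main obstacle is not conceptual but bookkeeping: getting the explicit constant $98$ to come out requires being slightly careful about which radius ($\mathcal{B}_R$ vs. $\mathcal{B}_{4R}$ vs. the intermediate $\mathcal{B}_{4R}$ used in Assumption~\ref{as:stvc}) the witness and the evaluation point live in, and about the factor $\frac{t+1}{t+16}$ versus $\frac{1}{t+1}$. I would organize this by first proving the clean statement that $|g_{t,i}(x)|\le C[\frac{L_{\mathcal{G}}}{R}+3\beta_{\mathcal{G}}]R^2$ for all $x\in\mathcal{B}_{4R}$ and all $t$ with an explicit $C$ (this is essentially Lemma~\ref{lem:hgit}'s first half), then combining two such bounds through \eqref{eq:avr_telescope} and the elementary inequality $\frac{1}{t+1}\le\frac{17}{t+16}$; the final constant is then $2C\cdot 17$ or similar, which I would choose $C$ small enough to keep under $98$. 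Everything else — concavity, smoothness, gradient bounds, nonemptiness of $\mathcal{C}_t$ — is routine preservation-under-averaging and cites Lemma~\ref{lem:linearity}.
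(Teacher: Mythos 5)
Your plan is essentially the paper's own proof: Lemma~\ref{lem:linearity} handles preservation of concavity, $\beta_{\mathcal{G}}$-smoothness and the gradient bound under averaging, and Lemma~\ref{lem:hgit} proves Assumption~\ref{as:stvc} exactly via your telescoping identity together with the uniform bound $|\tg{\ell,i}(x)|\leq\frac{11}{2}\big[\frac{L_{\mathcal{G}}}{R}+3\beta_{\mathcal{G}}\big]R^{2}$ on $\mathcal{B}_{4R}$ obtained from the witness points and smoothness. The only correction needed is in the final bookkeeping: with the per-function bound $\frac{11}{2}\big[\frac{L_{\mathcal{G}}}{R}+3\beta_{\mathcal{G}}\big]R^{2}$ your inequality $\frac{1}{t+1}\leq\frac{17}{t+16}$ is too weak (it gives $187$, not $98$); you need the sharper $\frac{t+16}{t+1}\leq\frac{17}{2}$ for $t\geq1$, which yields $11\cdot\frac{17}{2}=93.5<98$ as in the paper.
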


The rest of this subsection is devoted to proving Lemma~\ref{lem:averageTVC}.
We achieve this in two steps.
We start by showing in Lemma~\ref{lem:linearity} that the average time-varying constraints satisfy Assumption~\ref{as:fg}, and then in Lemma~\ref{lem:hgit} we demonstrate that they also satisfy Assumption~\ref{as:stvc}.

\begin{lemma}\label{lem:linearity}
    Suppose $\tg{t,i}$ is concave $\beta_{\mathcal{G}}$-smooth such that $\lVert\nabla\tg{t,i}(x)\rVert\leq L_{\mathcal{G}}$ for all $x\in\mathcal{B}_{4R}$, $t\geq1$ and $i\in\{1,\dots,m\}$.
    Then, the average function
    \[
    g_{t,i}(x):=\frac{1}{t}\sum_{\ell=1}^{t}\tg{\ell,i}(x)
    \]
    is concave and $\beta_{\mathcal{G}}$-smooth and $\Vert\nabla g_{t,i}(x)\Vert\leq L_{\mathcal{G}}$ holds for all $x\in\mathcal{B}_{4R}$, $t\geq1$ and $i\in\{1,\dots,m\}$.
\end{lemma}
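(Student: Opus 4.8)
The plan is to prove that the average of functions inheriting concavity, $\beta_{\mathcal{G}}$-smoothness, and a gradient bound also inherits these three properties, since each is preserved under convex combinations (and in particular under averaging with equal weights $1/t$). First I would recall the definitions: $g_{t,i} = \frac{1}{t}\sum_{\ell=1}^t \tilde{g}_{\ell,i}$, and each $\tilde{g}_{\ell,i}$ is concave, $\beta_{\mathcal{G}}$-smooth (meaning $\nabla \tilde{g}_{\ell,i}$ is $\beta_{\mathcal{G}}$-Lipschitz, or equivalently the standard smoothness inequality holds on $\mathcal{B}_{4R}$), and satisfies $\lVert \nabla \tilde{g}_{\ell,i}(x)\rVert \le L_{\mathcal{G}}$ for all $x \in \mathcal{B}_{4R}$.

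For concavity, I would note that a nonnegative linear combination of concave functions is concave, so the sum $\sum_{\ell=1}^t \tilde{g}_{\ell,i}$ is concave, hence so is the scalar multiple $g_{t,i}$. For the gradient bound, since differentiation is linear, $\nabla g_{t,i}(x) = \frac{1}{t}\sum_{\ell=1}^t \nabla \tilde{g}_{\ell,i}(x)$, and by the triangle inequality $\lVert \nabla g_{t,i}(x)\rVert \le \frac{1}{t}\sum_{\ell=1}^t \lVert \nabla \tilde{g}_{\ell,i}(x)\rVert \le \frac{1}{t}\cdot t \cdot L_{\mathcal{G}} = L_{\mathcal{G}}$ for all $x \in \mathcal{B}_{4R}$. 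For $\beta_{\mathcal{G}}$-smoothness, I would again use linearity of the gradient together with the triangle inequality: for $x, y \in \mathcal{B}_{4R}$,
\[
\lVert \nabla g_{t,i}(x) - \nabla g_{t,i}(y)\rVert \le \frac{1}{t}\sum_{\ell=1}^t \lVert \nabla \tilde{g}_{\ell,i}(x) - \nabla \tilde{g}_{\ell,i}(y)\rVert \le \frac{1}{t}\sum_{\ell=1}^t \beta_{\mathcal{G}}\lVert x - y\rVert = \beta_{\mathcal{G}}\lVert x-y\rVert.
\]
Alternatively, if the paper's convention for smoothness is the quadratic upper bound $\tilde{g}_{\ell,i}(y) \le \tilde{g}_{\ell,i}(x) + \nabla \tilde{g}_{\ell,i}(x)^{\T}(y-x) + \tfrac{\beta_{\mathcal{G}}}{2}\lVert y-x\rVert^2$ (or the concave analogue with a lower bound), I would average that inequality over $\ell$ and collect terms to get the same bound for $g_{t,i}$.

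There is essentially no obstacle here — this is a routine "averaging preserves the assumptions" lemma — but the one point requiring care is making sure the domain of validity is respected: all three properties of $\tilde{g}_{\ell,i}$ are assumed only on $\mathcal{B}_{4R}$, so the conclusions for $g_{t,i}$ are likewise stated and proved only for $x \in \mathcal{B}_{4R}$ (and $y \in \mathcal{B}_{4R}$ in the smoothness step), which is exactly what Assumption~\ref{as:fg} requires. I would also remark that this lemma establishes only the first half of Lemma~\ref{lem:averageTVC}; the decay-rate part (Assumption~\ref{as:TVC_decay}) is handled separately in Lemma~\ref{lem:hgit}, where the hypothesis $\lvert \tilde{g}_{t,i}(x_{t,i})\rvert \le \tfrac{1}{2}[\tfrac{L_{\mathcal{G}}}{R}+3\beta_{\mathcal{G}}]R^2$ together with the gradient bound is used to control $\lVert g_{t+1,i}-g_{t,i}\rVert_\infty = \lVert \tfrac{1}{t+1}(\tilde{g}_{t+1,i} - g_{t,i})\rVert_\infty = \mathcal{O}(1/t)$.
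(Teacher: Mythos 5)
Your proof is correct and follows essentially the same route as the paper: the paper averages the combined concavity-plus-$\beta_{\mathcal{G}}$-smoothness inequality over $\ell$ (the alternative you mention at the end), while your separate treatment of concavity and the gradient-Lipschitz form of smoothness is an equivalent, equally routine packaging of the same averaging argument. The gradient bound via linearity and the triangle inequality matches the paper exactly, and your attention to restricting everything to $\mathcal{B}_{4R}$ is consistent with how the lemma is used.
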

\begin{proof}
    By assumption, each $\tg{\ell,i}$ is concave and $\beta_{\mathcal{G}}$-smooth, which implies
    \[
    \tg{\ell,i}(x_{t+1})\geq\tg{\ell,i}(x_{t})+[\nabla\tg{\ell,i}(x_{t})]^{\T}[x_{t+1}-x_{t}]-\frac{\beta_{\mathcal{G}}}{2}\lVert x_{t+1}-x_{t}\rVert^{2}.
    \]
    Summing over all $\ell\in \{1,...,t\}$ yields
    \[
    \frac{1}{t}\sum_{\ell=1}^{t}\tg{\ell,i}(x_{t+1})\geq\frac{1}{t}\sum_{\ell=1}^{t}\tg{\ell,i}(x_{t})+\Big[\frac{1}{t}\sum_{\ell=1}^{t}\nabla\tg{\ell,i}(x_{t})\Big]^{\T}[x_{t+1}-x_{t}]-\frac{1}{t}\sum_{\ell=1}^{t}\frac{\beta_{\mathcal{G}}}{2}\lVert x_{t+1}-x_{t}\rVert^{2},
    \]
    since $\frac{1}{t}\sum_{\ell=1}^{t}\nabla\tg{\ell,i}(x)=\nabla g_{t,i}(x)$, which is equivalent to
    \[
    g_{t,i}(x_{t+1})\geq g_{t,i}(x_{t})+[\nabla g_{t,i}(x)]^{\T}[x_{t+1}-x_{t}]-\frac{\beta_{\mathcal{G}}}{2}\lVert x_{t+1}-x_{t}\rVert_{2}^{2}.
    \]
    Hence, $g_{t,i}$ is concave and $\beta_{\mathcal{G}}$-smooth.		
    
    Moreover, since $\lVert\nabla\tg{t,i}(x)\rVert\leq L_{\mathcal{G}}$
    for all $x\in\mathcal{B}_{4R}$, we have
    \[
    \Vert\nabla g_{t,i}(x)\Vert=\left\Vert \frac{1}{t}\sum_{\ell=1}^{t}\nabla\tg{\ell,i}(x)\right\Vert \leq\frac{1}{t}\sum_{\ell=1}^{t}\Vert\nabla\tg{\ell,i}(x)\Vert\leq L_{\mathcal{G}}.
    \]
\end{proof}

We show next that the average time-varying constraints satisfy Assumption~\ref{as:stvc}.

\begin{lemma}[Average TVC]\label{lem:hgit}
    Suppose $\tg{t,i}$ is concave $\beta_{\mathcal{G}}$-smooth such that $\lVert\nabla\tg{t,i}(x)\rVert\leq L_{\mathcal{G}}$ for all $x\in\mathcal{B}_{4R}$, $t\geq1$ and $i\in\{1,\dots,m\}$.
    Further, suppose for every $t\geq1$ and $i\in\{1,\dots,m\}$, there exists a decision $x_{t,i}\in\mathcal{B}_{R}$ such that
    \begin{equation}\label{eq:tgti}
    \left|\tg{t,i}(x_{t,i})\right|\leq\frac{1}{2}\left[\frac{L_{\mathcal{G}}}{R}+3\beta_{\mathcal{G}}\right]R^{2}.
    \end{equation}
    Then, for $\alpha=L_{\mathcal{F}}/R$, step sizes $\eta_t=1/(\alpha\sqrt{t+15})$ and $\mathcal{V}_{\alpha}=7L_{\mathcal{F}}$, it holds for every $x\in\mathcal{B}_{4R}$ that
    \[
    \left|g_{t+1,i}(x)-g_{t,i}(x)\right|\leq2\eta_{t+1}^{2}\left[\frac{L_{\mathcal{G}}}{R}+3\beta_{\mathcal{G}}\right]\mathcal{V}_{\alpha}^{2}.
    \]
\end{lemma}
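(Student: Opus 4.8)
\textbf{Proof plan for Lemma~\ref{lem:hgit}.}
The plan is to directly estimate the difference $g_{t+1,i}(x)-g_{t,i}(x)$ by exploiting the telescoping structure of the running average. Writing $g_{t,i}(x)=\frac{1}{t}\sum_{\ell=1}^{t}\tg{\ell,i}(x)$, an elementary computation gives
\[
g_{t+1,i}(x)-g_{t,i}(x)=\frac{1}{t+1}\Big[\tg{t+1,i}(x)-g_{t,i}(x)\Big],
\]
so the whole problem reduces to bounding $\big|\tg{t+1,i}(x)-g_{t,i}(x)\big|$ uniformly over $x\in\mathcal{B}_{4R}$ and dividing by $t+1$. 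Since $g_{t,i}$ is an average of the $\tg{\ell,i}$, it suffices to bound $\big|\tg{s,i}(x)-\tg{r,i}(x')\big|$-type quantities; in fact by the triangle inequality it is enough to bound $\big|\tg{s,i}(x)\big|$ for each $s$ and $x\in\mathcal{B}_{4R}$, because then both $\tg{t+1,i}(x)$ and $g_{t,i}(x)$ (being a convex combination) are controlled by the same bound, and their difference is at most twice it.

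The key step is therefore a pointwise bound on $\big|\tg{s,i}(x)\big|$ for $x\in\mathcal{B}_{4R}$. Here I would use the anchor point $x_{s,i}\in\mathcal{B}_{R}$ supplied by hypothesis~\eqref{eq:tgti}, together with the concavity and $\beta_{\mathcal{G}}$-smoothness of $\tg{s,i}$ and the gradient bound $\lVert\nabla\tg{s,i}(x)\rVert\le L_{\mathcal{G}}$ on $\mathcal{B}_{4R}$. Concavity plus smoothness give, for any $x,y\in\mathcal{B}_{4R}$,
\[
\tg{s,i}(y)+[\nabla\tg{s,i}(y)]^{\T}(x-y)-\tfrac{\beta_{\mathcal{G}}}{2}\lVert x-y\rVert^{2}\le\tg{s,i}(x)\le\tg{s,i}(y)+[\nabla\tg{s,i}(y)]^{\T}(x-y),
\]
and taking $y=x_{s,i}$ with $\lVert x-x_{s,i}\rVert\le 5R$ (since $x\in\mathcal{B}_{4R}$, $x_{s,i}\in\mathcal{B}_R$) yields
\[
\big|\tg{s,i}(x)\big|\le\big|\tg{s,i}(x_{s,i})\big|+5RL_{\mathcal{G}}+\tfrac{25}{2}\beta_{\mathcal{G}}R^{2}\le\Big[\tfrac{1}{2}+5\Big]\tfrac{L_{\mathcal{G}}}{R}R^{2}+\Big[\tfrac{3}{2}+\tfrac{25}{2}\Big]\beta_{\mathcal{G}}R^{2},
\]
so $\big|\tg{s,i}(x)\big|\le C\big[\frac{L_{\mathcal{G}}}{R}+\beta_{\mathcal{G}}\big]R^{2}$ for a modest absolute constant $C$. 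Consequently $\big|\tg{t+1,i}(x)-g_{t,i}(x)\big|\le 2C\big[\frac{L_{\mathcal{G}}}{R}+\beta_{\mathcal{G}}\big]R^{2}$ and hence
\[
\big|g_{t+1,i}(x)-g_{t,i}(x)\big|\le\frac{2C}{t+1}\Big[\frac{L_{\mathcal{G}}}{R}+\beta_{\mathcal{G}}\Big]R^{2}.
\]

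To finish, I would convert this into the stated form by recalling from Lemma~\ref{lem:gmp_all} that with $\alpha=L_{\mathcal{F}}/R$ and $\eta_t=1/(\alpha\sqrt{t+15})$ one has $\eta_{t+1}\mathcal{V}_{\alpha}=7R/\sqrt{t+16}$, so $2\eta_{t+1}^{2}\mathcal{V}_{\alpha}^{2}=98R^{2}/(t+16)$, and then verifying the constant arithmetic: $\frac{2C}{t+1}\le\frac{98}{t+16}$ for all $t\ge1$ once $C$ is small enough (which it is — the crude bound above gives roughly $C\le 13$ on the $\beta_{\mathcal{G}}$ term, and the claimed statement allows a coefficient $98$ in front of $\big[\frac{L_{\mathcal{G}}}{R}+3\beta_{\mathcal{G}}\big]$, so there is ample slack; one sharpens the estimate using $\lVert x-x_{s,i}\rVert\le 5R$ and the hypothesis constant $\frac12$ to land inside the budget). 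The only mild subtlety — and the step I expect to require the most care — is tracking the absolute constants so that the final coefficient genuinely fits under $98\big[\frac{L_{\mathcal{G}}}{R}+3\beta_{\mathcal{G}}\big]R^{2}$ rather than merely being $\mathcal{O}(1/t)$; this is bookkeeping rather than a conceptual obstacle, and it is where one uses that the anchor bound in~\eqref{eq:tgti} is already phrased with exactly the right structure $\frac12\big[\frac{L_{\mathcal{G}}}{R}+3\beta_{\mathcal{G}}\big]R^{2}$.
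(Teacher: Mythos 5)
Your proposal is correct and follows essentially the same route as the paper's proof: the telescoping identity $g_{t+1,i}-g_{t,i}=\frac{1}{t+1}[\tg{t+1,i}-g_{t,i}]$, a uniform bound on $|\tg{s,i}(x)|$ over $\mathcal{B}_{4R}$ via the anchor point $x_{s,i}$ with concavity, smoothness and the gradient bound, and the conversion $\frac{1}{t+1}\leq\frac{17}{2}\frac{1}{t+16}=\frac{17}{2}\alpha^{2}\eta_{t+1}^{2}$. Your per-term bookkeeping ($\tfrac{11}{2}L_{\mathcal{G}}R+14\beta_{\mathcal{G}}R^{2}$, hence at most $\tfrac{11}{2}[\tfrac{L_{\mathcal{G}}}{R}+3\beta_{\mathcal{G}}]R^{2}$, and $2\cdot\tfrac{11}{2}\cdot\tfrac{17}{2}=93.5<98=2\cdot 7^{2}$) is exactly the sharpening needed, matching the paper's constants.
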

\begin{proof}
    Using the inequality $\frac{1}{t+1}\leq\frac{17}{2}\frac{1}{t+16}$ for every $t\geq1$ and $\eta_{t+1}^2=1/(\alpha^2(t+16))$, it follows by construction that
    \begin{eqnarray}\label{eq:gbttp}
        \left|g_{t+1,i}(x)-g_{t,i}(x)\right|&=&\left|\frac{1}{t+1}\tg{t+1,i}(x)+\frac{t}{t+1}g_{t,i}(x)-g_{t,i}(x)\right|\nonumber\\&=&\frac{1}{t+1}\left|\tg{t+1,i}(x)-g_{t,i}(x)\right|\nonumber\\&=&\frac{1}{t+1}\frac{1}{t}\left|\sum_{\ell=1}^{t}\tg{t+1,i}(x)-\tg{\ell,i}(x)\right|\nonumber\\&\leq&\eta_{t+1}^{2}\frac{17}{2}\alpha^{2}\cdot\frac{1}{t}\sum_{\ell=1}^{t}\left|\tg{t+1,i}(x)-\tg{\ell,i}(x)\right|.
    \end{eqnarray}
    By triangle inequality $\left|\tg{t+1,i}(x)-\tg{\ell,i}(x)\right|\leq\left|\tg{t+1,i}(x)\right|+\left|\tg{\ell,i}(x)\right|$
    and thus it suffices to bound the term $|\tg{t,i}(x)|$
    for every $t\geq1$, $i\in\{1,\dots,m\}$ and $x\in\mathcal{B}_{4R}$.
    
    By assumption, $x\in\mathcal{B}_{4R}$ and there is $x_{t,i}\in\mathcal{B}_{R}$ satisfying inequality~\eqref{eq:tgti}. Further, $\tg{t,i}$ is concave, which implies
    \[
    \tg{t,i}(x)-\tg{t,i}(x_{t,i})\leq[\nabla\tg{t,i}(x_{t,i})]^{\T}[x-x_{t,i}]\leq5L_{\mathcal{G}}R
    \]
    and the fact that $\tg{t,i}$ is concave $\beta_{\mathcal{G}}$-smooth yields
    \begin{eqnarray*}
        \tg{t,i}(x)-\tg{t,i}(x_{t,i})&\geq&[\nabla\tg{t,i}(x_{t,i})]^{\T}[x-x_{t,i}]-\frac{\beta_{\mathcal{G}}}{2}\Vert x_{t,i}-x\Vert^{2}\\&\geq&-5\left[\frac{L_{\mathcal{G}}}{R}+3\beta_{\mathcal{G}}\right]R^{2}.
    \end{eqnarray*}
    Further, by combining $\big|\tg{t,i}(x)-\tg{t,i}(x_{t,i})\big| \leq 5\big[\frac{L_{\mathcal{G}}}{R}+3\beta_{\mathcal{G}}\big]R^{2}$, triangle inequality and assumption~\eqref{eq:tgti}, we obtain for every $x\in\mathcal{B}_{4R}$ that
    \begin{eqnarray*}
        \left|\tg{t,i}(x)\right|&=&\left|\tg{t,i}(x)-\tg{t,i}(x_{t,i})+\tg{t,i}(x_{t,i})\right|\\&\leq&\left|\tg{t,i}(x)-\tg{t,i}(x_{t,i})\right|+\left|\tg{t,i}(x_{t,i})\right|\\&\leq&\frac{11}{2}\left[\frac{L_{\mathcal{G}}}{R}+3\beta_{\mathcal{G}}\right]R^{2}.
    \end{eqnarray*}
    The statement follows by combining $\alpha=L_{\mathcal{F}}/R$, $\mathcal{V}_{\alpha}=7L_{\mathcal{F}}$, \eqref{eq:gbttp} and
    \begin{eqnarray*}
        \left|g_{t+1,i}(x)-g_{t,i}(x)\right|&\leq&\eta_{t+1}^{2}\frac{17}{2}\alpha^{2}\cdot\frac{1}{t}\sum_{\ell=1}^{t}\left|\tg{t+1,i}(x)-\tg{\ell,i}(x)\right|\\&\leq&\eta_{t+1}^{2}\left[\frac{L_{\mathcal{G}}}{R}+3\beta_{\mathcal{G}}\right]\frac{11}{2}\cdot17\alpha^{2}R^{2}\\&<&2\eta_{t+1}^{2}\left[\frac{L_{\mathcal{G}}}{R}+3\beta_{\mathcal{G}}\right]\mathcal{V}_{\alpha}^{2}.
    \end{eqnarray*}
\end{proof}

\end{document}